\documentclass[lettersize,journal]{IEEEtran} 

\usepackage[american]{babel}
\usepackage{algorithm}
\usepackage{algpseudocode}
\usepackage{graphicx}
\usepackage{subcaption}
\usepackage{amssymb}
\usepackage{amsthm}

\usepackage{mathtools} 
\usepackage{autonum}
\usepackage{booktabs} 
\usepackage{tikz} 
\usetikzlibrary{patterns}

\usepackage{subfig}
\usepackage{epsfig}
\usepackage{url}
\usepackage{caption}

\usepackage{bm}

\usepackage{color}

\theoremstyle{plain}
\newtheorem{theorem}{Theorem}[section]

\theoremstyle{definition}

\theoremstyle{remark}

\newtheorem{fact}{Fact}
\newtheorem{lemma}{Lemma}
\usepackage{cite}

\begin{document}

\title{Worst-Case Regret Bounds for Combinatorial Thompson Sampling in Sleeping Semi-Bandits}

\author{Zhiming Huang,~\IEEEmembership{Member,~IEEE,} Bingshan Hu,~\IEEEmembership{Member,~IEEE,} Jianping Pan,~\IEEEmembership{Fellow,~IEEE}
\thanks{A preliminary version of this paper has been published in IEEE INFOCOM 2026~\cite{zhiming2026bridging}. In this paper, we study a generalized version of the algorithms studied in the preliminary version, and give tighter upper bounds for swap regret.

Zhiming Huang is with the Paul G. Allen School of Computer Science \& Engineering, University of Washington, WA, USA, Bingshan Hu is with the Department of Computer Science, University of British Columbia, BC, Canada, and Jianping Pan is with the Department of Computer Science, University of Victoria, BC, Canada Emails: zhimingh@cs.uwashington.edu, bingshanhu3@gmail.com, pan@uvic.ca. 
}
}

\markboth{Journal of \LaTeX\ Class Files,~Vol.~14, No.~8, August~2021}%
{Shell \MakeLowercase{\textit{et al.}}: High-probability swap-regret upper bounds for multi-agent bandits.}


\maketitle

\begin{abstract}


We revisit combinatorial Thompson sampling (CTS) for semi-bandits with sleeping arms, where arm availability varies over time and actions must satisfy combinatorial constraints, as in wireless mesh routing with fluctuating link availability. Despite its practical relevance, CTS has been hindered by several long-standing problems: (i) the absence of worst-case regret guarantees in the semi-bandit setting even without sleeping arms, (ii) the lack of theory under adversarially varying availability, and (iii) the consistently weak empirical performance of CTS with Gaussian priors (CTS-G).

This paper resolves these long-standing issues by providing the first worst-case regret analysis of CTS-G, proving an upper bound of $\tilde{O}(m\sqrt{NT})$ and a matching lower bound of $\tilde{\Omega}(m\sqrt{NT})$. To bridge the gap between theory and practice, we further propose CL-SG, a simple CTS-G variant that samples a single shared Gaussian seed each round to coordinate exploration across arms. We show that CL-SG achieves an improved regret bound of $\tilde{O}(\sqrt{mNT})$, together with a matching lower bound $\Omega(\sqrt{mNT})$. Experiments on real-world datasets demonstrate that CL-SG consistently outperforms strong baselines including CTS-G and CTS-B, and we open-source our implementation for reproducibility.

\end{abstract}
\begin{IEEEkeywords}
Combinatorial bandits, semi-bandit feedback, Thompson sampling.
\end{IEEEkeywords}

\section{Introduction}\label{sec:intro}
We revisit \emph{combinatorial Thompson sampling (CTS)} for \emph{stochastic semi-bandits with adversarial sleeping arms} (hereafter \emph{sleeping semi-bandits}). 
In this setting, an agent repeatedly selects a \emph{super arm}, i.e., a feasible subset of at most $m$ arms, from a dynamically changing set of available arms drawn from $N$ base arms, subject to predefined combinatorial constraints.
Each base arm $i\in[N]$ has an unknown but fixed reward distribution with mean $\mu_i$.
In each round, the agent observes the individual rewards of the played arms (semi-bandit feedback) and receives the sum reward of the selected subset.
The goal is to maximize the cumulative reward over $T$ rounds, equivalently to minimize regret with respect to an oracle that selects the best feasible subset among the available arms at each round.

Sleeping semi-bandits provide a natural abstraction for networking systems with time-varying resource availability and partial feedback
\cite{kleinberg2010regret,Chatterjee2017AnalysisOT,hu2019int,NIPS2016_6450,NIPS2014_5381}.
For example, in routing, link availability may fluctuate due to congestion, failures, or maintenance, and the algorithm must select feasible paths under the current topology while only observing the performance of chosen links (e.g., delay or throughput). 
In wireless scheduling, user or channel availability can change over time due to fading or interference, and the scheduler selects a subset of users under interference/hardware constraints and observes feedback only for scheduled users.
Moreover, sleeping semi-bandits serve as a core building block for more involved networking models, including variants with fairness constraints
\cite{li2019combinatorial,li2019combinatorial1,wu2024achieving,wu2025low}.

A central challenge in sleeping semi-bandits is that the identity of the optimal action is \emph{time-varying}: the best feasible subset depends on the currently available arms, which may change adversarially over time.
This coupling between \emph{learning} (unknown reward means) and \emph{non-stationary feasibility} (sleeping constraints) makes regret minimization substantially more delicate than in standard semi-bandits, where the optimal action is fixed.
Consequently, an algorithm must efficiently balance exploration of unknown rewards with exploitation of the best \emph{currently available} options.

Two canonical approaches for balancing exploration and exploitation are \emph{upper confidence bounds (UCB)} and \emph{Thompson sampling (TS)}.
Both are inspired by \emph{optimism in the face of uncertainty} and admit efficient combinatorial implementations in semi-bandit problems.
For sleeping semi-bandits, UCB-style algorithms enjoy well-developed theory, including both problem-dependent and worst-case regret guarantees.
In contrast, despite the strong empirical performance of TS in many bandit applications~\cite{chapelle2011empirical}, its \emph{worst-case} theory for sleeping semi-bandits remains poorly understood. 
Such worst-case guarantees are particularly important here: instance-dependent quantities such as reward gaps can be unknown, unstable, or ill-defined under a time-varying action space, and worst-case regret provides a robust benchmark for algorithm design and comparison.

\textbf{Long-standing limitations of CTS for sleeping semi-bandits.}
CTS extends TS to combinatorial action sets by sampling per-arm estimates and selecting the optimal feasible subset under these samples.
While CTS is conceptually simple and widely used, existing analyses leave three major gaps:

\begin{enumerate}
    \item \textbf{No meaningful worst-case regret bounds.}
    Existing CTS results in semi-bandits are primarily \emph{problem-dependent}, typically scaling as $O(\log T/\Delta)$ for reward gap $\Delta$
    \cite{wang2018thompson,perrault2020statistical,zhang2021suboptimality,zhangthompson}.
    However, converting such results into worst-case bounds has remained elusive. 
    For \emph{CTS with Beta priors (CTS-B)}, the dependence on $m$ and $N$ can be exponential and is provably unavoidable~\cite{zhang2021suboptimality}.
    For \emph{CTS with Gaussian priors (CTS-G)}, the best known bounds still exhibit unfavorable polynomial dependence on $m$ and $N$
    \cite{zhangthompson}.
    As a result, a \emph{frequentist worst-case} understanding of CTS in semi-bandits (even \emph{without} sleeping arms) has remained incomplete.

    \item \textbf{No guarantees under adversarial arm availability.}
    Prior work has studied TS with \emph{stochastic} arm availability
    \cite{Chatterjee2017AnalysisOT}, but regret guarantees are missing when availability evolves adversarially.
    This adversarial model is well-motivated in networking systems, where availability can be shaped by unpredictable dynamics, failures, and external interference.

    \item \textbf{Subpar empirical performance of CTS-G.}
    Although CTS-G is a natural candidate for worst-case analysis due to its Gaussian structure, it often underperforms both UCB-based methods and CTS-B in practice, suggesting inefficient or uncoordinated exploration.
\end{enumerate}

\textbf{Our contributions.}
We resolve a long-standing gap in the theory of combinatorial Thompson sampling by providing the first \emph{worst-case} analysis of the \emph{standard} CTS algorithm with Gaussian priors (CTS-G), and then further refine CTS-G to obtain tighter guarantees and improved empirical performance.

\begin{itemize}
    \item \textbf{First worst-case regret analysis for standard CTS-G.}
    We present a new frequentist analysis of CTS-G and establish the first meaningful worst-case regret bound for combinatorial Thompson sampling in semi-bandits with (adversarially) sleeping arms. 
    In particular, we prove a worst-case regret upper bound of $\tilde{O}(m\sqrt{NT})$ and a matching lower bound of $\tilde{\Omega}(m\sqrt{NT})$ for CTS-G. This result directly bridges the long-standing theoretical gap where prior CTS analyses were predominantly gap-dependent and failed to provide reasonable worst-case guarantees

    Our proof strategy departs from classical gap-based analyses and instead directly controls the worst-case regime. 
    The key ingredients include a ghost-sample argument and a refined anti-concentration analysis for correlated Gaussian estimates.

    \item \textbf{Algorithmic refinement: a simple modification with tighter bounds.}
    Building on the above analysis, we propose \emph{Combinatorial Learning with a Single Gaussian seed (CL-SG)}, a lightweight variant of CTS-G that draws a \emph{single shared} Gaussian seed per round to coordinate exploration across arms.
    We show that CL-SG achieves a strictly improved worst-case regret bound
$\tilde{O}\!\left(\sqrt{mNT}\right)$,
    and we further prove a matching lower bound of $\tilde{\Omega}(\sqrt{mNT})$.


    \item \textbf{Improved empirical performance and reproducibility.}
    Experiments on real-world datasets demonstrate that CL-SG consistently outperforms CTS-G and competitive baselines such as CTS-B.
    We release our implementation and experimental pipeline as open source to support reproducibility and future research.
\end{itemize}

\section{The Sleeping Semi-Bandit Problem}
\label{sec:problem}


We consider a sleeping semi-bandit problem with a fixed set of $N$ base arms denoted by $[N]:= \{1, 2, \ldots, N\}$. Each base arm $a \in [N]$ is associated with a fixed but unknown reward distribution $p_a$ supported on $[0,1]$ with its mean denoted by $r_a$. 
Denote by $\Theta \subseteq 2^{[N]}$ the \emph{feasible set} consisting of all possible \emph{solutions} satisfying some certain constraints.  Each feasible solution $A \in \Theta $  can be viewed as  a super arm, which can be made up of more than one base arm. Let $m := \mathop{\max}_{A \in \Theta} |A|$ denote the maximum cardinality among all super arms, i.e., the maximum number of base arms in a super arm.

Different from the standard combinatorial bandits setting \cite{kveton2015tight}, where the learning agent faces up to a fixed decision set $\Theta$ in all the rounds,
in the sleeping semi-bandit setting, in each round $t = 1, 2, \ldots, T$, a time-varying feasible set $\Theta_t \subseteq \Theta$ is revealed to the learning agent. The feasible set could be generated in an adversarial way.
Then, the learning agent plays a super arm $A_t \in \Theta_t $, observes the random rewards $X_{a,t} \sim p_a$ for all the base arms $a \in A_t$, and obtains a reward $\sum_{a \in A_t} X_{a,t}$.  The goal of the learning agent is to choose a sequence of super arms to play to accumulate as much reward as possible over a finite number of $T$ rounds. Since $\Theta_t$ is revealed at the beginning of each round $t$, we let $A_t^* := \mathop{\arg \max}_{A \in \Theta_t} \sum_{a \in A} r_a$ denote the optimal super arm in round $t$. Then, the $T$-round (pseudo)-regret can be expressed as
\begin{equation}
         \mathcal{R}(T) := \sum_{t=1}^T \mathbf{E} \left[  \sum_{a \in A_t^*} r_{a} -   \sum_{a \in A_t} r_{a} \right],
    \label{def of regret}
\end{equation}
where the expectation is taken over $\Theta_t$ and $A_t$. 
Note that $A_t^*$ is also random, which is determined by $\Theta_t$.

\section{Related Works}\label{sec:RelatedWorks}
Given the foundational importance and practical relevance of UCB and TS in stochastic bandits, our discussion will primarily focus on adapting these algorithms for stochastic semi-bandits and stochastic sleeping semi-bandits.

\textbf{Semi-Bandits.}
Semi-bandits are a special case of combinatorial bandits~\cite{gai2012combinatorial,combes2015combinatorial}, where the reward of each played base arm can be observed. The performance of UCB-based algorithms for semi-bandits has been well studied.  A sublinear problem-dependent regret upper bound is derived in \cite{chen2013combinatorial} for a UCB-based algorithm called CombUCB. Later, the authors of~\cite{kveton2015tight} not only improved the problem-dependent regret bound to $O(mN \ln (T)  / \Delta)$ but also derived an $O(\sqrt{mNT\ln T})$  worst-case regret bound for CombUCB. In \cite{kveton2015tight} and~\cite{merlis2020tight}, an $\Omega(\sqrt{mNT})$ minimax regret lower bound was derived for the combinatorial bandits. When the reward distributions are mutually independent, it is proved in \cite{NIPS2016_e816c635} that the UCB-based algorithm can achieve a better problem-dependent regret bound of $O( N \ln^2 (m) \ln (T)  / \Delta)$, and a worst-case regret bound of $\sqrt{N\ln^2(m)T\ln T}$.

Regarding TS-based algorithms for semi-bandits, the authors of \cite{wang2018thompson} proved the first $O\left(m N \ln (T) / \Delta\right)$ problem-dependent regret bounds of CTS-B. The idea of CTS-B is to use Beta distributions to model the mean reward of each arm's reward distribution. Then, the authors of \cite{perrault2020statistical} improved the results of \cite{wang2018thompson} to $O\left(\frac{ N(\ln m)^2}{\Delta} \ln T+\frac{N m^3}{\Delta^2}+m\left( \frac{m^2+1}{\Delta}\right)^{2+4 m}\right)$, when the arm distributions are mutually independent.
However, both the problem-dependent bounds in \cite{wang2018thompson} and \cite{perrault2020statistical} contain a term that exponentially increases with the size of the optimal solutions. Later, the authors of \cite{zhang2021suboptimality} proved that this exponential term is unavoidable for CTS-B. Subsequently, using Gaussian priors, a significant improvement is made in \cite{zhangthompson} by reducing this exponential dependency to a polynomial term: $O\left(\frac{ N \ln m}{\Delta} \ln T+\frac{ N^2 m \ln m}{\Delta} \ln \ln T+P\left(m, N, \frac{1}{\Delta}, \Delta \right)\right)$. However, the polynomial term $P\left(m, N, \frac{1}{\Delta}, \Delta\right)$ still has a degree approximately $30$ in $m$ and $10$ in $N$. Thus, it is still difficult to obtain reasonable worst-case regret bounds from tuning the problem-dependent bounds~\cite{lattimore2020bandit}.

On the other hand, the authors of~\cite{huyuk2020thompson} gave an $O\left(\max \left\{N \sqrt{T \ln T}, N^2\right\}\right)$  worst-case Bayesian regret bound of CTS-B.

Thus, while problem-dependent bounds of TS-based algorithms are well-studied, non-Bayesian worst-case bounds of TS-based algorithms for semi-bandits have remained an open challenge for a long time.

\textbf{Sleeping Semi-Bandits.}
All the aforementioned works assume that the arm set from which the learning agent can play is fixed over all $T$ rounds, i.e., all the arms are always available and ready to be played. However, in practice, some of the arms may not be available in some rounds. Therefore, a bunch of literature studied the setting of sleeping semi-bandits~\cite{kleinberg2010regret,Chatterjee2017AnalysisOT,hu2019int,NIPS2016_6450,NIPS2014_5381,li2019combinatorial,li2019combinatorial1}.  In sleeping bandits, the set of available arms for each round, i.e., the availability set, can vary. For the simplest version of sleeping bandits, the problem-dependent regret bounds of UCB-based algorithms and TS-based algorithms have been analyzed in \cite{kleinberg2010regret} and \cite{Chatterjee2017AnalysisOT}, respectively. Regarding the sleeping semi-bandits, the authors of~\cite{hu2019int} proposed a UCB-based algorithm and derived a problem-dependent regret bound. The authors of~\cite{li2019combinatorial1} studied a variant of sleeping semi-bandits with fairness constraints, and if relaxing the fairness constraints, they gave a worst-case bound of $O(\sqrt{mNT\ln T})$ for UCB-based algorithms. Both the above works assume a stochastic availability set. The same-order worst-case upper regret bound for UCB on a non-stochastic (adversarial) availability set was also obtained in~\cite{abhishek2021sleeping}.

However, no prior work has established a \textbf{worst-case}, \textbf{frequentist regret bound} for CTS in semi-bandits with stochastic or \textbf{adversarial arm availability}. Since CTS-G has demonstrated more favorable scaling in existing problem-dependent bounds~\cite{zhangthompson}, raising the question of whether a reasonable worst-case guarantee is achievable. This motivates our study of CL-SG, where the regret analysis of CL-SG can be easily extended to CTS-G and prove that a reasonable worst-case regret bound does, in fact, exist.

\section{Gaussian Randomized Algorithms}
\label{sec:algorithm}
In this section, we bridge the aforementioned gaps by first presenting CTS-G, an algorithm enjoying $\tilde{O}(m\sqrt{NT})$ and $\tilde{\Omega}(m\sqrt{NT})$ regret upper and lower bounds, respectively.  Then, we propose CTS-G, an algorithm enjoying $\tilde{O}(m\sqrt{NT})$ and $\Omega(\sqrt{mNT \ln \frac{N}{m}})$ regret upper and lower  bounds, respectively. All the detailed proofs can be found in Appendix~\ref{pfCTS2} and~\ref{pfs4CL-G}.

Before describing the algorithms, we first introduce some notations specific to this section.
Let $n_{a,t}:=\sum_{\tau = 1}^{t-1} \mathbf{1}[a \in A_\tau]$ denote the total number of times that base arm $a\in [N]$ has been pulled at the beginning of round $t$.
Let $ \hat{r}_{a, n_{a,t}}:= \frac{\sum_{\tau = 1}^{t-1} \mathbf{1}[a \in A_\tau] \cdot r_{a,\tau} }{n_{a,t}}$
denote the empirical mean of base arm $a$ at the beginning of round $t$, which is the average of $n_{a,t}$ i.i.d. random variables according to  reward distribution $p_a$. Let $\mathcal{F}_t := \left\{r_{a,\tau}, \forall a \in A_{\tau}, \forall \tau \in [t] \right\}$ collect all the history information up to the end of round $t$.


\subsection{Combinatorial Thompson Sampling with Gaussian Priors~(CTS-G)}\label{sec: ALG1}

\begin{algorithm}[h]
\caption{Combinatorial Thompson Sampling with Gaussian Priors~(CTS-G)}\label{alg:multiseed}
\begin{algorithmic}[1]
\Require {arm set $[N]$, exploration rate $\gamma$}
\State Initialize $n_{a,1} = 0$ and $\hat{r}_{a,n_{a,1}} = 0$ for all base arms $a \in [N]$
\For{$t = 1, 2, \ldots $}
\State Observe feasible set $\Theta_t$
\State Draw $w_{a,t} \sim \mathcal{N}(\hat{r}_{a, n_{a,t}}, \frac{ \gamma m \ln t}{n_{a,t} + 1})$ for each base arm $a \in [N]$
\State Play  super arm $A_t =  \mathop{\arg \max}\limits_{A \in \Theta_t} \sum_{a \in A} w_{a,t}$
\State Observe  $r_{a,t} \sim p_{a}$ for all base arms $a \in A_t$  and update $n_{a,t}$ and $\hat{r}_{a, n_{a,t}}$ for all $a \in A_t$.
\EndFor
\end{algorithmic}
\end{algorithm}


CTS-G presented in Alg.~\ref{alg:multiseed} is a direct adaptation of TS with Gaussian priors \cite{Agrawal2017} to the sleeping semi-bandit problems. The core idea is to use posterior distributions to model the mean reward $r_a$ of each base arm $a \in [N]$. 
In each round $t$, CTS-G  draws a Gaussian posterior sample $w_{a,t} \sim \mathcal{N}(\hat{r}_{a, n_{a,t}},\frac{{\gamma m\ln t}}{n_{a,t}+1})$ for each $a \in [N]$, where $\gamma > 0$ is a constant to control the exploration level.\footnote{In practice, we only need to draw posterior samples for available arms to improve efficiency.} We can view the collection $\bm{w}_t = \left\{ w_{a,t}, \forall a \in [N]\right\} $ of all posterior samples as the ``sampled problem instance'' based on which the learning agent conducts learning in round $t$.
Then, based on the revealed feasible set $\Theta_t$, CTS-G plays the super arm $A_t \in \mathop{\arg \max}_{A \in \Theta_t} \sum_{a \in A}w_{a,t}$ with the highest aggregated value of posterior samples and observes each individual base arm's random reward.

\begin{theorem}\label{thm:upperCTS-G}
    (1) The regret of CTS-G is ${O} \left(m\ln(T)\sqrt{NT } \right)$. (2) There exists a semi-bandit problem instance such that CTS-B suffers at least regret of $\Omega( m\sqrt{NT \ln T} )$.
\end{theorem}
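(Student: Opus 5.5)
For part (1) I will follow the clean-event/ghost-sample route rather than a gap-based argument. Let $\sigma_{a,t}^2 := \frac{\gamma m \ln t}{n_{a,t}+1}$. First I define a concentration event $E_t$ on which $|\hat r_{a,n_{a,t}} - r_a| \le \sqrt{\frac{2\ln t}{n_{a,t}+1}}$ for all $a$. By Hoeffding's inequality and a union bound over $a$ and the possible values of $n_{a,t}$, its complement contributes $O(Nm)$ to the regret. I also define a sample-concentration event $G_t$ on which $|w_{a,t}-\hat r_{a,n_{a,t}}| \le \sigma_{a,t}\sqrt{2\ln(Nt^2)}$ for all $a$. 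Its complement contributes $O(m)$ by Gaussian tails.

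On $E_t\cap G_t$, the per-round regret splits as
\[
\Delta_t = \Big(\sum_{a\in A_t^*} r_a - \sum_{a\in A_t} w_{a,t}\Big) + \sum_{a\in A_t}(w_{a,t}-r_a).
\]
The second sum is at most $\sum_{a\in A_t} c\sqrt{m}\,\ln(T)\,(n_{a,t}+1)^{-1/2}$. Summing over $t$ with the standard pigeonhole bound $\sum_t\sum_{a\in A_t}(n_{a,t}+1)^{-1/2}\le 2\sqrt{NmT}$ gives $O(m\ln T\sqrt{NT})$. This is where the extra $\sqrt m$ enters: it comes from the inflated variance $\gamma m\ln t$.

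The first sum is the optimism term, which I control with a ghost sample. Let $\bm w'_t$ be an independent copy of $\bm w_t$ given $\mathcal F_{t-1}$, and let $A'_t$ be its argmax over $\Theta_t$. Since $\Theta_t$ is revealed before sampling and is independent of the fresh Gaussian noise, adversarial availability causes no difficulty: everything is conditioned on $(\mathcal F_{t-1},\Theta_t)$. The key anti-concentration step is to show that
\[
p_t := \Pr\Big(\sum_{a\in A_t^*} w_{a,t} \ge \sum_{a\in A_t^*} r_a \,\Big|\, \mathcal F_{t-1},\Theta_t\Big)
\]
is bounded below by a constant. Under $E_t$, the sum $\sum_{a\in A^*_t} w_{a,t}$ is Gaussian with mean at least $\sum r_a - \sum_{a\in A_t^*}\sqrt{2\ln t/(n_{a,t}+1)}$ and standard deviation $\sqrt{\sum \sigma_{a,t}^2}$. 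By Cauchy--Schwarz, the mean deficit is at most $\sqrt{2m\ln t\sum_a 1/(n_{a,t}+1)}$, which is at most a constant multiple of the standard deviation when $\gamma$ is a large enough constant. Hence $p_t\ge c_0>0$. This is exactly why the variance must carry the factor $m$.

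Given $p_t\ge c_0$, I will use the standard TS decomposition in the style of Agrawal--Goyal and Russo--Van Roy. Writing $U_t(A) := \sum_{a\in A}\hat r_a + C\sigma_{a,t}\sqrt{\ln(NT)}$, whenever the optimistic event holds the played set satisfies $\sum_{a\in A_t} w_{a,t}\ge \sum_{a\in A_t^*} w_{a,t}\ge \sum_{a\in A^*_t} r_a$. This yields
\[
\mathbf E[\text{optimism term}\mid\mathcal F_{t-1}]\le \tfrac{1}{c_0}\,\mathbf E\Big[\sum_{a\in A_t}\sigma_{a,t}\sqrt{\ln(NT)}\,\Big|\,\mathcal F_{t-1}\Big] + O(1/T).
\]
This step is the main obstacle, because the argmax is taken over a time-varying combinatorial set. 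My plan is to compare the played set with the ghost set $A'_t$, which has the same conditional law, and to bound the gap through the width of $A'_t$. That reduces the optimism term to the same pigeonhole sum, giving $O(m\ln T\sqrt{NT})$ overall.

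For part (2), which I read as a lower bound for the Gaussian algorithm, I will build an $m$-set instance in the spirit of the $\Omega(\sqrt{mNT})$ construction. There are $N/m$ disjoint blocks of $m$ arms each, all with Bernoulli mean $1/2$ except one block with mean $1/2+\epsilon$, and $\Theta_t$ is the set of blocks. The inflated variance $\gamma m\ln t/n$ makes each sampled block sum fluctuate with standard deviation $m\sqrt{\gamma\ln t/n}$. A suboptimal block therefore keeps being chosen with constant probability until $n\gtrsim m\ln T/\epsilon^2$ for each of its arms. This forces $\Omega\big((N/m)\cdot m\ln T/\epsilon^2\cdot m\epsilon\big)$ regret, capped by $Tm\epsilon$. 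Choosing $\epsilon\asymp\sqrt{N\ln T/(mT)}$ gives $\Omega(m\sqrt{NT\ln T})$.
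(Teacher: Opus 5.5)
\textbf{Part (2) fails as proposed.} In your disjoint-block instance the arms of a block are always pulled together, so they share a count $n$, and the algorithm only ever compares block sums. The sampled sum of a block has standard deviation $\sqrt{m\cdot\gamma m\ln t/(n+1)}=m\sqrt{\gamma\ln t/(n+1)}$, while the gap is $m\epsilon$. The factors of $m$ cancel, so a suboptimal block stops being competitive once $n\gtrsim\ln T/\epsilon^2$ (up to a $\ln(N/m)$ factor from the maximum over blocks), not once $n\gtrsim m\ln T/\epsilon^2$.

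In fact, after dividing rewards by $m$, CTS-G on this instance is exactly CTS-G with $m=1$ on $N/m$ arms. Part (1) itself therefore caps its regret at $O(\ln T\sqrt{mNT})$, which is smaller than the target $m\sqrt{NT\ln T}$ by a factor of order $\sqrt{m/\ln T}$. Your own balancing shows the same problem: with $\epsilon\asymp\sqrt{N\ln T/(mT)}$, the trivial cap $Tm\epsilon$ already equals $\sqrt{mNT\ln T}$.

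The inflated variance costs an extra $\sqrt m$ only when arms are compared individually. This is why the paper uses a top-$m$ instance: $N\ge 400m$, deterministic reward $\Delta=\frac45\sqrt{N\ln T/T}$ on $m$ arms and $0$ elsewhere. There, a suboptimal arm with $n_{a,t}\lesssim m\ln T/\Delta^2$ satisfies $\Pr(w_{a,t}\ge\tau\mid\mathcal{F}_{t-1})\ge 0.15$ for $\tau=5\Delta/4$. The paper then splits into two cases. In the first, $K_{t-1}>\frac12 m(t-H)$ for some $t>T/2$, which directly gives regret $\Omega(mT\Delta)$. In the second, at least $2m$ suboptimal arms remain under-sampled in every late round. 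A Chernoff bound then shows that, with constant probability, at least $0.1m$ of them exceed $\tau$ while at least $0.1m$ optimal arms fall below $\tau$, forcing $k_t\ge 0.1m$.

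\textbf{Part (1) is missing its key step.} Your architecture matches the paper's: the optimism/deviation split, pigeonhole plus Gaussian concentration for the deviation term, and the Cauchy--Schwarz anti-concentration bound $p_t\ge c_0$ that forces the factor $m$ into the variance. But the step you flag as the main obstacle is exactly what needs proof, and ``whenever the optimistic event holds \ldots\ this yields'' does not supply it. On that event you only learn $\sum_{a\in A_t}w_{a,t}\ge\sum_{a\in A_t^*}r_a$. Off it, the deficit is controlled only through the width of $A_t^*$, which is never played, so the pigeonhole sum cannot be applied; your $U_t(A)$ is never used. What is missing is a conversion to the width of the \emph{played} set.

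The paper does this with a reverse-Markov step. Let $X=\max_{A\in\Theta_t}\sum_{a\in A}w_{a,t}$ and suppose $\alpha=\sum_{a\in A_t^*}r_a-\mathbf{E}_{\Theta_t}[X]>0$. Then $\alpha\Pr_{\Theta_t}(X\ge\sum_{a\in A_t^*}r_a)\le\mathbf{E}_{\Theta_t}[(X-\mathbf{E}_{\Theta_t}[X])^+]$, and this probability is at least $p_t$ because $X\ge\sum_{a\in A_t^*}w_{a,t}$. The ghost copy is then evaluated on the played set: $\mathbf{E}_{\Theta_t}[X]\ge\mathbf{E}_{\Theta_t}[\sum_{a\in A_t}\tilde w_{a,t}\mid A_t,\bm{w}]$. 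This gives $\mathbf{E}_{\Theta_t}[(X-\mathbf{E}_{\Theta_t}[X])^+]\le\mathbf{E}_{\Theta_t}[|\sum_{a\in A_t}(w_{a,t}-\tilde w_{a,t})|]$. H\"older with the maximum of the normalized Gaussian differences over all $(a,t)$, which absorbs the selection bias, together with pigeonhole, then yields $O(m\ln T\sqrt{NT})$.

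Your ghost-argmax idea works only once this reverse-Markov step is added, since $(X-X')^+\le|\sum_{a\in A_t}(w_{a,t}-w'_{a,t})|$. Alternatively, a genuine Agrawal--Goyal argument with saturated super arms and the minimum-width unsaturated one would also yield your displayed inequality. Either way, one of these arguments has to be carried out.
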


\paragraph{Discussion.}
Theorem~\ref{thm:upperCTS-G} states that CTS-G is worst-case optimal up to a logarithmic factor. 
Compared with UCB-based algorithms for sleeping semi-bandits, our upper bound has an extra factor of $\sqrt{m \ln T}$ with the ones by \cite{hu2019int} and \cite{li2019combinatorial1}, which are $O(\sqrt{mNT \ln T})$. However, it is important to note a significant aspect of our model: unlike the assumptions in \cite{hu2019int} and \cite{li2019combinatorial1}, our bound is derived without relying on stochastic assumptions regarding the availability of arms. 
Furthermore, the upper bound is minimax optimal up to an extra ${\ln (T)}\sqrt{m}$ factor as compared to the $\Omega \left(\sqrt{mNT} \right)$ minimax lower bound for combinatorial bandits shown in \cite{merlis2020tight}. 

\paragraph{Upper bound proof sketch.}
The theoretical analysis is non-trivial due to overlapping base arms among super arms. Additionally,  the optimal super arm $A_t^*$ is dynamic and unobservable, as only the played super arm $A_t$ is visible in each round $t$.
To decompose the regret, we define a high-probability event for the empirical estimates. Let $\mathcal{E}_t:= \left\{|r_a - \hat{r}_{a, n_{a,t}}|\leq \sqrt{\frac{3\ln (Nt)}{n_{a,t}+1}}, \forall a \in [N]\right\}$ be the event that the empirical means are close to their true means by the beginning of round $t$.
Let $t^{\prime}=\max \{\sqrt{m}, 4\}$ and $\mathbf{E}_{\Theta_t}[\cdot]:=\mathbf{E}[\cdot \mid \Theta_t]$.\footnote{
We note that such a definition $\mathbf{E}_{\Theta_t}[\cdot]$ applies pointwise for any realized $\Theta_t$, with no distributional assumption made on the availability process.} Then, we decompose the regret defined in (\ref{def of regret}) as
\begin{equation}
\begin{aligned}
        \mathcal{R}(T) &\leq \underbrace{\sum_{t=t^\prime}^T  \mathbf{E} \left[  \sum_{a \in A_t^*} r_{a} - \mathbf{E}_{\Theta_t}  \left[   \sum_{a \in A_t}w_{a,t}    \right] \right] }_{=:I_1,~\text{optimism term}} \\
        & +  \underbrace{\sum_{t=t^\prime}^T  \mathbf{E} \left[ \mathbf{E}_{\Theta_t} \left[ \sum_{a \in A_t}  \left( w_{a,t}  -  r_{a}  \right) \mathbf{1}[\mathcal{E}_t]\right] \right]}_{=:I_2,~\text{deviation term}} \\
        & +  mt^\prime + O(1).    
\end{aligned}
\end{equation}

The deviation term $I_2$ is easy to analyze as we can observe $A_t$, and is upper bounded by $\tilde{O}(m \sqrt{NT})$ via using concentration bounds.
The central question is how to upper bound the optimism term, which measures the gap between the \emph{maximum amount of true reward $\sum_{a \in A_t^*} r_{a}$} that the learning agent could achieve and the \emph{expected maximum amount of reward $\sum_{a \in A_t}w_{a,t} $} that the learning agent can observe in round $t$. 
Intuitively, if the learning agent is lucky, i.e., the history $\mathcal{F}_{t-1}$ gives $ \sum_{a \in A_t^*} r_{a} \le \mathbf{E} _{\Theta_t}\left[ \sum_{a \in A_t}w_{a,t} \right] $, there is no regret in round $t$ for this term.   
Let $(\cdot)^{+} := \max \left\{ \cdot, 0 \right\}$ be an activation function. Then, we have
\begin{equation}
\resizebox{1\hsize}{!}{$
    \begin{aligned}
          &\sum_{a \in A_t^*} r_{a} - \mathbf{E}_{\Theta_t}  \left[   \sum_{a \in A_t}w_{a,t}    \right]  \le \left(  \sum_{a \in A_t^*} r_{a} - \mathbf{E}_{\Theta_t}  \left[   \sum_{a \in A_t}w_{a,t}    \right] \right)^+ .        
    \end{aligned}
    $}
\end{equation}

Let $c(\gamma)$ be a constant only depending on $\gamma$.  
In our novel technical Lemma~\ref{lm:1}, inspired by \cite{russo2019worst}, we show 
\begin{equation}
    \begin{aligned}
          &\left(  \sum_{a \in A_t^*} r_{a} - \mathbf{E}_{\Theta_t}  \left[   \sum_{a \in A_t}w_{a,t}    \right] \right)^+ \\ 
          & \leq   c(\gamma) \cdot \mathbf{E}_{\Theta_t}\left[  \left(  \sum\limits_{a\in A_t} {w}_{a,t} - \mathbf{E}_{\Theta_t}\left[\sum\limits_{a\in A_t} {w}_{a,t} \right]  \right)^+  \right],
    \end{aligned}
\end{equation}
which tackles the challenge brought by the unobservability of $A_t^*$.

Next, via introducing an independent ``ghost'' copy $\tilde{w}_{a,t} \sim  \mathcal{N}(\hat{r}_{a, n_{a,t}}, \frac{ \gamma m \ln t}{n_{a,t} + 1})$ of $w_{a,t}$, we show 
\begin{equation}
    \begin{aligned}
        &\mathbf{E}_{\Theta_t}\left[  \left(  \sum\limits_{a\in A_t} {w}_{a,t} - \mathbf{E}_{\Theta_t}\left[\sum\limits_{a\in A_t} {w}_{a,t} \right]  \right)^+   \right]  \\
        & \le 
          \mathbf{E}_{\Theta_t} \left[ \left| \sum\limits_{a\in A_t} \left(w_{a,t} -  \tilde{w}_{a,t} \right) \right|  \right],        
    \end{aligned}
\end{equation}
which gets rid of the introduced activation function.

Since $w_{a,t} -  \tilde{w}_{a,t} \sim \mathcal{N}\left(0, \frac{2\gamma m \ln t}{n_{a,t}+1}\right)$, we only need to deal with Gaussian random variables and have
\begin{equation}
    \begin{array}{l}
         \sum\limits_{t= t'}^{T} \mathbf{E}\left[ \left| \sum\limits_{a\in A_t} \left( w_{a,t} -  \tilde{w}_{a,t} \right) \right|  \right]   \leq O \left(m\ln T \sqrt{\gamma NT} \right).
    \end{array}
\end{equation}

\paragraph{Lower bound proof sketch.}
To establish the $\Omega(m\sqrt{NT \ln T})$ lower bound for CTS-G, we construct a top-$m$ problem with $N$ base arms where any $m$ arms can be selected per round. We set $N \geq 400m$ and consider a deterministic reward setting where the optimal $m$ arms (set $G$) have reward $\Delta = \frac{4}{5}\sqrt{\frac{N\ln T}{T}}$, and all other arms have zero reward. The total regret $\mathcal{R}(T)$ is proportional to the expected number of suboptimal arms $k_t$ played, i.e., $\mathcal{R}(T) = \Delta \mathbf{E}[\sum_{t=1}^T k_t]$. Note that the total number of pulls accounting for all arms by the end of round $t-1$ is exactly $m(t-1)$. We define $K_{t-1}$ as the cumulative count of suboptimal arm pulls up to round $t-1$ and analyze two mutually exclusive and exhaustive cases:
\begin{itemize}
    \item Case 1: excessive pulls from sub-optimal arms. Let $H = O\left(m\ln(T)/\Delta^2 \right)$. If there exists a round $t_0 \in (T/2, T]$  such that the total number of pulls from sub-optimal arms $K_{t_0-1} > \frac{1}{2} m (t_0-H)$, it implies the agent has already pulled suboptimal arms too many times. In this case, the regret is immediately lower bounded by $\Delta K_{t_0-1} = \Omega(m\sqrt{NT \ln T})$.
    \item  Case 2: sufficient  pulls from optimal arms and insufficient pulls from sub-optimal arms. If $K_{t-1} \leq \frac{1}{2} m (t-H)$ for all later rounds $t \ge T/2$, we prove that there remains a constant probability $p_0^2$ that at least $0.1m$ suboptimal arms are chosen in each round, where $p_0 >0$ is a constant. This is achieved by showing two events that happen with a constant probability:
    \begin{enumerate}
        \item Sufficient pulls from optimal rams. Let $Y^*_t$ denote the event that at least $0.1m$ optimal arms have posterior samples $w_{a,t}$ below a threshold $\tau = \Delta + \frac{\Delta}{4}$. We show that for any $t \ge \frac{T}{2}$, $\Pr(Y_t^* \mid \mathcal{F}_{t-1})\geq  p_0$. 
        \item Insufficient pulls from sub-optimal arms.  Let $Y_t$ denote the event that at least $0.1m$ suboptimal arms have posterior samples $w_{a,t}$ exceeding $\tau$ due to insufficient observation (over-exploration). We show that for any $t \ge \frac{T}{2}$, $\Pr(Y \mid \mathcal{F}_{t-1})\geq  p_0$.
    \end{enumerate}
\end{itemize}
Therefore, we have
    \begin{equation}
    \begin{aligned}
\mathbf{E}\left[\mathcal{R}_T\right] & =\Delta \sum_{t=1}^T \mathbf{E}\left[k_t\right] = \Delta \sum_{t=1}^T \operatorname{Pr}\left(Y_t^*, Y_t\right) \cdot \mathbf{E}\left[k_t \mid Y_t^*, Y_t\right]\\
& \geq \Delta \sum_{t= \lceil\frac{T}{2}\rceil}^T \mathbf{E}\left[\operatorname{Pr}\left(Y_t^* \mid \mathcal{F}_{t-1}\right) \cdot \operatorname{Pr}\left(Y_t \mid \mathcal{F}_{t-1}\right)\right] \cdot 0.1 m\\
&\geq \Delta \cdot T/2 \cdot p_0^2 \cdot m = \Omega(m \sqrt{N T \ln T}).
\end{aligned}
    \end{equation}

Note that our proof is nontrivial in that the analysis of Case 2 combines a conservation-of-counts/pigeonhole step and a Chernoff “bulk deviation” argument, yielding that $\Omega(m)$ suboptimal arms are selected in many late rounds with constant probability. 

\subsection{Combinatorial Learning with a Single Gaussian Seed~(CL-SG)}\label{sec: ALG2}
\begin{algorithm}[h]
\caption{Combinatorial Learning with a Single Gaussian Seed~(CL-SG)}\label{alg:singleseed}
\begin{algorithmic}[1]
\Require {arm set $[N]$, exploration rate $\gamma$}
\State Initialize $n_{a,1} = 0$ and $\hat{r}_{a,0} = 0$ for all base arms $a \in [N]$
\For{$t = 1, 2, \ldots$}
\State Observe feasible set $\Theta_t$
\State Draw $w_t \sim \mathcal{N}(0, 1)$
\State Construct $\bar{r}_{a,t} = \hat{r}_{a, n_{a,t}} + w_t \cdot \sqrt{\frac{{\gamma \ln t}}{n_{a,t} + 1}}$ for all base arms $a \in [N]$
\State Play super arm $A_t = \mathop{\arg \max}\limits_{A \in \Theta_t} \sum\limits_{a \in A} \bar{r}_{a,t}$
\State Observe  $r_{a,t} \sim p_{a}$ for all base arms $a \in A_t$  and update $n_{a,t}$ and $\hat{r}_{a, n_{a,t}}$ for all $a \in A_t$.
\EndFor
\end{algorithmic}
\end{algorithm}

The CTS-G algorithm still has a gap of $\sqrt{m}$ from the minimax-optimal lower bound of $\Omega(\sqrt{mNT \ln T})$, because we ``pay'' additional $m$ in the variance of per-arm posterior distribution. We argue that this $m$ is difficult to be removed, because controlling $\Pr(\sum_{a\in A^*} w_{a,t}\ge \sum_{a\in A^*} r_a)$ requires a constant anti-concentration lower bound for a sum of $m$ correlated estimation errors. This in turn forces the injected Gaussian noise to scale with $m$. To avoid this intrinsic set-level noise inflation, we introduce CL-SG, eliminating the extra $m$ factor in the variance.

Inspired by \cite{xiong2021near}, we devise CL-SG which enjoys a $\tilde{O}(\sqrt{mNT})$ regret bound. The key idea behind the removal of the extra $\sqrt{m}$ factor as compared to the regret of CTS-G (Alg.~\ref{alg:multiseed}) is that CL-SG uses a single random seed $w_t \sim \mathcal{N}(0,1)$  to perturb the empirical estimates of all the base arms, as shown in Alg.~\ref{alg:singleseed}.  
After drawing $w_t$, we construct $\bar{r}_{a,t} = \hat{r}_{a, n_{a,t}} + w_t \cdot \sqrt{\frac{\gamma \ln t}{n_{a,t}+1}}$ for all the base arms $a \in [N]$, where constant $\gamma > 0$ controls the exploration level. Then, we play  $A_t = \mathop{\arg \max}_{A \in \Theta_t} \sum_{a \in A} \bar{r}_{a,t}$ from the feasible set $\Theta_t$ in round $t$.

\begin{theorem}\label{thm:upperCL-SG}
        (1) The regret of CL-SG is ${O} \left(\ln (T)\sqrt{mNT} \right)$. (2) There exists a problem instance such that CL-SG suffers  $\Omega \left(\sqrt{mNT \ln T} \right)$ regret.
\end{theorem}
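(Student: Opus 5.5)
For part (1) I would reuse the decomposition behind Theorem~\ref{thm:upperCTS-G}, with the per-arm samples $w_{a,t}$ replaced by the perturbed indices $\bar r_{a,t}=\hat r_{a,n_{a,t}}+w_t\sigma_{a,t}$, where $\sigma_{a,t}:=\sqrt{\gamma\ln t/(n_{a,t}+1)}$. For $t\ge t'$ the regret splits into an optimism term, a deviation term and a lower-order remainder:
\begin{equation}
I_1=\sum_{t}\mathbf{E}\Big[\sum_{a\in A_t^*}r_a-\mathbf{E}_{\Theta_t}\Big[\sum_{a\in A_t}\bar r_{a,t}\Big]\Big],\qquad I_2=\sum_t\mathbf{E}\Big[\sum_{a\in A_t}(\bar r_{a,t}-r_a)\mathbf{1}[\mathcal{E}_t]\Big],
\end{equation}
plus $mt'+O(1)$.

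\textbf{Deviation term.} On $\mathcal{E}_t$ we have $|\bar r_{a,t}-r_a|\le(\sqrt{3\ln(Nt)/\gamma\ln t}+|w_t|)\sigma_{a,t}$. Since $\mathbf{E}|w_t|=O(1)$, this gives $I_2\lesssim\sqrt{\ln T}\,\mathbf{E}\sum_t\sum_{a\in A_t}(n_{a,t}+1)^{-1/2}$. The usual pigeonhole step bounds $\sum_t\sum_{a\in A_t}(n_{a,t}+1)^{-1/2}\le \sum_a 2\sqrt{n_{a,T}}$. Cauchy--Schwarz with $\sum_a n_{a,T}\le mT$ then gives $O(\sqrt{NmT})$. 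Hence $I_2=O(\sqrt{mNT}\ln T)$.

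\textbf{Optimism term.} Here the single seed matters. Write the score of the optimal set as $S^*_t=\sum_{a\in A_t^*}\bar r_{a,t}=\sum_{a\in A_t^*}\hat r_a+w_t\sum_{a\in A_t^*}\sigma_{a,t}$. This is a Gaussian in $w_t$ whose standard deviation is the \emph{sum} of the $\sigma_{a,t}$, not the square root of the sum of squares. On $\mathcal{E}_t$ the estimation error of $A_t^*$ is at most $\sqrt{3\ln(Nt)/(\gamma\ln t)}\sum_{a\in A_t^*}\sigma_{a,t}$. So for $\gamma$ a large enough constant, $\Pr(S_t^*\ge\sum_{a\in A_t^*}r_a\mid\mathcal{F}_{t-1})\ge \Pr(w_t\ge c)=p>0$ with no dependence on $m$. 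This is exactly where CTS-G had to inflate the variance by $m$.

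With this anti-concentration I would apply Lemma~\ref{lm:1} with $w$ replaced by $\bar r$ (its proof only uses constant-probability optimism). This bounds the positive part by $c(\gamma)\mathbf{E}_{\Theta_t}[(V_t-\mathbf{E}V_t)^+]$, where $V_t=\sum_{a\in A_t}\bar r_{a,t}=\max_{A\in\Theta_t}\sum_{a\in A}\bar r_{a,t}$. I would then use the ghost-sample step with an independent seed $\tilde w_t$, which gives the bound $\mathbf{E}|V_t(w_t)-V_t(\tilde w_t)|$.

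\textbf{Main obstacle.} $V_t$ is a maximum of affine functions of $w_t$, so it is convex and piecewise linear in the scalar $w_t$. Its slope at $w$ is $\sum_{a\in A_t(w)}\sigma_{a,t}$. The difficulty is that the argmax changes along the segment between $w_t$ and $\tilde w_t$, so the slope is not simply that of the played arm. My plan is to use convexity: $|V(w)-V(\tilde w)|\le|w-\tilde w|\max\{s(w),s(\tilde w)\}$, where $s$ is the slope of the played set at that point. Since $w$ and $\tilde w$ are exchangeable, this gives $\mathbf{E}|V(w)-V(\tilde w)|\le 2\,\mathbf{E}[|w-\tilde w|\,s(w)]$. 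Using Cauchy--Schwarz or a Gaussian tail truncation, this is $O(\sqrt{\ln T})\,\mathbf{E}\sum_{a\in A_t}\sigma_{a,t}$, which is again the same pigeonhole quantity. Summing over $t$ gives $I_1=O(\ln T\sqrt{mNT})$.

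For part (2), I would reuse the top-$m$ construction from the lower-bound proof of Theorem~\ref{thm:upperCTS-G}, with $\Delta\asymp\sqrt{N\ln T/(mT)}$ and deterministic rewards. I would argue as before: in one case many suboptimal pulls have already accrued; in the other, with constant probability the seed $w_t$ is positive and large enough that under-sampled suboptimal arms, with their larger $\sigma$, overtake $\Omega(m)$ optimal arms. This yields $\Omega(\Delta mT)=\Omega(\sqrt{mNT\ln T})$.
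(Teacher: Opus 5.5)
Your plan for part (1) follows the paper's argument: the optimism/deviation split, single-seed anti-concentration as in Lemma~\ref{lm:suminverse2}, the Markov-type gap conversion, a ghost seed, and then the maximal inequality with the pathwise bound of Lemma~\ref{lm:sumofinverse}. The one genuine difference is the ghost step. The paper never compares $V_t(w_t)$ with $V_t(\tilde w_t)$. It uses $\mathbf{E}_{\Theta_t}[V_t]=\mathbf{E}_{\Theta_t}[\max_{A\in\Theta_t}\sum_{a\in A}\tilde r_{a,t}]\ge\mathbf{E}_{\Theta_t}[\sum_{a\in A_t}\tilde r_{a,t}\mid A_t,w_t]$, i.e.\ it evaluates the ghost index on the \emph{same} set $A_t$. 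The difference is then exactly $(w_t-\tilde w_t)\sum_{a\in A_t}\sigma_{a,t}$, so your ``main obstacle'' of argmax switching never arises. Your convexity/exchangeability bound $\mathbf{E}|V_t(w_t)-V_t(\tilde w_t)|\le 2\,\mathbf{E}[|w_t-\tilde w_t|\,s(w_t)]$ is correct and reaches the same quantity at the cost of a factor $2$; it is a valid but longer route.

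Two details need tightening. In the deviation term, $\mathbf{E}|w_t|=O(1)$ does not let you factor $|w_t|$ out of $\sum_{a\in A_t}\sigma_{a,t}$, because $A_t$ depends on $w_t$. Bound it instead, as the paper does, by $\mathbf{E}[\max_{t\le T}|w_t|]$ times the pathwise bound $2\sqrt{mNT}$; this costs a $\sqrt{\ln T}$ factor that your stated $O(\ln T\sqrt{mNT})$ already absorbs. In the optimism term, the threshold $\sqrt{3\ln(Nt)/(\gamma\ln t)}$ inherited from $\mathcal{E}_t$ is a constant only once $t\gtrsim N$. The paper instead uses the event $\{|r_a-\hat r_{a,n_{a,t}}|\le\sqrt{4\ln t/(n_{a,t}+1)}\ \forall a\in A_t^*\}$. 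This fails with probability at most $2m/t^3\le 1/2$ for $t\ge\max\{\sqrt m,4\}$ and makes the threshold exactly $w_t\ge\sqrt{4/\gamma}$ for every $t$. If you keep conditioning on $\mathcal{F}_{t-1}$, the rounds where it fails cost only $O(m)$ in total.

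For part (2) the paper uses the same two-case logic, but on the instance $N=2m$ with explicit constants. Most optimal arms satisfy $n_{a,t}>t-1-c_1T$, and in rounds $t\ge\alpha T$ the event $w_t\in[l_1,l_2]$ forces $k_t\ge 0.92m$. You propose reusing the $N\ge 400m$ instance. You cannot literally copy the CTS-G thresholds $H,L$ or its Chernoff step, since these rely on the $m$-inflated variance and on independent per-arm samples. However, the shared-seed comparison you sketch does go through for any $N\ge 4m$: optimal arms with $n_{a,t}=\Omega(T)$ are beaten by at least $m$ suboptimal arms with $n_{b,t}=O(mT/N)$ whenever $w_t$ exceeds a constant. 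That buys something the paper's instance does not. With $N=2m$ one has $\sqrt{mNT\ln T}\asymp m\sqrt{T\ln T}$, so only your version exhibits the $\sqrt{N}$ dependence for arbitrary $N/m$.
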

\paragraph{Discussion.} Theorem~\ref{thm:upperCL-SG} states that CL-SG improves the upper bound of CTS-G by a factor of $\sqrt{m}$. To the best of our knowledge, the above bounds are currently the best problem-independent results for TS-based algorithms in sleeping semi-bandits with either the stochastic or adversarial availability of arms.

\paragraph{Upper bound proof sketch.}
The extra $\sqrt{m}$ in CTS-G comes from the $m$ factor in the variance of the Gaussian posterior sample $w_{a,t}$, necessary to keep $c(\gamma)$ bounded by a constant. To bound $c(\gamma)$, we must lower bound $\Pr_{\Theta_t}\left(\sum_{a \in A_t^*} w_{a, t}-\hat{r}_{a, n_{a, t}} \geq \sum_{a \in A_t^*} \sqrt{\frac{4 \ln t}{n_{a, t}+1}}\right)$, requiring the Cauchy-Schwarz inequality to bring the summation inside the square root for the RHS term in the probability, which scales with $\sqrt{m}$, i.e., $\sum_{a \in A_t^*} \sqrt{\frac{4 \ln t}{n_{a, t}+1}} \leq \sqrt{m \sum_{a \in A_t^*} \frac{4 \ln t}{n_{a, t}+1}}$. This fact further results in an extra $m$ in the variance of CTS-G Gaussian seeds for the probability to be lower bounded by a constant. On the other hand, with CL-SG, using a single $w_t$, we lower bound a similar probability, $\Pr\left(\sum_{a\in A_t^*} w_t \sqrt{\frac{\gamma \ln t}{n_{a,t}+1}}  \geq \sum_{a\in A_t^*} \sqrt{\frac{4\ln t}{n_{a,t}+1}}\right)$, by directly dividing both sides by $\sum_{a \in A_t^*} \sqrt{\frac{4 \ln t}{n_{a, t}+1}}$, which avoid the extra $m$ in the variance.

\paragraph{Lower bound proof sketch.}
The fundamental challenge in analyzing CL-SG arises from the shared Gaussian random seed $w_t \sim \mathcal{N}(0, 1)$ used across all base arms to drive exploration. Unlike standard CTS-G, which samples independently per arm, this mechanism induces complex dependencies among all available super arms. 

To establish the $\Omega(\sqrt{mNT \ln T})$ lower bound for CL-SG, we consider a top-$m$ problem with $N = 2m$ base arms. We set the optimal $m$ arms (set $G$) to have a deterministic reward $\Delta = \sqrt{\frac{N \ln T}{10^4 m T}}$, and all other arms to have zero reward.
The total regret $\mathcal{R}(T)$ is proportional to the expected number of suboptimal arms $k_t$ played, i.e., $\mathcal{R}(T) = \Delta \mathbf{E}[\sum_{t=1}^T k_t]$.
We define $G^*_t$ as the set of optimal arms pulled more than $t-1-c_1 T$ times (where $c_1 = 10^{-6}$), and define $G_t$ as sub-optimal base arms that have not been observed sufficiently by the end of round $t-1$.

Then, we analyze two mutually exclusive and exhaustive cases based on the event  $B_t^* := \{ |G^*_t| \ge 0.9995m \}$.

\begin{itemize}
\item Case 1: Excessive Exploration. If there exists a round $t$ where $B_t^*$ is false, the total number of optimal arm pulls is significantly lower than $m(t-1)$. Consequently, the cumulative suboptimal arm pulls $\sum_{s=1}^{t_0-1} k_s$ must exceed $0.0005mc_1T$, directly yielding a regret of $\Omega(\sqrt{mNT \ln T})$.
\item Case 2: Coordinated Selection via Shared Seed. If $B_t^*$ is true for all $t$, we focus on late rounds $t \geq \alpha T + 1$ (where $\alpha = \frac{255}{256} + c_1$). We show that when the shared seed $w_t$ falls within a specific constant interval $[l_1, l_2]$ with a constant probability $p_0$, the algorithm consistently prefers suboptimal arms. This is achieved by showing:
\begin{enumerate}
\item For $w_t \in [l_1, l_2]$, any sufficiently observed optimal arm $a \in G_t^*$ has a posterior sample $\bar{r}_{a,t} < \Delta + 100l_2\Delta$.
\item For the same $w_t$, any insufficiently observed suboptimal arm $b \in G_t$ (where $|G_t| > 0.92m$) has $\bar{r}_{b,t} \geq 100\sqrt{6}l_1\Delta$.
\end{enumerate}
\end{itemize}
By tuning $l_1 = \frac{1}{2\sqrt{2}}$ and $l_2 = 0.5$, we ensure $\bar{r}_{a,t} < \bar{r}_{b,t}$, forcing the agent to play $k_t \geq 0.92m$ suboptimal arms whenever $w_t \in [l_1, l_2]$. Therefore, we have
\begin{equation}
\begin{aligned}
\mathbf{E}\left[\mathcal{R}_T\right] 
& = \Delta \sum_{t=1}^T \mathbf{E}\left[k_t\right] \\
&\geq \Delta \sum_{t=\lceil \alpha T + 1 \rceil}^T \Pr(w_t \in [l_1, l_2]) \cdot \mathbf{E}[k_t \mid w_t \in [l_1, l_2]] \\
& \geq \Delta \cdot (1-\alpha)T \cdot p_0 \cdot 0.92m = \Omega(\sqrt{mNT \ln T}).
\end{aligned}
\end{equation}

\section{Experiments}
\label{sec:results}
In this section, we evaluate our algorithm in the context of network routing with sleeping semi-bandits. Specifically, we assess the performance of CL-SG against the standard CTS-G algorithm, which independently samples for each arm according to $\mathcal{N}\left(\hat{r}_{a, n_{a, t}}, \frac{\gamma m \ln t}{n_{a, t}+1}\right)$,  and we examine the effect of varying the exploration parameter $\gamma$.  In addition, we compare against the following baselines, each selecting the action $A_t = \arg\max_{A \in \Theta_t} \sum_{a \in A} \theta_{a,t}$ with $\theta_{a,t}$ defined as follows:
\begin{itemize}
    \item CTS-B~\cite{wang2018thompson}: $\theta_{a,t} \sim \text{Beta}(\hat{r}_{a,n_{a,t}} n_{a,t} + 1, n_{a,t} - \hat{r}_{a,n_{a,t}} n_{a,t} + 1)$.
    \item BG-CTS~\cite{zhangthompson}: $\theta_{a,t} \sim N\left(\hat{r}_{a,n_{a,t}}, 2g(t)\sigma^2 / n_{a,t} \right)$, with $\sigma^2 = 1/4$ for Bernoulli rewards and $g(t)$ defined as in~\cite{zhangthompson}.
    \item CombUCB~\cite{kveton2015tight}: $\theta_{a,t} = \hat{r}_{a,n_{a,t}} + \sqrt{1.5 \ln t / n_{a,t}}$.
\end{itemize}
We evaluate these algorithms under two routing scenarios:
\begin{itemize}
    \item Setting 1 (Synthetic Network): This is a controlled numerical experiment based on a wireless mesh network of $16$ nodes in a $4\times4$ grid with a total of $24$ links. Among the links, one predefined path of four hops yields Bernoulli rewards with mean $0.9$ per link, while the remaining links yield Bernoulli rewards with mean $0.8$. All links have a uniform availability probability of $0.75$.
    \item Setting 2 (Real-World Network): This setting uses real-world traces from the UCSB MeshNet dataset~\cite{1632477,ucsb_meshnet}, which provides per-minute neighborhood tables. Each row records the \emph{expected transmission time (ETT)} between a node and its neighbors. We define the reward for each link as $1 - \text{normalized ETT}$. Since link availability varies over time, this naturally fits into the sleeping semi-bandit framework.
\end{itemize}
All reported results are averaged over $100$ independent runs. 

\begin{figure}[htbp]
\centering
\begin{tikzpicture}[scale=1.2, every node/.style={circle, draw, minimum size=8mm}]

\foreach \x in {0,1,2} {
    \foreach \y in {0,1,2} {
        \ifthenelse{\x=0 \AND \y=0}
            {\node[fill=blue!20] (n\x\y) at (\x,\y) {S};}
            {\ifthenelse{\x=2 \AND \y=2}
                {\node[fill=red!20] (n\x\y) at (\x,\y) {T};}
                {\node (n\x\y) at (\x,\y) {};}
            }
    }
}

\foreach \x in {0,1} {
    \foreach \y in {0,1,2} {
        \pgfmathtruncatemacro{\xnext}{\x + 1}
        \draw[-] (n\x\y) -- (n\xnext\y);
    }
}

\foreach \x in {0,1,2} {
    \foreach \y in {0,1} {
        \pgfmathtruncatemacro{\ynext}{\y + 1}
        \draw[-] (n\x\y) -- (n\x\ynext);
    }
}

\end{tikzpicture}
\caption{Setting 1: Wireless Mesh Network.}
\end{figure}

\paragraph{Comparison of the Regret}
\begin{figure}[h]
    \centering
    \begin{subfigure}[b]{0.35\textwidth}
        \centering
        \includegraphics[width=\textwidth]{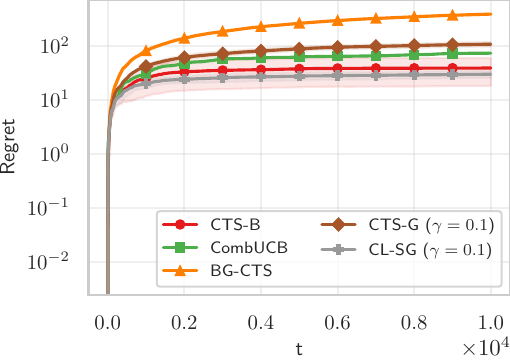}
        \caption{Setting $1$}
        \label{fig:regret_sett1}
    \end{subfigure}
    \begin{subfigure}[b]{0.35\textwidth}
        \centering
        \includegraphics[width=\textwidth]{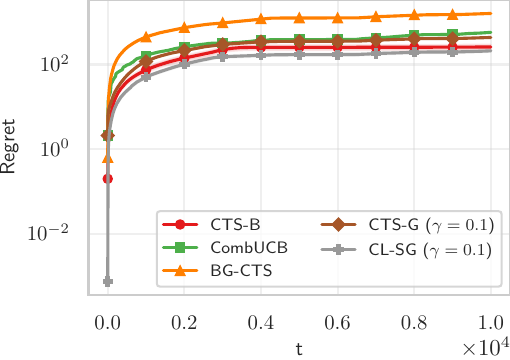}
        \caption{Setting $2$}
        \label{fig:regret_sett2}
    \end{subfigure}
    \caption{Comparison of the regret for both settings with $\gamma = 0.1$ for both CTS-G and CL-SG.}
    \label{fig:regret}
\end{figure}
The regret results over $T = 10^4$ rounds are shown in Fig.~\ref{fig:regret} with shaded areas indicating $97.5\%$ confidence intervals. The confidence intervals for some algorithms are not easily visible due to their small size.  

In both settings, CL-SG draws a minimal number of Gaussian random samples in each round, enhancing its efficiency. This reduction in randomness improves robustness, even for a large $\gamma$, preventing excessive exploration. As a result, CL-SG achieves superior performance, surpassing CTS-B, BG-CTS, and CTS-B.
This indicates the efficiency of CL-SG's design in optimizing the exploration-exploitation trade-off more effectively than its counterparts. 



\paragraph{Effect of Different Exploration Rates}\label{sec:difflambda}
We also compare the performance of CTS-G and CL-SG under different exploration rates, i.e., $\gamma = 0.01, 0.1, 0.5$, and $1$. The results are shown in Fig.~\ref{fig:explorationrate}.
\begin{figure}[h]
    \centering
    \begin{subfigure}[b]{0.35\textwidth}
        \centering
        \includegraphics[width=\textwidth]{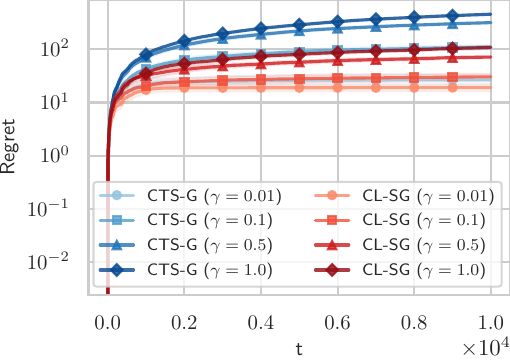}
        \caption{Setting $1$}
        \label{fig:gamma_sett1}
    \end{subfigure}
    \begin{subfigure}[b]{0.35\textwidth}
        \centering
        \includegraphics[width=\textwidth]{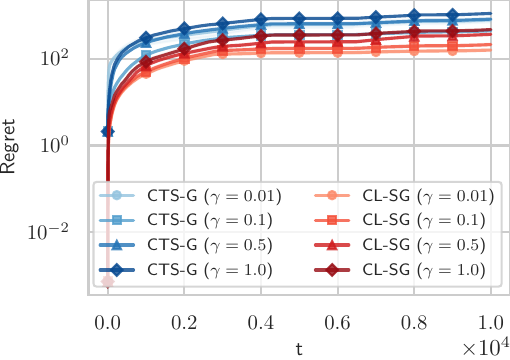}
        \caption{Setting $2$}
        \label{fig:gamma_sett2}
    \end{subfigure}
    \caption{Effect of different exploration rates $\gamma = 0.01, 0.1, 0.5$, and $1$ for CTS-G and CL-SG.}
    \label{fig:explorationrate}
\end{figure}
We can observe that in both settings, CL-SG and CTS-G achieve the lowest regret when $\gamma = 0.01$. If we continue to increase $\gamma$, both algorithms suffer a larger regret. This suggests that a certain lower level of randomness (exploration) is more effective in practice.

\section{Conclusion}
\label{sec:conclusion}
This paper addresses a long-standing open problem by establishing the first worst-case regret bounds for TS-based algorithms in sleeping semi-bandit settings. We first analyzed CTS-G, giving an upper bound of $\tilde{O}(m\sqrt{NT })$ and a matching lower bound of $\tilde{\Omega}(m\sqrt{NT})$. Next, we propose CL-SG, a variant of CTS-G that draws only a single shared Gaussian sample per round. CL-SG achieves near-optimal performance, with a worst-case upper bound of $\tilde{O}(\sqrt{mNT})$ and a matching lower bound of $\tilde{\Omega}(\sqrt{mNT})$.
Empirically, CL-SG significantly outperforms existing benchmarks such as CTS-B and CombUCB, demonstrating both improved accuracy and efficiency across diverse environments.

Looking ahead, we aim to theoretically characterize the optimal trade-off between randomness and exploration, and to further obtain the minimax-optimal regret bounds. The authors have provided public access to their code at \textbf{\url{https://tinyurl.com/ton26ts}}.

\bibliography{reference}
\bibliographystyle{IEEEtran}


\appendix

\section{Notations and Facts}
\textbf{Notations:} Let $\mathcal{F}_{t-1}$ denote by the history of past actions and rewards until the end of round $t-1$.  Recall that $\mathbf{E}_{\Theta_t}[\cdot]:=\mathbf{E}[\cdot \mid \Theta_t]$ and $\Pr_{\Theta_t} (\cdot) :=\Pr(\cdot \mid \Theta_t)$. Denote by $\mathcal{E}_t:= \left\{\forall{a \in [N]}:|r_a - \hat{r}_{a, n_{a,t}}|\leq \sqrt{\frac{3\ln Nt}{n_{a,t}+1}}\right\}$ the high-probability event that the empirical mean is close to the true mean reward for arm $a$, and by $\overline{\mathcal{E}_t}$ the complementary event of $\mathcal{E}_t$.
Recall that $\Tilde{w}_{a,t} \sim \mathcal{N}(\hat{r}_{a,n_{a,t}}, \frac{\gamma m \ln t}{n_{a,t}+1})$ is i.i.d. of $w_{a,t}$ for CTS-G, and $\Tilde{r}_t:= \hat{r}_{a, n_{a,t}} + \Tilde{w}_t\sqrt{\frac{\gamma \ln t}{n_{a,t}+1}}$, where $\Tilde{w}_t \sim \mathcal{N}(0,1)$ is i.i.d. of $w_t$ for CL-SG.

\begin{fact}\label{lm:gaussianineq}
    For a Gaussian distributed random variable $Z$ with mean $\mu$ and variance $\delta^2$, for any $z$, we have that
    \begin{equation}\label{eq:anti1}
        \frac{1}{4 \sqrt{\pi}} \cdot e^{-7 z^2 / 2} \leq \operatorname{Pr}(|Z-\mu|>z \sigma) \leq \frac{1}{2} e^{-z^2 / 2},
    \end{equation}
    and for any $z > 0$,
    \begin{equation}\label{eq:anti2}
        \Pr(Z - \mu>z \sigma) \geq \frac{1}{\sqrt{2 \pi}} \frac{z}{z^2+1} e^{-\frac{z^2}{2}}.
    \end{equation}
\end{fact}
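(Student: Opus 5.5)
Throughout, standardize $X := (Z-\mu)/\sigma \sim \mathcal{N}(0,1)$ (the $\delta$ in the statement is the standard deviation $\sigma$). Write $\phi(x) = \frac{1}{\sqrt{2\pi}}e^{-x^2/2}$ and $Q(z) := \Pr(X>z)$. By symmetry, $\Pr(|Z-\mu|>z\sigma) = 2Q(z)$. So every claim reduces to an elementary estimate on $Q$, which I would prove by calculus. I would treat \eqref{eq:anti2} first and then use it to obtain the lower bound in \eqref{eq:anti1}.

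\textbf{One-sided upper bound.} Substitute $x = z+u$ in $Q(z)=\int_z^\infty \phi(x)\,dx$. Since $e^{-(z+u)^2/2} = e^{-z^2/2}e^{-zu}e^{-u^2/2}$, this gives
\begin{equation}
Q(z) = e^{-z^2/2}\int_0^\infty \phi(u)e^{-zu}\,du \le e^{-z^2/2}\int_0^\infty\phi(u)\,du = \tfrac12 e^{-z^2/2}, \qquad z\ge 0.
\end{equation}
This is the one-sided form of the upper bound in \eqref{eq:anti1}.

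For the two-sided quantity as literally written, one must be careful: at $z=0$ we have $2Q(0)=1>\tfrac12$. Hence the stated upper bound holds as written only for large $z$. Using the Mills-ratio bound $Q(z)\le \phi(z)/z$, the two-sided bound $2Q(z)\le \tfrac12 e^{-z^2/2}$ holds once $z\ge 4/\sqrt{2\pi}$. For all $z\ge0$ one has the weaker $2Q(z)\le e^{-z^2/2}$. I would state it this way, and check that every later use of the fact is either one-sided or falls in this large-$z$ regime. This mismatch is the only genuine obstacle in the proof.

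\textbf{Proof of \eqref{eq:anti2}.} Let $g(z) := \frac{z}{z^2+1}\phi(z) - Q(z)$ for $z>0$. Using $\phi'(z)=-z\phi(z)$ and $Q'(z)=-\phi(z)$, differentiation gives
\begin{equation}
g'(z) = \phi(z)\left[\frac{1-z^2}{(1+z^2)^2} - \frac{z^2}{1+z^2} + 1\right] = \frac{2\phi(z)}{(1+z^2)^2} > 0.
\end{equation}
Moreover $g(z)\to 0$ as $z\to\infty$. Since $g$ is increasing with limit $0$, we get $g\le 0$ on $(0,\infty)$, which is exactly \eqref{eq:anti2}.

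\textbf{Lower bound in \eqref{eq:anti1}.} I split into two cases according to $z$.
\begin{itemize}
\item If $0\le z\le 1$: by monotonicity, $2Q(z)\ge 2Q(1)\approx 0.317$. This exceeds $\frac{1}{4\sqrt\pi}\approx 0.141$, which in turn is at least $\frac{1}{4\sqrt\pi}e^{-7z^2/2}$.
\item If $z\ge1$: by \eqref{eq:anti2} together with $\frac{z}{z^2+1}\ge\frac{1}{2z}$,
\begin{equation}
2Q(z)\ge \frac{2}{\sqrt{2\pi}}\cdot\frac{1}{2z}\,e^{-z^2/2}.
\end{equation}
This is at least $\frac{1}{4\sqrt\pi}e^{-7z^2/2}$ if and only if $e^{3z^2}\ge \frac{z}{2\sqrt2}$, which holds trivially for $z\ge 1$.
\end{itemize}
Combining the two cases gives the lower bound in \eqref{eq:anti1} for all $z\ge0$, and hence for all $z$ by symmetry.
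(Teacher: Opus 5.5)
Your argument is correct, and it takes a different route from the paper, which gives no proof of Fact~\ref{lm:gaussianineq}. The paper simply quotes it as a standard Gaussian tail estimate from the Gaussian Thompson-sampling literature. There it is usually read off from the classical Abramowitz--Stegun bounds $\frac{1}{x+\sqrt{x^2+2}} < e^{x^2}\int_x^\infty e^{-t^2}\,dt \le \frac{1}{x+\sqrt{x^2+4/\pi}}$ for $x\ge 0$, which deliver both two-sided inequalities at once.

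Your self-contained calculus route separates the pieces: the substitution $x=z+u$ for the upper tail, monotonicity of $g$ for the Mills-ratio lower bound \eqref{eq:anti2}, and then a two-case comparison for the weak lower bound. This has the advantage of showing which constant belongs to which event. Your diagnosis of the upper bound is right. As written, $\Pr(|Z-\mu|>z\sigma)\le\frac12 e^{-z^2/2}$ fails at $z=0$ and for every $z<0$, where the left side equals $1$, so no proof of that part exists. Your integral representation $Q(z)e^{z^2/2}=\int_0^\infty\phi(u)e^{-zu}\,du$ shows this quantity is decreasing in $z$. Hence the two-sided bound holds exactly for $z\ge z_0\approx 1.09$, and your threshold $4/\sqrt{2\pi}$ is a valid sufficient condition. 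The bound one extracts uniformly in $z\ge 0$ from the Abramowitz--Stegun upper estimate is $2Q(z)\le e^{-z^2/2}$, which is your weaker statement. This confirms that the factor $\frac12$ belongs to the one-sided probability.

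Two small remarks. First, your plan to audit later uses of the fact is moot, because Fact~\ref{lm:gaussianineq} is never invoked anywhere in the paper. The anti-concentration steps in Lemmas~\ref{lm:anti1} and~\ref{lm:suminverse2} use the exact value $\Phi(-\sqrt{4/\gamma})$, and the CTS-G lower-bound proof uses the numerical value $\Pr(\mathcal{N}(0,1)\ge 1)\ge 0.15$ directly. The misstatement therefore has no downstream effect. Second, extending the lower bound to $z<0$ is not a symmetry argument, since $z\mapsto\Pr(|Z-\mu|>z\sigma)$ is not even. Instead, for $z<0$ (and $\sigma>0$) the probability equals $1$, which exceeds $\frac{1}{4\sqrt{\pi}}$. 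Equivalently, the probability at $z$ dominates its value at $|z|$ while the right-hand side is even in $z$.
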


\begin{fact}\label{lm:maximal4subgaussian}
    Let $X_1,\ldots,X_N$ be $N$ real random variables with $X_i \sim \operatorname{subG}\left(\sigma^2\right), i=1,\ldots, N$, not necessarily independent. Then,
    \begin{equation}
        \mathbb{E}\left[\max _{i=1, \ldots, N}\left|X_i\right|\right] \leq \sigma \sqrt{2 \log (2 N)}.
    \end{equation}
\end{fact}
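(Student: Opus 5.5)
The statement is the standard maximal inequality for sub-Gaussian variables. I would prove it by the exponential-moment (Chernoff/Jensen) argument, which needs no independence and so matches the ``not necessarily independent'' clause. I take $X\sim\operatorname{subG}(\sigma^2)$ to mean the usual moment generating function condition: $\mathbb{E}[e^{\lambda X}]\le e^{\lambda^2\sigma^2/2}$ for all $\lambda\in\mathbb{R}$, which implicitly centers $X$.

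\textbf{Step 1: remove the absolute values.} Set $Y_{i}=X_i$ and $Y_{N+i}=-X_i$ for $i=1,\ldots,N$. Then
\[
\max_{i\le N}|X_i|=\max_{j\le 2N}Y_j .
\]
Each $Y_j$ is again $\operatorname{subG}(\sigma^2)$, because the moment generating function condition is symmetric in $\lambda$.

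\textbf{Step 2: exponentiate and use Jensen.} Fix $\lambda>0$. Jensen's inequality for the convex map $x\mapsto e^{\lambda x}$ gives
\[
e^{\lambda\mathbb{E}[\max_j Y_j]}\le \mathbb{E}\big[e^{\lambda\max_j Y_j}\big].
\]
Next I bound the maximum of the nonnegative quantities $e^{\lambda Y_j}$ by their sum. This is the only place the union over indices enters, and it needs no independence:
\[
\mathbb{E}\big[e^{\lambda\max_j Y_j}\big]=\mathbb{E}\big[\max_j e^{\lambda Y_j}\big]\le\sum_{j=1}^{2N}\mathbb{E}\big[e^{\lambda Y_j}\big]\le 2N e^{\lambda^2\sigma^2/2}.
\]

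\textbf{Step 3: optimize over $\lambda$.} Taking logarithms and dividing by $\lambda$ yields
\[
\mathbb{E}\big[\max_j Y_j\big]\le \frac{\ln(2N)}{\lambda}+\frac{\lambda\sigma^2}{2}.
\]
Choosing $\lambda=\sqrt{2\ln(2N)}/\sigma$ balances the two terms, giving $\sigma\sqrt{2\ln(2N)}$. This is the claimed bound, reading $\log$ as the natural logarithm.

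There is no genuine obstacle here. The only points to watch are the convention that sub-Gaussian means the moment generating function bound above (so the mean is zero) and the doubling trick that absorbs the absolute value into the factor $2N$.
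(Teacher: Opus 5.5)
Your proof is correct: the doubling trick, Jensen's inequality, bounding the maximum of the nonnegative terms $e^{\lambda Y_j}$ by their sum, and the choice $\lambda=\sqrt{2\ln(2N)}/\sigma$ give exactly $\sigma\sqrt{2\ln(2N)}$ without using independence. The paper gives no proof of this statement; it quotes it as a standard fact, and your argument is the usual textbook proof of it, so there is no different route to compare against.
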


\begin{fact}[Chernoff Bound]
Let $Y$ be a random variable, $\mu = \mathbf{E}[Y]$ and $\delta \in (0,1)$. We have 
    \begin{equation}\label{eq:chernoff}
        \operatorname{Pr}\left(Y \leq(1-\delta) \mu \right) \leq \exp \left(-\frac{\delta^2}{2} \mu\right).
    \end{equation}

\end{fact}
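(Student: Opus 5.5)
The bound is not true for an arbitrary random variable $Y$; it needs independent bounded summands. I will therefore prove it under the hypothesis with which it is used in the lower-bound arguments: $Y=\sum_{i} X_i$, where the $X_i\in[0,1]$ are independent (for example, indicators of whether each arm's posterior sample crosses the threshold $\tau$, counted over arms), and $\mu=\mathbf{E}[Y]=\sum_i p_i$ with $p_i=\mathbf{E}[X_i]$. The plan is the standard exponential-moment (Cramér–Chernoff) argument applied to the lower tail.

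\textbf{Step 1 (Markov on the exponential).} Fix $\lambda>0$. Since $y\mapsto e^{-\lambda y}$ is decreasing, Markov's inequality gives
\begin{equation}
\Pr\left(Y\le (1-\delta)\mu\right)=\Pr\left(e^{-\lambda Y}\ge e^{-\lambda(1-\delta)\mu}\right)\le e^{\lambda(1-\delta)\mu}\,\mathbf{E}\left[e^{-\lambda Y}\right].
\end{equation}

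\textbf{Step 2 (bound the moment generating function).} By independence, $\mathbf{E}[e^{-\lambda Y}]=\prod_i \mathbf{E}[e^{-\lambda X_i}]$. Since $x\mapsto e^{-\lambda x}$ is convex on $[0,1]$, we have $e^{-\lambda x}\le 1+x(e^{-\lambda}-1)$ there. Taking expectations gives $\mathbf{E}[e^{-\lambda X_i}]\le 1+p_i(e^{-\lambda}-1)$. Applying $1+u\le e^{u}$ then yields $\mathbf{E}[e^{-\lambda X_i}]\le \exp\left(p_i(e^{-\lambda}-1)\right)$. Multiplying over $i$,
\begin{equation}
\mathbf{E}\left[e^{-\lambda Y}\right]\le \exp\left(\mu(e^{-\lambda}-1)\right).
\end{equation}

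\textbf{Step 3 (optimize $\lambda$).} Choose $\lambda=-\ln(1-\delta)>0$, which is valid because $\delta\in(0,1)$. Then $e^{-\lambda}-1=-\delta$ and $e^{\lambda(1-\delta)\mu}=(1-\delta)^{-(1-\delta)\mu}$. Combining with Steps 1 and 2,
\begin{equation}
\Pr\left(Y\le(1-\delta)\mu\right)\le \exp\left(-\mu\left[\delta+(1-\delta)\ln(1-\delta)\right]\right).
\end{equation}

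\textbf{Step 4 (elementary inequality).} It remains to show $(1-\delta)\ln(1-\delta)\ge -\delta+\delta^2/2$ for $\delta\in(0,1)$. Expanding $\ln(1-\delta)=-\sum_{k\ge1}\delta^k/k$ and multiplying by $(1-\delta)$ gives
\begin{equation}
(1-\delta)\ln(1-\delta)=-\delta+\sum_{k\ge2}\frac{\delta^k}{k(k-1)}.
\end{equation}
Every term of the sum is nonnegative, and the $k=2$ term equals $\delta^2/2$, so the inequality holds. Hence $\delta+(1-\delta)\ln(1-\delta)\ge \delta^2/2$, which gives the claimed bound $\exp(-\delta^2\mu/2)$.

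The only delicate point is Step 4, the series comparison; Steps 1–3 are routine. The main thing to keep explicit is the independence-and-boundedness hypothesis, which the bare statement omits but every later application must satisfy.
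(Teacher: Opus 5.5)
Your proof is correct, and it is the standard Cram\'er--Chernoff derivation of the multiplicative lower-tail bound. The paper gives no proof of this Fact; it cites it as a textbook result, so there is no competing route to compare against.

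You are also right that the statement as printed is false without extra hypotheses. For example, take $Y\in\{0,2\mu\}$ with probability $1/2$ each. Then $\Pr(Y\le(1-\delta)\mu)=1/2$, which exceeds $e^{-\delta^2\mu/2}$ once $\mu>2\ln 2/\delta^2$. So restoring the hypothesis of independent $[0,1]$-valued summands is necessary, not cosmetic.

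That hypothesis is satisfied in the paper's only use of the bound, the CTS-G lower bound. There, $X_t^*=\sum_{a\in G}\mathbf{1}[w_{a,t}<\tau]$ and $X_t=\sum_{a\in[N]\setminus G}\mathbf{1}[w_{a,t}\ge\tau]$ are sums of indicators. These indicators are independent \emph{conditionally on} $\mathcal{F}_{t-1}$, because CTS-G draws each $w_{a,t}$ independently given the history. You should therefore apply your bound to the conditional law given $\mathcal{F}_{t-1}$.

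The same argument would not go through for CL-SG, where the shared seed $w_t$ couples all arms. The paper does not invoke the Chernoff bound in that analysis.
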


\begin{fact}[Hoeffding's Lemma]
Let $X$ be any real-valued random variable such that $a\leq X\leq b$ almost surely. Then, for all $\lambda \in \mathbb{R}$, we have
\begin{equation}\label{eq:hoeffding}
    \mathbf{E}\left[e^{\lambda(X-\mathbb{E}[X])}\right] \leq \exp \left(\frac{\lambda^2(b-a)^2}{8}\right).
\end{equation}
\end{fact}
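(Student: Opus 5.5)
We prove the fact directly by convexity and a second-order Taylor bound on a log-moment generating function.

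\textbf{Reduction.} Replacing $X$ by $X-\mathbf{E}[X]$ and $[a,b]$ by $[a-\mathbf{E}[X],\,b-\mathbf{E}[X]]$ leaves both $b-a$ and the left-hand side of (\ref{eq:hoeffding}) unchanged. So we may assume $\mathbf{E}[X]=0$, which forces $a\le 0\le b$. If $a=b$, then $X\equiv 0$ and there is nothing to prove, so assume $a<b$.

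\textbf{Convexity step.} Since $x\mapsto e^{\lambda x}$ is convex, every $x\in[a,b]$ satisfies
\begin{equation}
e^{\lambda x}\le \frac{b-x}{b-a}e^{\lambda a}+\frac{x-a}{b-a}e^{\lambda b}.
\end{equation}
Taking expectations and using $\mathbf{E}[X]=0$ gives
\begin{equation}
\mathbf{E}\left[e^{\lambda X}\right]\le \frac{b}{b-a}e^{\lambda a}-\frac{a}{b-a}e^{\lambda b}.
\end{equation}

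\textbf{Reparametrization.} Set $p:=-a/(b-a)\in[0,1]$ and $h:=\lambda(b-a)$. Then $\lambda a=-hp$, and the right-hand side becomes $e^{L(h)}$ with
\begin{equation}
L(h):=-hp+\ln\left(1-p+pe^{h}\right).
\end{equation}
The argument of the logarithm is positive, so $L$ is well defined and smooth. It therefore suffices to show $L(h)\le h^2/8$ for all real $h$, since $h^2/8=\lambda^2(b-a)^2/8$.

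\textbf{Taylor bound.} Direct computation gives $L(0)=0$ and
\begin{equation}
L'(h)=-p+\frac{pe^{h}}{1-p+pe^{h}},
\end{equation}
so $L'(0)=0$. Let $q(h):=\frac{pe^{h}}{1-p+pe^{h}}\in[0,1]$. Differentiating once more gives $L''(h)=q(h)\bigl(1-q(h)\bigr)$. By AM--GM, $q(1-q)\le 1/4$ for every $q\in[0,1]$, so $L''(h)\le 1/4$ everywhere.

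By Taylor's theorem with Lagrange remainder, there is some $\xi$ between $0$ and $h$ with
\begin{equation}
L(h)=L(0)+hL'(0)+\tfrac{h^2}{2}L''(\xi)\le \frac{h^2}{8}.
\end{equation}
Substituting back gives $\mathbf{E}\left[e^{\lambda(X-\mathbf{E}[X])}\right]\le \exp\left(\lambda^2(b-a)^2/8\right)$, which is (\ref{eq:hoeffding}).

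The only step that needs care is the identity $L''=q(1-q)$, which turns the uniform curvature bound into the elementary inequality $q(1-q)\le 1/4$. The rest is routine bookkeeping of the change of variables.
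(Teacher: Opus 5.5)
Your proof is correct and complete. The convexity chord bound, the reparametrization $L(h)=-hp+\ln(1-p+pe^{h})$ with $L(0)=L'(0)=0$, and the curvature identity $L''=q(1-q)\le 1/4$ are all right, and you handle the degenerate case $a=b$ and both signs of $\lambda$. The paper gives no proof: it quotes this as a standard Fact and uses it only implicitly, through the Hoeffding tail bound $\Pr(|\hat r_{a,s}-r_a|\ge\epsilon)\le 2e^{-2s\epsilon^2}$ in Lemmas~\ref{lm:concerntration} and~\ref{lm:anti1}. Your argument is the classical textbook proof that the paper takes for granted.
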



\begin{fact}[Cantelli's Inequality] Let $X$ be a real-valued random variable with mean $\mu$ and variance $\sigma^2$. Then, for any $\lambda <0$, we have 
\begin{equation}\label{eq:cantelli}
    \operatorname{Pr}\left(X-\mu \ge \lambda\right) \ge 1 -\frac{\sigma^2}{\sigma^2 + \lambda^2}.
    \end{equation}
    \end{fact}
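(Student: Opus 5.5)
The plan is to reduce the claim to the classical one-sided Chebyshev (Cantelli) bound on the upper tail of a centered variable, and to prove that bound with the shifted-Markov trick. Set $Y := \mu - X$, so that $\mathbf{E}[Y] = 0$ and $\mathrm{Var}(Y) = \sigma^2$, and set $s := -\lambda > 0$. The event $\{X - \mu < \lambda\}$ equals $\{Y > s\}$. Hence
\[
\Pr(X-\mu \ge \lambda) = 1 - \Pr(Y > s) \ge 1 - \Pr(Y \ge s),
\]
and it suffices to show $\Pr(Y \ge s) \le \frac{\sigma^2}{\sigma^2 + s^2}$, since $s^2 = \lambda^2$.

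For the upper-tail bound, fix an auxiliary shift $u \ge 0$. On the event $\{Y \ge s\}$ we have $Y + u \ge s + u > 0$, and therefore $(Y+u)^2 \ge (s+u)^2$. Markov's inequality applied to the nonnegative variable $(Y+u)^2$ then gives
\[
\Pr(Y \ge s) \le \frac{\mathbf{E}[(Y+u)^2]}{(s+u)^2} = \frac{\sigma^2 + u^2}{(s+u)^2},
\]
where the equality uses $\mathbf{E}[Y]=0$ to kill the cross term.

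It remains to choose $u$ so as to minimize the right-hand side. The natural choice is $u = \sigma^2/s$, which is admissible since $s > 0$. With this choice the numerator is $\sigma^2 + \sigma^4/s^2 = \sigma^2(\sigma^2+s^2)/s^2$ and the denominator is $(s^2+\sigma^2)^2/s^2$. Their ratio is exactly $\frac{\sigma^2}{\sigma^2+s^2}$, and plugging this back into the first reduction yields the stated inequality.

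The only genuinely nontrivial step is introducing and optimizing the shift $u$. Plain Chebyshev would give only the weaker $\sigma^2/s^2$, which can even exceed $1$, so the shift is what produces the sharp one-sided constant. The remaining points are routine bookkeeping: the strict-versus-nonstrict inequality ($\Pr(Y > s) \le \Pr(Y \ge s)$) and the degenerate case $\sigma = 0$, where the bound reads $\Pr \ge 1$ and holds trivially because $X = \mu$ almost surely and $\lambda < 0$.
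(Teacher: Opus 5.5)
Your proof is correct. Reducing to the upper tail of $Y=\mu-X$ and applying Markov to $(Y+u)^2$ with $u=\sigma^2/s$ is the standard derivation of Cantelli's inequality, and the $\sigma=0$ case needs no separate treatment because $u=0$ is admissible. The paper states this as a known Fact without proof (it is used only in a footnote of the CTS-G lower-bound argument), so there is no competing argument to compare against.
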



    

\section{Proofs for Theorem~\ref{thm:upperCTS-G}}\label{pfCTS2}
\subsection{Proof of Lemma~\ref{lm:1}}\label{sec:lm:1}
\setcounter{lemma}{0}
\begin{lemma}\label{lm:1}
    In any round $t \geq \max\{\sqrt{m}, 4\}$,  the optimism part in CTS-G satisfies that
    \begin{equation}
    \begin{aligned}
        &\mathbf{E}  \left[  \sum_{t=\max\{\sqrt{m},4\}}^T  \left( \sum_{a \in A_t^*} r_{a} - \sum_{a \in A_t}w_{a,t}  \right)  \right] \\
        &\leq 8\sqrt{3\gamma}\Phi(-\sqrt{4/\gamma})^{-1}m\ln T \sqrt{NT}.        
    \end{aligned}
    \end{equation}
\end{lemma}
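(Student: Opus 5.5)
Fix a round $t\ge t'$ and condition on $\mathcal{F}_{t-1}$ and the realized $\Theta_t$. Write $\sigma_{a,t}^2:=\frac{\gamma m\ln t}{n_{a,t}+1}$, $V(\bm w):=\max_{A\in\Theta_t}\sum_{a\in A}w_{a}$ (so $\sum_{a\in A_t}w_{a,t}=V(\bm w_t)$), and $V^*:=\sum_{a\in A_t^*}r_a$. The plan follows the worst-case TS argument of \cite{russo2019worst}. First reduce to the optimism gap $V^*-\mathbf{E}_{\Theta_t}[V(\bm w_t)]\le (V^*-\mathbf{E}_{\Theta_t}[V(\bm w_t)])^+$. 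Then prove the per-round inequality
\begin{equation}
\bigl(V^*-\mathbf{E}_{\Theta_t}[V(\bm w_t)]\bigr)^+\mathbf{1}[\mathcal{E}_t]\le \Phi(-\sqrt{4/\gamma})^{-1}\,\mathbf{E}_{\Theta_t}\bigl[(V(\bm w_t)-\mathbf{E}_{\Theta_t}[V(\bm w_t)])^+\bigr].
\end{equation}
For this, let $p:=\Pr_{\Theta_t}(V(\bm w_t)\ge V^*)$. Since $V$ is a deterministic function of $\bm w_t$, whenever $V^*>\mathbf{E}[V]$ we get $\mathbf{E}[(V-\mathbf{E}V)^+]\ge \mathbf{E}[(V-\mathbf{E}V)\mathbf{1}[V\ge V^*]]\ge p\,(V^*-\mathbf{E}V)$. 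It therefore suffices to lower bound $p$ by a constant.

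To lower bound $p$, note that $V(\bm w_t)\ge\sum_{a\in A_t^*}w_{a,t}$. On $\mathcal{E}_t$ we have $\sum_{a\in A_t^*}r_a\le\sum_{a\in A_t^*}\hat r_{a}+\sum_{a\in A_t^*}\sqrt{3\ln(Nt)/(n_{a,t}+1)}$. By Cauchy--Schwarz this deviation term is at most $\sqrt{m\sum_{a\in A_t^*}c\ln t/(n_{a,t}+1)}$, using $t\ge t'$ to absorb $\ln N$ into a multiple of $\ln t$; this is the step where the $\sqrt m$ and the restriction $t\ge\max\{\sqrt m,4\}$ enter, and I would set constants so the bound reads $\sqrt{4/\gamma}\cdot\bigl(\sum_{a\in A_t^*}\sigma_{a,t}^2\bigr)^{1/2}$. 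Because $\sum_{a\in A_t^*}(w_{a,t}-\hat r_a)$ is Gaussian with variance $\sum_{a\in A_t^*}\sigma_{a,t}^2$ (independence across arms), we obtain $p\ge\Phi(-\sqrt{4/\gamma})$. On $\overline{\mathcal{E}_t}$ I bound the gap crudely by $m$ plus $\mathbf{E}|w|$ terms. Summing $\Pr(\overline{\mathcal{E}_t})\lesssim 1/t^2$ over $t$ gives $O(m)$, which I absorb into the stated constant or the lower-order terms.

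Next comes the ghost-sample step. With $\tilde{\bm w}_t$ an independent copy, Jensen gives $\mathbf{E}[(V(\bm w)-\mathbf{E}V)^+]\le\mathbf{E}[(V(\bm w)-V(\tilde{\bm w}))^+]$. Since $V(\tilde{\bm w})\ge\sum_{a\in A_t}\tilde w_a$ with $A_t$ the maximizer for $\bm w$, this is at most $\mathbf{E}\bigl|\sum_{a\in A_t}(w_{a,t}-\tilde w_{a,t})\bigr|$. The main obstacle is that $A_t$ depends on $\bm w_t$, so the sum is not a plain Gaussian. I would handle it by bounding $\sum_{a\in A_t}|w_{a,t}-\tilde w_{a,t}|$ arm by arm: each $(w_{a,t}-\hat r_a)/\sigma_{a,t}$ is standard normal, so $\max_a$ of the normalized noises over $[N]$ is at most $\sqrt{2\ln(2N)}$ in expectation by Fact~\ref{lm:maximal4subgaussian}, with a log factor absorbed by $\ln T$. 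This gives the bound $\mathbf{E}\sum_{a\in A_t}\sigma_{a,t}\,O(\sqrt{\ln T})$. Alternatively, one can use that the selection depends only on $\bm w$ while $\tilde{\bm w}$ is independent, so $\mathbf{E}|\sum_{a\in A_t}(\tilde w_a-\hat r_a)|\le\mathbf{E}(\sum_{a\in A_t}\sigma_{a,t}^2)^{1/2}$, and control the $\bm w$ half by the maximal inequality.

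Finally, sum over rounds. Using $\sigma_{a,t}\le\sqrt{\gamma m\ln T/(n_{a,t}+1)}$, the elliptical-potential count gives $\sum_t\sum_{a\in A_t}(n_{a,t}+1)^{-1/2}\le\sum_a 2\sqrt{n_{a,T}}\le2\sqrt{N mT}$. Alternatively, pass $\sum_t(\sum_{a\in A_t}\sigma_{a,t}^2)^{1/2}\le\sqrt{T\sum_t\sum_{a\in A_t}\sigma^2_{a,t}}\le\sqrt{T\gamma m\ln T\cdot N\ln T}$. Either route yields $O(\sqrt\gamma\, m\ln T\sqrt{NT})$ after multiplying by the constant $\Phi(-\sqrt{4/\gamma})^{-1}$, and tracking $\sqrt3$ from $\mathcal{E}_t$ gives the stated constant $8\sqrt{3\gamma}$.
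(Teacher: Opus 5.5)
Your overall architecture matches the paper's three steps: a Markov-type optimism bound, the ghost-sample Jensen step, and H\"older combined with the pathwise count $\sum_t\sum_{a\in A_t}(n_{a,t}+1)^{-1/2}\le 2\sqrt{mNT}$. The gap is in the anti-concentration step. On $\mathcal{E}_t$ the confidence radius is $\sqrt{3\ln(Nt)/(n_{a,t}+1)}$, and the hypothesis $t\ge\max\{\sqrt m,4\}$ gives no control of $\ln N$ relative to $\ln t$. For example, with $t=4$ and $N$ comparable to $T$, the ratio $\ln(Nt)/\ln t$ is of order $\ln T$. Cauchy--Schwarz then only yields $p\ge\Phi\bigl(-\sqrt{3\ln(Nt)/(\gamma\ln t)}\bigr)$, which can be polynomially small in $T$. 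So $1/p$ is not a constant, and the claimed $p\ge\Phi(-\sqrt{4/\gamma})$ does not follow.

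The paper does not use $\mathcal{E}_t$ here at all. It uses a separate event $\mathcal{H}_t$ that involves only the at most $m$ arms of $A_t^*$, with the $N$-free radius $\sqrt{4\ln t/(n_{a,t}+1)}$. Hoeffding plus a union bound over $a\in A_t^*$ and sample counts $s<t$ gives failure probability at most $2mt\cdot t^{-4}\le 2/t\le 1/2$. This is exactly where $m\le t^2$ (i.e.\ $t\ge\sqrt m$) and $t\ge 4$ are used. The factor $\Pr(\mathcal{H}_t)\ge 1/2$ is the $2$ in $2\Phi(-\sqrt{4/\gamma})^{-1}$, and hence the source of the $8$. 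Because you condition on $\mathcal{F}_{t-1}$, $\mathcal{H}_t$ is a deterministic event in your setup, so you should split on it instead:
\begin{itemize}
\item on $\mathcal{H}_t$, your argument gives $\Phi(-\sqrt{4/\gamma})^{-1}$;
\item on $\overline{\mathcal{H}_t}$, the gap is at most $m$ (since $\mathbf{E}[V\mid\mathcal{F}_{t-1}]\ge\sum_{a\in A_t^*}\hat r_{a,n_{a,t}}\ge 0$), contributing $\sum_{t\ge t'} 2m^2/t^3=O(m)$.
\end{itemize}

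A second, smaller issue is in the ghost-sample bound. A per-round maximum over $[N]$ does not directly give $\mathbf{E}\bigl[\sum_{a\in A_t}\sigma_{a,t}\bigr]\cdot O(\sqrt{\ln T})$. The reason is that $A_t$ is selected by the same noise, so the normalized noises $\zeta_{a,t}$ and the played set are dependent. The paper pulls a single $\max_{t\in[T],a\in[N]}|\zeta_{a,t}|$ out of the whole double sum. Since the count bound above holds pathwise, the expectation factorizes. Fact~\ref{lm:maximal4subgaussian} over $NT$ standard Gaussians then gives $\sqrt{2\ln(2NT)}\le\sqrt{6\ln T}$. Your alternative split also works: handle the $\tilde w$ half by independence and the $w$ half with this global maximum. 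Relatedly, the $\sqrt3$ in the constant does not come from $\mathcal{E}_t$. It equals $\sqrt2\cdot\sqrt6=2\sqrt3$, where $\sqrt2$ comes from $\mathrm{Var}(w_{a,t}-\tilde w_{a,t})=2\sigma_{a,t}^2$ and $\sqrt6$ from the maximal inequality.
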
 

\begin{proof}
For each  $a \in [N]$, we let $\tilde{w}_{a, t} \sim \mathcal{N} \left(\hat{r}_{a, n_{a,t}}, \frac{m \gamma \ln t}{n_{a, t}+1} \right)$ be an independent copy of $w_{a,t}$.  Let $(\cdot)^{+} := \max \left\{ \cdot, 0 \right\}$. Let $\bf{w}$ collect all the Gaussian random variables $w_{a,t}$ for all $a \in [N]$. 
Recall that $\mathbf{E}_{\Theta_t}[\cdot]:=\mathbf{E}[\cdot \mid \Theta_t]$. There are three steps for the proofs.

    \textbf{Step 1}: we show that in each round $t\geq \max\{\sqrt{m},4\}$, we have
\begin{equation}\label{eq:Bing1}
\begin{aligned}
    &\mathbf{E}_{\Theta_t} \left[\sum\limits_{a \in A_t^*} r_{a} -  \sum\limits_{a \in A_t} {w}_{a,t}\right]  \\
    &\leq  2\Phi(-\sqrt{4/\gamma})^{-1} \\
    &\qquad\cdot\mathbf{E}_{\Theta_t}\left[  \left(  \sum\limits_{a\in A_t} {w}_{a,t} - \mathbf{E}_{\Theta_t}\left[\sum\limits_{a\in A_t} {w}_{a,t} \right]  \right)^+ \right].
    \end{aligned}
\end{equation}
    
    \textbf{Step 2}: we further bound the expectation term in the RHS of (\ref{eq:Bing1}) as follows.
\begin{equation}\label{eq:22222}
\begin{aligned}
    &\mathbf{E}_{\Theta_t}\left[  \left(  \sum\limits_{a\in A_t} w_{a,t} - \mathbf{E}_{\Theta_t}\left[\sum\limits_{a\in A_t} {w}_{a,t} \right]  \right)^+  \right] \\
    &\leq   \mathbf{E}_{\Theta_t} \left[ \left| \sum\limits_{a\in A_t} w_{a,t} - \sum\limits_{a\in A_t} \tilde{w}_{a,t} \right|  \right].      
\end{aligned}
\end{equation}

\textbf{Step 3}: summing over $T$, we show that (\ref{eq:22222}) is upper bounded as follows.
\begin{equation}
    \mathbf{E}\left[ \sum_{t=1}^T\left| \sum\limits_{a\in A_t} w_{a,t} - \sum\limits_{a\in A_t} \tilde{w}_{a,t} \right|  \right] \leq 4m\ln T \sqrt{3\gamma NT}
\end{equation}

Combining these three steps, we have
\begin{equation}
\begin{aligned}
    &\mathbf{E}  \left[  \sum_{t=\max\{\sqrt{m},4\}}^T  \left( \sum_{a \in A_t^*} r_{a} - \sum_{a \in A_t}w_{a,t}  \right)  \right] \\
    &\leq 2\Phi(-\sqrt{4/\gamma})^{-1}\mathbf{E}\left[ \sum_{t=\max\{\sqrt{m},4\}}^T  \left|\sum\limits_{a\in A_t} w_{a,t} - \sum\limits_{a\in A_t} \tilde{w}_{a,t} \right| \right]\\
    &\leq 2\Phi(-\sqrt{4/\gamma})^{-1} \mathbf{E}\left[ \sum_{t=1}^T\left| \sum\limits_{a\in A_t} w_{a,t} - \sum\limits_{a\in A_t} \tilde{w}_{a,t} \right|\right]\\
    &\leq 8\sqrt{3\gamma}\Phi(-\sqrt{4/\gamma})^{-1} m\ln T \sqrt{NT}.
\end{aligned}
\end{equation}

Now, we give the details for these three steps.

Let $\alpha := \mathbf{E}_{\Theta_t} \left[\sum_{a \in A_t^*} r_{a} -  \sum_{a \in A_t} w_{a,t}\right]$.

\paragraph{Step 1 proof.} 

If $\alpha = \mathbf{E}_{\Theta_t} \left[\sum_{a \in A_t^*} r_{a} -  \sum_{a \in A_t} w_{a,t}\right] \leq 0$, the proof is trivial as the RHS of (\ref{eq:Bing1}) is non-negative. Note that $2\Phi(-\sqrt{4/\gamma})^{-1} < + \infty$
 
For the case where $\alpha > 0$
, we view $\left(  \sum_{a\in A_t} w_{a,t} - \mathbf{E}_{\Theta_t}\left[\sum_{a\in A_t} w_{a,t} \right]  \right)^+ \ge 0$ as a non-negative random variable and use
Markov's inequality. We have 
\begin{equation}
\begin{aligned}
 &\mathbf{E}_{\Theta_t}\left[  \left(  \sum_{a\in A_t} w_{a,t} - \mathbf{E}_{\Theta_t}\left[\sum_{a\in A_t} w_{a,t} \right]  \right)^+ \right] \\
 &\ge  \alpha \Pr_{\Theta_t} \left(  \sum_{a\in A_t} w_{a,t} - \mathbf{E}_{\Theta_t}\left[\sum_{a\in A_t} w_{a,t} \right]     \geq \alpha  \right),    
\end{aligned}
\end{equation}
which gives
\begin{equation}
\resizebox{1\hsize}{!}{$
\begin{aligned}
   \alpha & \leq  \frac{\mathbf{E}_{\Theta_t}\left[  \left(  \sum\limits_{a\in A_t} w_{a,t} - \mathbf{E}_{\Theta_t}\left[\sum\limits_{a\in A_t} w_{a,t} \right]  \right)^+ \right]}{\Pr_{\Theta_t}\left(  \sum\limits_{a\in A_t} w_{a,t} - \mathbf{E}_{\Theta_t}\left[\sum\limits_{a\in A_t} w_{a,t}  \right]    \geq \alpha  \right)} \\ 
   & =  \frac{\mathbf{E}_{\Theta_t}\left[  \left(  \sum\limits_{a\in A_t} w_{a,t} - \mathbf{E}_{\Theta_t}\left[\sum\limits_{a\in A_t} w_{a,t} \right]  \right)^+ \right]}{\Pr_{\Theta_t}\left(  \sum\limits_{a\in A_t} w_{a,t} - \mathbf{E}_{\Theta_t}\left[\sum\limits_{a\in A_t} w_{a,t}  \right]    \geq \mathbf{E}_{\Theta_t} \left[\sum\limits_{a \in A_t^*} r_{a} -  \sum\limits_{a \in A_t}w_{a,t}\right]  \right)}\\
   & = \frac{\mathbf{E}_{\Theta_t}\left[  \left(  \sum\limits_{a\in A_t} w_{a,t} - \mathbf{E}_{\Theta_t}\left[\sum\limits_{a\in A_t} w_{a,t} \right]  \right)^+ \right]}{\Pr_{\Theta_t}\left(  \sum\limits_{a\in A_t} w_{a,t}   \geq  \sum\limits_{a \in A_t^*} r_{a} \right)} \overset{\rm (a)}{\le}  \frac{\mathbf{E}_{\Theta_t}\left[  \left(  \sum\limits_{a\in A_t} w_{a,t} - \mathbf{E}_{\Theta_t}\left[\sum\limits_{a\in A_t} w_{a,t} \right]  \right)^+ \right]}{\Pr_{\Theta_t}\left(  \sum\limits_{a\in A^*_t} w_{a,t}     \geq  \sum\limits_{a \in A_t^*} r_{a} \right)}\\
     & \overset{\rm (b)}{\le}   2 \Phi(-\sqrt{4/\gamma})^{-1} \cdot \mathbf{E}_{\Theta_t}\left[  \left(  \sum\limits_{a\in A_t} w_{a,t} - \mathbf{E}_{\Theta_t}\left[\sum\limits_{a\in A_t} w_{a,t} \right]  \right)^+ \right],
\end{aligned}
$}
\end{equation}
where step (a) is due to that $A_t$ is the optimal super arm, and thus, we have
$\sum_{a \in A_t^*}w_{a,t} \leq \sum_{a \in A_t} w_{a,t}$ and step (b) uses the result shown in Lemma~\ref{lm:anti1}.

\paragraph{Step 2 proof.}
Recall that $w_{a,t}$ and $\tilde{w}_{a,t}$  are i.i.d. according to $\mathcal{N} \left(\hat{r}_{a, n_{a,t}}, \frac{m \gamma \ln t}{n_{a,t}+1} \right)$, and $A_t$ is the optimal super arm based on $\Theta_t$ and $\bf{w}$. We have $ \mathbf{E}_{\Theta_t}\left[\sum_{a\in A_t} w_{a,t} \right] = \mathbf{E}_{\Theta_t}\left[\max_{A \in \Theta_t}\sum_{a\in A} w_{a,t} \right] = \mathbf{E}_{\Theta_t}\left[\max_{A \in \Theta_t}\sum_{a\in A} \tilde{w}_{a,t} \right] \ge \mathbf{E}_{\Theta_t}\left[\sum_{a\in A_t} \tilde{w}_{a,t} \mid A_t \right] = \mathbf{E}_{\Theta_t}\left[\sum_{a\in A_t} \tilde{w}_{a,t} \mid A_t , \bf{w}\right]$. Then, we have

\begin{equation}
\begin{aligned}
    &\mathbf{E}_{\Theta_t}\left[  \left(  \sum_{a\in A_t} w_{a,t} - \mathbf{E}_{\Theta_t}\left[\sum_{a\in A_t} w_{a,t} \right]  \right)^+  \right] \\
    & \le  \mathbf{E}_{\Theta_t}\left[  \left(  \sum_{a\in A_t} w_{a,t} - \mathbf{E}_{\Theta_t}\left[\sum_{a\in A_t} \tilde{w}_{a,t} \mid A_t \right]  \right)^+  \right]\\
     &=  \mathbf{E}_{\Theta_t}\left[  \left(  \sum_{a\in A_t} w_{a,t} - \mathbf{E}_{\Theta_t}\left[\sum_{a\in A_t} \tilde{w}_{a,t} \mid A_t, \bf{w} \right]  \right)^+  \right]\\
       & =  \mathbf{E}_{\Theta_t}\left[  \left(  \mathbf{E}_{\Theta_t}\left[\left(\sum_{a\in A_t} w_{a,t} - \sum_{a\in A_t} \tilde{w}_{a,t}\right) \mid A_t, \bf{w} \right]  \right)^+  \right]\\
          &\le \mathbf{E}_{\Theta_t}\left[  \left|  \mathbf{E}_{\Theta_t}\left[\left(\sum_{a\in A_t} w_{a,t} - \sum_{a\in A_t} \tilde{w}_{a,t}\right) \mid A_t, \bf{w} \right]  \right|  \right]\\
    &\leq  \mathbf{E}_{\Theta_t}\left[ \mathbf{E}_{\Theta_t} \left[\left|\sum_{a\in A_t} w_{a,t} - \sum_{a\in A_t} \tilde{w}_{a,t}     \right| \mid A_t, \bf{w} \right]\right]\\
    &=  \mathbf{E}_{\Theta_t} \left[ \left| \sum_{a\in A_t} w_{a,t} - \sum_{a\in A_t} \tilde{w}_{a,t} \right|  \right],\\
\end{aligned}
\end{equation}
where the last inequality is due to Jensen's inequality.

\paragraph{Step 3 proof.}
Since $w_{a,t} -  \tilde{w}_{a,t} \sim \mathcal{N}\left(0, \frac{2\gamma m \ln t}{n_{a,t}+1}\right)$, we can express $w_{a,t} -  \tilde{w}_{a,t}$ as $\sqrt{2}\zeta_{a,t} \delta_{a,t}$, where $\zeta_{a,t} \sim \mathcal{N}(0,1)$ and $\delta_{a,t} = \sqrt{\frac{\gamma m \ln t}{n_{a,t}+1}}$. Thus, we have
\begin{equation}
\begin{aligned}
      &\mathbf{E}\left[ \sum_{t=1}^T\left| \sum\limits_{a\in A_t} w_{a,t} - \sum\limits_{a\in A_t} \tilde{w}_{a,t} \right|  \right] \\
      &\leq \sqrt{2}\mathbf{E}\left[\sum_{t=1}^T \sum_{a \in A_t} |\zeta_{a,t}\delta_{a,t}|\right] \\
      & \overset{\rm (a)}{\leq}\sqrt{2}\mathbf{E}\left[\max_{t \in [T], a \in [N]} |\zeta_{a,t}| \sum_{t=1}\sum_{a \in A_t} |\delta_{a,t}|\right] \\
      & = \sqrt{2} \mathbf{E}\left[\max_{t \in [T], a \in [N]} |\zeta_{a,t}| \sum_{t=1}^T\sum_{a \in A_t} \sqrt{\frac{\gamma m \ln t}{n_{a,t}+1}}\right]\\
      & \overset{\rm (b)}{\leq} 2m\sqrt{2 \gamma NT \ln T} \mathbf{E}\left[\max_{t \in [T], a \in [N]} \zeta_{a,t}\right] \\
      & \overset{\rm (c)}{\leq} 2m\sqrt{2 \gamma NT \ln T} \cdot \sqrt{6\ln T}\\
      & \leq 4m\ln T \sqrt{3\gamma NT}.
\end{aligned}    
\end{equation}
where step (a) is due to Hölder's inequality. Step (b) is due to Lemma~\ref{lm:sumofinverse} such that $\sum_{t=1}^T\sum_{a \in A_t} \sqrt{\frac{1}{n_{a,t}+1}} \leq 2\sqrt{mNT}$. Step (c) is due to the maximal inequality for Gaussian variables~(Fact~\ref{lm:maximal4subgaussian}) such that $\mathbf{E}\left[\max_{t \in [T], a \in [N]} \zeta_{a,t}\right]\leq \sqrt{2\ln 2NT}\leq \sqrt{6\ln T}$ because $2\leq N \leq T$.
\end{proof}

\subsection{Proof of Lemma~\ref{lm:newdevi}}\label{sec:newdevi}
\begin{lemma}\label{lm:newdevi}
     Let $\mathcal{E}_t:=\left\{\forall{a \in [N]}, \hat{r}_{a,n_{a,t}} - r_a \leq \sqrt{\frac{3\ln Nt}{n_{a,t} + 1}}\right\}$.
     In CTS-G, the regret of the  deviation part is
     $$
     \begin{aligned}
         \mathbf{E} \left[ \sum_{t=1}^T \left( \sum_{a \in A_t} w_{a,t} - \sum_{a \in A_t} r_{a} \right)\mathbf{1}[\mathcal{E}_t] \right]    &\leq 2m\ln T \sqrt{6\gamma NT} \\
         &+ 2\sqrt{6mNT\ln T}.
     \end{aligned}$$
\end{lemma}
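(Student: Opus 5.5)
We split each summand into a perturbation part and an estimation part:
\begin{equation}
\sum_{a \in A_t} (w_{a,t} - r_a)\mathbf{1}[\mathcal{E}_t] = \sum_{a \in A_t} (w_{a,t} - \hat{r}_{a,n_{a,t}})\mathbf{1}[\mathcal{E}_t] + \sum_{a \in A_t} (\hat{r}_{a,n_{a,t}} - r_a)\mathbf{1}[\mathcal{E}_t].
\end{equation}
The two terms are bounded separately. The first produces the $2m\ln T\sqrt{6\gamma NT}$ term and the second produces the $2\sqrt{6mNT\ln T}$ term.

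\textbf{Estimation part.} On $\mathcal{E}_t$ we have $\hat{r}_{a,n_{a,t}} - r_a \le \sqrt{3\ln(Nt)/(n_{a,t}+1)}$ for every arm, and only this one-sided bound is needed. Using $\ln(Nt)\le 2\ln T$ (since $N\le T$), the contribution of round $t$ is at most $\sqrt{6\ln T}\sum_{a\in A_t}(n_{a,t}+1)^{-1/2}$. Summing over $t$ and applying Lemma~\ref{lm:sumofinverse}, which gives $\sum_t\sum_{a\in A_t}(n_{a,t}+1)^{-1/2}\le 2\sqrt{mNT}$, yields $2\sqrt{6mNT\ln T}$. This step holds pathwise, so the adversarial choice of $\Theta_t$ causes no difficulty.

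\textbf{Perturbation part.} The indicator is dropped by bounding the term in absolute value. Write $w_{a,t}-\hat{r}_{a,n_{a,t}}=\zeta_{a,t}\delta_{a,t}$, where $\zeta_{a,t}\sim\mathcal{N}(0,1)$ and $\delta_{a,t}=\sqrt{\gamma m\ln t/(n_{a,t}+1)}$. Here $A_t$ depends on the $\zeta_{a,t}$, so we cannot take expectations term by term. Instead we mirror Step~3 in the proof of Lemma~\ref{lm:1} and use H\"older's inequality:
\begin{equation}
\sum_t\sum_{a\in A_t}|\zeta_{a,t}|\delta_{a,t}\le \max_{t\in[T],a\in[N]}|\zeta_{a,t}|\cdot\sqrt{\gamma m\ln T}\sum_t\sum_{a\in A_t}(n_{a,t}+1)^{-1/2}.
\end{equation}
This is the step that needs care. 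The counts $n_{a,t}$ are random and correlated with the $\zeta$'s, but Lemma~\ref{lm:sumofinverse} gives the deterministic bound $2\sqrt{mNT}$. Hence only $\mathbf{E}[\max|\zeta_{a,t}|]$ remains.

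To bound it, we view the $\zeta_{a,t}$ as a fixed array of $NT$ standard Gaussians that is pre-sampled independently of the history. Fact~\ref{lm:maximal4subgaussian} then gives $\mathbf{E}[\max|\zeta_{a,t}|]\le\sqrt{2\ln(2NT)}\le\sqrt{6\ln T}$. Combining the pieces gives $2m\sqrt{\gamma NT\ln T}\cdot\sqrt{6\ln T}=2m\ln T\sqrt{6\gamma NT}$.

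Adding the two parts gives the claimed bound. The main obstacle is justifying the pre-sampled Gaussian coupling in the perturbation part. It is valid because the law of the algorithm is unchanged if the standardized noises are drawn upfront, and the max bound in Fact~\ref{lm:maximal4subgaussian} does not require independence.
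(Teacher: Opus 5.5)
Your proposal is correct and follows essentially the same route as the paper. It uses the same split into a perturbation term and an estimation term. It bounds the estimation term with the one-sided bound on $\mathcal{E}_t$, $\ln(Nt)\le 2\ln T$, and Lemma~\ref{lm:sumofinverse}, and it bounds the perturbation term with H\"older's inequality plus the Gaussian maximal inequality over the $NT$ standardized noises $\zeta_{a,t}$.

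Two of your steps are slightly more careful than the written proof. You bound the indicator term in absolute value, whereas the paper's step (a) simply drops the indicator from a signed sum. You also justify explicitly why the $\zeta_{a,t}$ can be treated as a pre-sampled array.
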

\begin{proof}
We can do decomposition as follows.
\begin{equation}\label{eq:lm12222}
\begin{aligned}
    \mathbf{E}&\left[\sum_{t=1}^T \left( \sum_{a\in A_t} w_{a,t} - \sum_{a\in A_t} {r}_{a} \right) \mathbf{1}[\mathcal{E}_t] \right]   \\
    = \mathbf{E}& \left[ \sum_{t=1}^T \left( \sum_{a\in A_t} w_{a,t} -\sum_{a\in A_t} \hat{r}_{a, n_{a,t}} \right.\right.\\
    &\left.\left.+ \sum_{a\in A_t} \hat{r}_{a, n_{a,t}} - \sum_{a\in A_t} r_a \right)\mathbf{1}[\mathcal{E}_t]  \right] \\
    = \mathbf{E}& \left[  \sum_{t=1}^T  \left(\sum_{a\in A_t} w_{a,t} -\sum_{a\in A_t} \hat{r}_{a, n_{a,t}}  \right)  \mathbf{1}[\mathcal{E}_t]\right] \\
    &\quad+  \mathbf{E} \left[\sum_{t=1}^T  \left( \sum_{a\in A_t} \hat{r}_{a, n_{a,t}} -\sum_{a\in A_t} r_{a}  \right)\mathbf{1}[\mathcal{E}_t]  \right]\\
    \overset{\rm (a)}{\leq}  \mathbf{E}& \left[  \sum_{t=1}^T  \sum_{a\in A_t}\left( w_{a,t} - \hat{r}_{a, n_{a,t}}  \right) \right] + \mathbf{E}\left[\sum_{t=1}^T\sum_{a \in A_t}  \sqrt{\frac{6\ln T}{n_{a,t}+1}}\right]\\
    \overset{\rm (b)}{\leq}   \mathbf{E}& \left[  \sum_{t=1}^T  \sum_{a\in A_t}\left( w_{a,t} - \hat{r}_{a, n_{a,t}}  \right) \right] + 2\sqrt{6 mNT\ln T},
\end{aligned}
\end{equation}
where step (a) is because event $\mathcal{E}_t$ is true and $\ln NT \leq 2\ln T$ because of $N\leq T$, and step (b) is due to Lemma~\ref{lm:sumofinverse} such that $\sum_{t=1}^T\sum_{a \in A_t}  \sqrt{\frac{1}{n_{a,t}+1}} \leq 2\sqrt{mNT}$.

We can represent each $w_{a,t}-\hat{r}_{a,n_{a,t}}$ by $\zeta_{a,t} \delta_{a,t}$, where $\zeta_{a,t} \sim \mathcal{N}(0,1)$ and $\delta_{a,t} = \sqrt{\frac{\gamma m \ln t}{n_{a,t}+1}}$. Then, we can bound the first term on the RHS of the above equation as follows:
\begin{equation}\label{eq:gapwr1}
\begin{aligned}
        &\mathbf{E}\left[\sum_{t=1}^T\sum_{a\in A_t}\left( w_{a,t} - \hat{r}_{a, n_{a,t}}  \right) \right]\\
        &\leq \mathbf{E}\left[\sum_{t=1}^T\sum_{a\in A_t}\zeta_{a,t} \delta_{a,t}  \right]\\
        & \overset{\rm (a)}{\leq} \mathbf{E}\left[\max_{t \in [T], a \in [N]}  |\zeta_{a,t}| \cdot  \sum_{t=1}^T\sum_{a\in A_t} |\delta_{a,t}| \right] \\
        & = \mathbf{E}\left[\max_{t \in [T], a \in [N]}  |\zeta_{a,t}|  \cdot \sum_{t=1}^T\sum_{a\in A_t} \sqrt{\frac{\gamma m \ln t}{n_{a,t}+1}}\right],\\
\end{aligned}
\end{equation}
where (a) is due to Hölder's inequality. By invoking Lemma~\ref{lm:sumofinverse} again, we have that
\begin{equation}\label{eq:gapwr2}
\begin{aligned}
     \sum_{t=1}^T\sum_{a\in A_t} \sqrt{\frac{\gamma m \ln t}{n_{a,t}+1}} &\leq \sqrt{\gamma m \ln T} \sum_{t=1}^T\sum_{a\in A_t} \sqrt{\frac{1}{n_{a,t}+1}} \\
     &\leq 2m \sqrt{\gamma NT \ln T }.
\end{aligned}
\end{equation}
Then, using the maximal inequality~(Fact~\ref{lm:maximal4subgaussian}), we have $\mathbf{E}\left[\max_{t \in [T], a \in [N]} |\zeta_{a,t}|\right]  \leq \sqrt{2\ln 2NT}\leq \sqrt{6\ln T}$, where the last inequality is due to that $2\leq N\leq T$.  Thus, we have 
\begin{equation}\label{eq:3333}
    \mathbf{E}\left[\sum_{t=1}^T\sum_{a\in A_t}\left(w_{a,t} - \hat{r}_{a,n_{a,t}} \right) \right] \leq 2m \ln T \sqrt{6\gamma NT}.
\end{equation}
Finally,  by substituting (\ref{eq:3333}) into (\ref{eq:lm12222}), we complete the proof.
\end{proof}

\subsection{Proof of Lemma~\ref{lm:concerntration}}\label{sec:concerntration}

\begin{lemma}\label{lm:concerntration}
The probability that event $\overline{\mathcal{E}_t}$ to happen satisfies that
    $$
        \sum_{t=1}^T \Pr(\overline{\mathcal{E}_t}) \leq \frac{\pi^2}{3}.
    $$
\end{lemma}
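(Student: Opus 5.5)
The plan is to prove, for each fixed round $t$, that $\Pr(\overline{\mathcal{E}_t}) \le 2/t^2$. Summing then gives $\sum_{t\ge 1} 2/t^2 = \pi^2/3$.

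The main complication is that $n_{a,t}$ is random and depends on past samples. I will therefore not apply Hoeffding directly to $\hat{r}_{a,n_{a,t}}$. Instead I use the standard reward-table (stack) device. For each arm $a$ and each $s\ge 1$, let $\hat{r}_{a,s}$ be the average of the first $s$ i.i.d. draws from $p_a$. Then $\hat{r}_{a,n_{a,t}}$ is this fixed-$s$ average evaluated at $s=n_{a,t}\in\{0,1,\dots,t-1\}$. A union bound over arms and over the possible values of $n_{a,t}$ gives
\begin{equation}
\Pr(\overline{\mathcal{E}_t}) \le \sum_{a\in[N]}\sum_{s=0}^{t-1}\Pr\left(|\hat{r}_{a,s}-r_a| > \sqrt{\tfrac{3\ln (Nt)}{s+1}}\right).
\end{equation}

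The case $s=0$ has to be handled separately, since then $\hat{r}_{a,0}=0$. The threshold is $\sqrt{3\ln(Nt)}$, which is at least $1$ once $Nt\ge 2$, and $|0-r_a|\le 1$ always. So for $t\ge 2$ (or $N\ge2$, which the paper assumes) the $s=0$ term vanishes.

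For $s\ge 1$, the rewards lie in $[0,1]$, so Hoeffding's inequality (a consequence of Fact~\ref{eq:hoeffding}) gives
\begin{equation}
\Pr\left(|\hat{r}_{a,s}-r_a|>\epsilon\right)\le 2e^{-2s\epsilon^2}, \qquad \epsilon^2=\tfrac{3\ln(Nt)}{s+1}.
\end{equation}
Since $s\ge1$, we have $2s/(s+1)\ge 1$. Hence $2s\epsilon^2 \ge 3\ln(Nt)$, and each term is at most $2(Nt)^{-3}$. Summing over the $N$ arms and the at most $t$ values of $s$ gives
\begin{equation}
\Pr(\overline{\mathcal{E}_t})\le 2Nt\,(Nt)^{-3} = \frac{2}{N^2t^2}\le \frac{2}{t^2}.
\end{equation}

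It remains to sum over $t$. This gives $\sum_{t}\Pr(\overline{\mathcal{E}_t})\le 2\sum_t t^{-2}=\pi^2/3$. The round $t=1$ is harmless: there all $n_{a,1}=0$, and the $s=0$ argument applies whenever $N\ge 2$. Even if a term were bounded by $1$, the constant here is loose enough to absorb it, since $2/N^2$ leaves slack.

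The only genuinely delicate step is justifying the union bound over $s$ despite the adaptive choice of $n_{a,t}$. The stack construction handles this, because the event $\{n_{a,t}=s,\ |\hat r_{a,n_{a,t}}-r_a|>\epsilon\}$ is contained in $\{|\hat r_{a,s}-r_a|>\epsilon\}$.
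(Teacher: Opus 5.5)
Your proposal is correct and follows essentially the same route as the paper. Both arguments union bound over arms and over the possible values $s$ of $n_{a,t}$. In both, the $s=0$ term vanishes because the threshold $\sqrt{3\ln(Nt)}$ exceeds $1$, and Hoeffding with $s+1\le 2s$ bounds each remaining term by $2(Nt)^{-3}$, which sums to $\pi^2/(3N^2)\le\pi^2/3$. Your explicit reward-table argument for taking the union bound over the random count $n_{a,t}$ is a welcome addition that the paper leaves implicit.
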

\begin{proof}

By a union bound and Hoeffding's inequality, we have that
\begin{equation}
\begin{aligned}
      &\sum_{t = 1}^T \Pr\left(\exists {a \in [N]}: |r_a - \hat{r}_{a, n_{a,t}}| > \sqrt{\frac{ 3 \ln Nt}{n _{a,t} + 1}}  \right) \\
    \leq \  &\sum_{t = 1}^T\sum_{a \in [N]}\sum_{s=0}^{t-1} \Pr\left(|\hat{r}_{a,s} - r_a| > \sqrt{\frac{3\ln Nt}{s + 1}} \right)\\
     = \  & \sum_{a \in [N]} \sum_{t = 1}^T \left( \Pr\left(r_a > \sqrt{{3\ln Nt}} \right) \right.\\
     &\left.\quad+  \sum_{s=1}^{t-1} \Pr\left(|\hat{r}_{a,s} - r_a| > \sqrt{\frac{3\ln Nt}{s + 1}} \right) \right)\\
     \overset{\rm (a)}{\leq} \  & \sum_{a \in [N]} \left( 0 + \sum_{t = 1}^T\sum_{s=1}^{t-1} \Pr\left(|\hat{r}_{a,s} - r_a| > \sqrt{\frac{3\ln Nt}{2s}} \right) \right)\\
    \leq  \  &N\sum_{t=1}^{\infty} \sum_{s=1}^{t-1} \frac{2}{(Nt)^3} = \frac{\pi^2}{3N^2},
\end{aligned}
\end{equation}
where step (a) is due to $r_a \in [0,1], \forall a \in [N]$ and $3\ln Nt > 1$ because $N \geq 2$, and that $s+1 \leq 2s$ for any $s \geq 1$. 

\end{proof}

\subsection{Proof of Lemma~\ref{lm:anti1}}

\begin{lemma}\label{lm:anti1}
    In each round $t \geq \max\{\sqrt{m},4\}$,  given any $\Theta_t$, we have 
    \begin{equation}
      \frac{1}{ \Pr_{\Theta_t}\left(  \sum_{a\in A^*_t} w_{a,t}     \geq  \sum_{a \in A_t^*} r_{a} \right)}\leq 2\Phi\left(-\sqrt{{4}/{\gamma}}\right)^{-1},
    \end{equation}
    where $\Phi(\cdot)$ is the cdf of the standard Gaussian distribution.   

\end{lemma}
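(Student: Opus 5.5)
The plan is to condition on the history, use the fact that the sampled sum over $A_t^*$ is Gaussian given the history, and lower bound its upper-tail probability by a standard normal tail on a high-probability concentration event. The factor $2$ in the bound will come from showing that this concentration event has probability at least $1/2$.

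Fix $\Theta_t$ and write $S_t:=\sum_{a\in A_t^*} w_{a,t}$. I will treat $A_t^*$ as determined by $\Theta_t$ and not influenced by the reward history, i.e. an oblivious adversary. This is the interpretation implicit in the pointwise definition of $\mathbf{E}_{\Theta_t}$, and the argument depends on it.

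Conditionally on $\mathcal{F}_{t-1}$, the samples $w_{a,t}$ are independent Gaussians. Hence $S_t$ is Gaussian with mean $\sum_{a\in A_t^*}\hat r_{a,n_{a,t}}$ and variance
\begin{equation}
\sigma_t^2=\gamma m\ln t\sum_{a\in A_t^*}\frac{1}{n_{a,t}+1}.
\end{equation}
Define the one-sided event
\begin{equation}
\mathcal{G}_t:=\left\{\forall a\in A_t^*:\ r_a-\hat r_{a,n_{a,t}}\le \sqrt{\tfrac{4\ln t}{n_{a,t}+1}}\right\}.
\end{equation}
On $\mathcal{G}_t$, Cauchy--Schwarz (using $|A_t^*|\le m$) gives
\begin{equation}
\sum_{a\in A_t^*}(r_a-\hat r_{a,n_{a,t}})\le \sqrt{m\sum_{a\in A_t^*}\frac{4\ln t}{n_{a,t}+1}}=\sqrt{4/\gamma}\,\sigma_t.
\end{equation}
Therefore, on $\mathcal{G}_t$,
\begin{equation}
\Pr\Big(S_t\ge \textstyle\sum_{a\in A_t^*} r_a \,\Big|\, \mathcal{F}_{t-1},\Theta_t\Big)\ge \Pr\big(Z\ge \sqrt{4/\gamma}\big)=\Phi(-\sqrt{4/\gamma}),\qquad Z\sim\mathcal{N}(0,1).
\end{equation}
This is where the factor $m$ in the CTS-G variance is used: it absorbs the $\sqrt{m}$ lost to Cauchy--Schwarz. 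Taking expectation over the history yields
\begin{equation}
\Pr_{\Theta_t}\Big(S_t\ge\textstyle\sum_{a\in A_t^*} r_a\Big)\ge \Pr_{\Theta_t}(\mathcal{G}_t)\,\Phi(-\sqrt{4/\gamma}).
\end{equation}

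It remains to show $\Pr_{\Theta_t}(\mathcal{G}_t)\ge 1/2$ for $t\ge\max\{\sqrt m,4\}$. The difficulty is that $n_{a,t}$ is random, so I take a union bound over the possible values $s=n_{a,t}\in\{0,\dots,t-1\}$ and over the at most $m$ arms of $A_t^*$.

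\textbf{Case $s=0$.} Here $\hat r_{a,0}=0$ and $r_a\le 1\le\sqrt{4\ln t}$ because $t\ge 4$, so the deviation event is impossible.

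\textbf{Case $s\ge1$.} Hoeffding's inequality gives
\begin{equation}
\Pr\Big(r_a-\hat r_{a,s}>\sqrt{\tfrac{4\ln t}{s+1}}\Big)\le \exp\!\Big(-\tfrac{8s\ln t}{s+1}\Big)\le t^{-4},
\end{equation}
using $s/(s+1)\ge 1/2$.

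Summing over at most $t$ values of $s$ and at most $m$ arms gives $\Pr(\overline{\mathcal{G}_t})\le m\,t\cdot t^{-4}=m t^{-3}$. Since $t\ge\sqrt m$ implies $m\le t^2$, this is at most $t^{-1}\le 1/4\le 1/2$. This is exactly where both conditions $t\ge\sqrt m$ and $t\ge 4$ enter.

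Combining the two parts gives $\Pr_{\Theta_t}(S_t\ge \sum_{a\in A_t^*} r_a)\ge \tfrac12\Phi(-\sqrt{4/\gamma})$. Taking reciprocals gives the claim.

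The main obstacle is the step separating the concentration event from $\Theta_t$. One must justify that $A_t^*$ is fixed once we condition on $\Theta_t$, so that the union bound involves only $m$ arms rather than $N$. If $\Theta_t$ could depend on the history, I would instead union bound over all $N$ arms, which requires $t$ large relative to $N$, or a larger constant in the threshold.
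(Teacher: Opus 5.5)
Your proposal is correct and follows essentially the same route as the paper. Both arguments build a concentration event on the arms of $A_t^*$ with threshold $\sqrt{4\ln t/(n_{a,t}+1)}$, show it has probability at least $1/2$ via a union bound over $s$, Hoeffding with $s+1\le 2s$, and $t\ge\max\{\sqrt m,4\}$, and then apply Cauchy--Schwarz and Gaussian standardization to reach $\Phi(-\sqrt{4/\gamma})$. Your write-up is slightly cleaner than the paper's in two places: the paper's $\mathcal{H}_t$ is written as a sum-level event with $\sqrt{\sum}$ yet bounded through per-arm deviations, and its conditioning step is looser than your argument via $\mathcal{F}_{t-1}$-measurability. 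Your oblivious-adversary caveat also makes explicit an assumption the paper leaves implicit. Note, though, that under an adaptive adversary the real obstruction is conditioning on $\Theta_t$ itself, which a union bound over all $N$ arms would not repair.
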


\begin{proof}
Given $\Theta_t$, $A_t^*$ is determined.
Define  $\mathcal{H}_t:= \left\{\sum_{a \in A^*} r_a - \hat{r}_{a, n_{a,t}} \leq \sqrt{\sum_{a \in A^*}\frac{4\ln t}{n_{a,t}+1}}\right\}$.
Since $t \geq \max\{\sqrt{m},4\}$, we have that

\begin{equation}\label{eq:prht}
    \begin{aligned}
        \Pr_{\Theta_t}\left( \mathcal{H}_t  \right) & \ge  1-  \sum\limits_{a \in A_t^*} \sum\limits_{s_a = 0}^{t-1} \Pr_{\Theta_t}\left( |r_a - \hat{r}_{a, s_a}| \ge \sqrt{\frac{4\ln t}{s_a+1}}  \right)  \\
     &  =  1 - \sum\limits_{a \in A_t^*} \sum\limits_{s_a = 1}^{t-1} \Pr_{\Theta_t}\left( |r_a - \hat{r}_{a, s_a}| \ge \sqrt{\frac{4\ln t}{s_a+1}}  \right)  \\
       &  \ge   1 - \sum\limits_{a \in A_t^*} \sum\limits_{s_a = 1}^{t-1} \Pr_{\Theta_t}\left( |r_a - \hat{r}_{a, s_a}| \ge \sqrt{\frac{4\ln t}{2s_a}}  \right)  \\
       & \ge  1-  m t \cdot 2 \cdot  e^{-2  \cdot s_a \cdot  4 \ln t/(2s_a)} \\
       & =  1 - \frac{2mt}{t^4}\ \geq  1- \frac{2}{t}  \ge  0.5\quad.
    \end{aligned}
\end{equation}

We have
\begin{equation}
\resizebox{1\hsize}{!}{$
\begin{aligned}
    &\Pr_{\Theta_t}\left(  \sum_{a\in A^*_t} w_{a,t}     \geq  \sum_{a \in A_t^*} r_{a}  \right) \\
    & \ge \Pr_{\Theta_t}\left(  \sum_{a\in A^*_t} w_{a,t}     \geq  \sum_{a \in A_t^*} r_{a} , \mathcal{H}_t \right) \\
    & =    \Pr_{\Theta_t}\left( \sum_{a\in A^*_t} w_{a,t}  - \hat{r}_{a, n_{a,t}}    \geq  \sum_{a \in A_t^*} r_{a} - \hat{r}_{a, n_{a,t}} , \mathcal{H}_t \right) \\
    & = \Pr_{\Theta_t}(\mathcal{H}_t) \cdot  \Pr_{\Theta_t}\left( \sum_{a\in A^*_t} w_{a,t}  - \hat{r}_{a, n_{a,t}}    \geq  \sum_{a \in A_t^*} r_{a} - \hat{r}_{a, n_{a,t}} \mid \mathcal{H}_t \right) \\
    & \overset{\rm (a)}{\geq} 0.5 \cdot 
 \Pr_{\Theta_t}\left( \sum_{a\in A^*_t} w_{a,t}  - \hat{r}_{a, n_{a,t}}    \geq  \sum_{a \in A_t^*} \sqrt{\frac{ 4\ln t}{n_{a,t} + 1}} \right)\\
    &\overset{\rm (b)}{\geq} 0.5 \cdot \Pr_{\Theta_t}\left( \sum_{a\in A^*_t} w_{a,t}  - \hat{r}_{a, n_{a,t}}\geq  \sqrt{m  \sum_{a \in A_t^*}\frac{ 4\ln t}{n_{a,t}+1}}  \right) \\
     & = 0.5 \cdot \Phi\left(-\sqrt{{4}/{\gamma}}\right),
\end{aligned}
$}
\end{equation}
where step (a) is due to (\ref{eq:prht}) and the fact that event $\mathcal{E}_t$ is true. Step (b) uses the
 Cauchy–Schwarz inequality, i.e., we have $\sum_{a \in A_t^*} \sqrt{\frac{ 4\ln t}{n_{a,t} + 1}} \le \sqrt{m \cdot \sum_{a \in A_t^*} \frac{ 4\ln t}{n_{a,t} + 1}}  $. The last equality is due to the standardization of Gaussian distribution.
\end{proof}

\subsection{Proof of Lemma~\ref{lm:sumofinverse}}\label{sec:sumofinverse}
\begin{lemma}\label{lm:sumofinverse}
 We have $\sum_{t=1}^T \sum_{a \in A_t} \sqrt{ \frac{1}{n_{a,t}+1}} \leq 2\sqrt{mNT}$.
\end{lemma}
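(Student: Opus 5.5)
The plan is to exchange the order of summation and regroup the double sum arm by arm rather than round by round. Since the increments of the pull counts are exactly the events $\{a \in A_t\}$, each arm contributes only along its own pull sequence. For a fixed base arm $a \in [N]$, each round $t$ with $a \in A_t$ increments $n_{a,t}$ by one. So across the rounds in which $a$ is played, the values $n_{a,t}$ run through $0, 1, 2, \ldots, n_{a,T+1}-1$, each exactly once. This gives
\begin{equation}
\sum_{t=1}^T \sum_{a \in A_t} \sqrt{\frac{1}{n_{a,t}+1}} = \sum_{a \in [N]} \sum_{s=0}^{n_{a,T+1}-1} \frac{1}{\sqrt{s+1}} = \sum_{a \in [N]} \sum_{k=1}^{n_{a,T+1}} \frac{1}{\sqrt{k}}.
\end{equation}

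Next, I would bound each inner sum by the integral comparison $\sum_{k=1}^{n} k^{-1/2} \le \int_0^{n} x^{-1/2}\,dx = 2\sqrt{n}$. This holds because $k^{-1/2} \le \int_{k-1}^{k} x^{-1/2}\,dx$. The total is therefore at most $2\sum_{a\in[N]} \sqrt{n_{a,T+1}}$. I would then apply the Cauchy--Schwarz inequality over the $N$ arms:
\begin{equation}
\sum_{a \in [N]} \sqrt{n_{a,T+1}} \le \sqrt{N \sum_{a \in [N]} n_{a,T+1}}.
\end{equation}

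Finally, I would use the conservation of counts: $\sum_{a} n_{a,T+1} = \sum_{t=1}^T |A_t| \le mT$, since every super arm has cardinality at most $m$. Combining the steps yields $2\sqrt{N \cdot mT} = 2\sqrt{mNT}$, as claimed. The bound is deterministic and pathwise, so it holds for every realization of $\Theta_t$ and $A_t$, including adversarial availability.

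There is no real obstacle here. The only point needing care is the first reindexing step: within a single round, arm $a$ appears at most once in $A_t$, and $n_{a,t}$ is the count at the start of the round. Together these guarantee that the values seen by arm $a$ are exactly $0,\ldots,n_{a,T+1}-1$, with neither repeats nor skips.
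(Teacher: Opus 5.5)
Your proof is correct and follows essentially the same route as the paper: regroup the double sum by arm so each arm contributes $\sum_{k=1}^{n_{a,T+1}} k^{-1/2} \le 2\sqrt{n_{a,T+1}}$ via the integral comparison, then apply Cauchy--Schwarz over the $N$ arms together with $\sum_{a} n_{a,T+1} \le mT$. Your explicit reindexing of the values $n_{a,t} \in \{0,\ldots,n_{a,T+1}-1\}$ is a slightly cleaner way of writing the paper's partition of rounds by the pull times $\tau_a(n)$.
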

\begin{proof}
Note that the LHS of the above inequality is a random variable. We provide an upper bound for this random variable.

Recall $n_{a, t}:=\sum_{\tau=1}^{t-1} \mathbf{1}\left[a \in A_\tau\right]$ is the number of times that arm $a$ has been played at the beginning of round $t$.
Let $\tau_a(n)$ denote the round for arm $a$ to be played for the $n$-th time, and thus $n_{a, \tau_{a}(n)} = n -1$.
\begin{equation}
\begin{aligned}
    &\sum_{t=1}^T \sum_{a \in A_t} \sqrt{ \frac{1}{n_{a,t}+1}} \\
    &=  \sum_{t=1}^T \sum_{a \in [N]} \sqrt{ \frac{1}{n_{a,t}+1}} \mathbf{1}[a\in A_t]\\
    & \overset{\rm (a)}{=} \sum_{a \in [N]} \sum_{n=1}^{n_{a,T+1}} \sum_{t = \tau_a(n)}^{\tau_a(n+1)-1} \sqrt{ \frac{1}{n_{a,t}+1}} \mathbf{1}[a \in A_t]  \\
    & \overset{\rm (b)}{=}   \sum_{a \in [N]} \sum_{n=1}^{n_{a,T+1}}  \sqrt{ \frac{1}{n}} \leq   \sum_{a \in [N]} \int_{0}^{n_{a,T+1}}  \sqrt{ \frac{1}{n}} dn  \\
    & = 2 \sum_{a \in [N]} \sqrt{n_{a,T+1}} \overset{\rm (c)}{\leq}  2  \sqrt{N \sum_{a \in [N]} n_{a,T+1}} \\
    & \overset{\rm (d)}{=}  2\sqrt{mNT},
\end{aligned}
\end{equation}
where step (a) partitions all $T$ rounds into multiple intervals based on the arrivals of observations from arm $a$. Step (b) uses the fact that $\sum_{t = \tau_a(n)}^{\tau_a(n+1)-1} \mathbf{1}[a \in A_t] \cdot \sqrt{ \frac{1}{n_{a,t}+1}} = \sqrt{\frac{1}{n - 1 + 1}} = \sqrt{\frac{1}{n}}$, because $n_{a,\tau_a(n)} = n - 1$ and $\mathbf{1}[a \in A_t] = 0$ for all $t \in \left\{ \tau_a(n)+1, \dotsc, \tau_a(n+1)-1\right\}$. Step (c) uses Cauchy-Schwarz inequality. Step (d) uses the fact that $\sum_{a \in [N]} n_{a,T+1}  \le mT$.
\end{proof}

\subsection{Proof of the Upper Bound of CTS-G}\label{pf:upperctsg}
\begin{proof}[Upper Bound Proof of Theorem~\ref{thm:upperCTS-G}]


Denote by $\mathcal{E}_t:= \left\{\forall{a \in [N]}:|r_a - \hat{r}_{a, n_{a,t}}|\leq \sqrt{\frac{3\ln N t}{n_{a,t}+1}}\right\}$ the high-probability event that the empirical mean reward is close to the true mean reward for arm $a$, and by $\overline{\mathcal{E}_t}$ the complementary event of $\mathcal{E}_t$. 

Let $t^\prime = \max\{\sqrt{m},4\}$. We first decompose the regret as follows:
\begin{equation}\label{eq:decompose}
\begin{aligned}
         \mathcal{R}(T) &= \sum\limits_{t=1}^{t^\prime-1} \mathbf{E} \left[  \sum\limits_{a \in A_t^*} r_{a} -   \sum\limits_{a \in A_t} r_{a} \right] \\
         &\quad+ \sum\limits_{t=t^\prime}^T \mathbf{E} \left[  \sum\limits_{a \in A_t^*} r_{a} -   \sum\limits_{a \in A_t} r_{a} \right] \\
          &\overset{\rm (a)}{\leq}   m\max\{\sqrt{m},4\} \\
          &\qquad+   \mathbf{E} \left[ \sum\limits_{t=t^\prime}^T \left(\sum\limits_{a \in A_t^*} r_{a} -   \sum\limits_{a \in A_t} r_{a}\right) \mathbf{1}[\mathcal{E}_t] \right] \\
          &\qquad+  \mathbf{E} \left[ \sum\limits_{t=t^\prime}^T \left(\sum\limits_{a \in A_t^*} r_{a} -   \sum\limits_{a \in A_t} r_{a}\right) \mathbf{1}[\overline{\mathcal{E}_t}] \right]\\
         &\overset{\rm (b)}{\leq} m\max\{\sqrt{m},4\} \\
         &\qquad+  \mathbf{E} \left[ \sum\limits_{t=t^\prime}^T \left(\sum\limits_{a \in A_t^*} r_{a} -   \sum\limits_{a \in A_t}w_{a,t} \right.\right.\\
         &\qquad\quad\left.\left.+  \sum\limits_{a \in A_t}w_{a,t}- \sum\limits_{a \in A_t} r_{a}\right) \mathbf{1}[\mathcal{E}_t] \right] + m\frac{\pi^2}{3N^2}\\
         &\leq \underbrace{\sum_{t=t^\prime}^T \mathbf{E}  \left[   \left( \sum_{a \in A_t^*} r_{a} - \sum_{a \in A_t}w_{a,t}  \right)  \right] }_{=:I_1,~\text{optimism part}} \\
         &\qquad+  \underbrace{\sum_{t=t^\prime}^T  \mathbf{E} \left[ \sum_{a \in A_t}  \left( w_{a,t}  -  r_{a}  \right) \mathbf{1}[\mathcal{E}_t]\right]}_{=:I_2,~\text{deviation part}} \\
         &\qquad\quad+ m\max\{\sqrt{m},4\} + \frac{\pi^2}{3},
\end{aligned}
\end{equation}
where step (a) is due to the fact that $ \sum_{a \in A_t^*} r_{a} - \sum_{a \in A_t} r_{a} \leq m$ by the definition of $r_a$ and $m$ and step (b) is due to Lemma~\ref{lm:concerntration}.

Now, invoking Lemma~\ref{lm:1} with proofs in Appendix~\ref{sec:lm:1}, we have term $I_1$ bounded as follows:
\begin{equation}
    I_1 \leq 8\sqrt{3\gamma}\Phi(-\sqrt{4/\gamma})^{-1}m\ln T \sqrt{NT},
\end{equation}
and $I_2$ can be bounded by using Lemma~\ref{lm:newdevi} with proofs in Appendix~\ref{sec:newdevi}:
\begin{equation}
    I_2 \leq 2m\ln T \sqrt{6\gamma NT} + 2\sqrt{6mNT\ln T}.
\end{equation}

Thus, we have that 
\begin{equation}\label{eq:thm1main}
\begin{aligned}
\mathcal{R}(T) & \leq \left(2\sqrt{6\gamma} + 8\sqrt{3\gamma}\Phi(-\sqrt{4/\gamma})^{-1}\right)m\ln T \sqrt{NT} \\
&\quad +2\sqrt{6mNT\ln T} + m\left(\max\{\sqrt{m}, 4\} + \frac{\pi^2}{3}\right). \\
\end{aligned}
\end{equation}
Using numerical optimization methods searching from $\gamma = 0.0001$ to $\gamma = 100$, we can find that when $\gamma = 6.4$, the coefficient for the first item can achieve a minimum value of $175.74$.
    
\end{proof}

\subsection{Proof of the Lower Bound of CTS-G}\label{pf:lower1}

\begin{proof}[Proof]
      We construct the following problem instance. We consider a top-$m$ problem with $N$ base arms, i.e., the only combinatorial constraint is that any  $m$  out of  $N$ base arms can be played in each round. Let the optimal super arm be $G \subset [N]$, and $|G| = m \ge 1$. We set $N \geq 400m$ and $T$ is large enough such that $T > \frac{16}{25} N\ln T$. For ease of presentation, we set exploration rate $\gamma = 1$, and the proof can be generalized to any $\gamma > 0$.

Let $\Delta := \frac{4}{5}\sqrt{\frac{N\ln T}{T}}$. For each base arm $a \in [N]$, we consider a deterministic reward setting, 
    defined as follow:
    \begin{equation}
        r_a= \begin{cases} \Delta, &  a \in G, \\ 0, &  a \in[N] \backslash G. \end{cases}
    \end{equation}
Every time if a suboptimal base arm is played,   the amount of regret is $\Delta$. Let $k_t \in \{0,1,\dotsc, m \}$ denote the  number of suboptimal arms played in each round $t$. Note that $k_t$ is random variable and the distribution for $k_t$ is determined by the history information $\mathcal{F}_{t-1}$. 
Note that the  regret suffered in  round $t$ is exactly  $k_t \Delta$. Thus, we can lower bound $\mathcal{R}_T$ by lower bounding the expected times that  subotpimal arms have been played by the end of round $T$, expressed as
    \begin{equation}
\begin{array}{lll}
\mathbf{E}\left[\mathcal{R}_T\right] &=& \Delta \mathbf{E}\left[{\sum}_{t=1}^T k_t \right].
\end{array}
    \end{equation}
    We further define $K_{t-1}:=\sum_{ s = 1}^{t-1}k_s $ as the total number of times that suboptimal arms have been played by the end of round $t-1$. 
    Now, we consider two mutually exclusive and exhaustive cases: $\exists t \in (T/2, T]: K_{t-1} > \frac{1}{2} m (t-H)$ and $\forall t \in (T/2, T]: K_{t-1} \le \frac{1}{2} m (t-H)$, where $H := \left\lceil \frac{64m\ln T}{\Delta^2}\right\rceil$. By our choice of $N$, we have that $H \leq \frac{100m T}{N} \leq T/4$.

    \

    \noindent \textbf{Case 1: $\exists t \in (T/2, T]: K_{t-1} > \frac{1}{2} m (t-H)$}. Let $t_0 \in (T/2, T] $ be the first round such that $K_{t_0} > \frac{1}{2} m (t_0 + 1-H)$.
      In this case, we lower bound  the total number of times that suboptimal base arms have been played by the end of round $t_0$. We have
  \begin{equation}
      \sum_{t=1}^T k_t  \ge \sum_{t=1}^{t_0} k_t  
     =  K_{t_0} 
      >
     \frac{1}{2} m(t_0 + 1 -H)  \ge  \frac{1}{2}  m(T/2 - T/4),
        \end{equation}
    where the last inequality uses the fact that $t_0 > T/2$ and $H \le T/4$. 
Taking an expectation at both sides gives
    \begin{equation}
        \mathbf{E}[\mathcal{R}_T] \geq \Delta \mathbf{E}[K_{t_0}]  = \Omega(m\sqrt{NT\ln T}),
    \end{equation}
   which concludes the proof. 

\
    \noindent \textbf{Case 2: $\forall t \in (T/2, T]:  K_{t-1} \leq \frac{1}{2}m(t-H)$}.   Let 
    $\tau := \Delta +\frac{\Delta}{4}$. We claim  that, with at least a constant probability $p_0 := 0.044$, there are at least $0.1 m$ optimal base arms  with learning models 
    $w_{a,t} < \tau$ and there are at least $0.1 m$ sub-optimal base arms with  learning models 
    $w_{a,t} \ge \tau$. Combining these two results gives that, with at least a constant probability $p_0^2$,  the number of played sub-optimal base arms $k_t$ is at least $0.1 m$ in round $t$. 
    Formally, let $Y_t^*$ denote the event that the number of optimal base arms with $w_{a,t} < \tau$ is at least $0.1m$, i.e., 
$\sum_{a \in G} \mathbf{1}[ w_{a,t} < \tau] \ge 0.1 m $. Similarly, let $Y_t$ denote the event that the number of sub-optimal base arms with $w_{a,t} \ge \tau$ is at least $0.1 m$, i.e., 
$\sum_{a \in [N] \setminus G} \mathbf{1}[ w_{a,t} \ge \tau] \ge 0.1 m $. We lower bound the regret as 
        \begin{equation}
\begin{array}{lll}
\mathbf{E}\left[\mathcal{R}_T\right] &=& \Delta {\sum}_{t=1}^T \mathbf{E}\left[k_t \right] \\
& \ge & \Delta {\sum}_{t=1}^T \Pr\left( Y_t^*, Y_t \right) \cdot \mathbf{E}\left[k_t \mid Y_t^*, Y_t\right]\\
& \ge & \Delta {\sum}_{t=1}^T \Pr\left( Y_t^*, Y_t \right) \cdot0.1 m\\
& = & \Delta {\sum}_{t=1}^T  \mathbf{E}[ \Pr( Y_t^*, Y_t \mid \mathcal{F}_{t-1})] \cdot 0.1m \\
& = & \Delta {\sum}_{t=1}^T  \mathbf{E}[ \Pr( Y_t^*  \mid \mathcal{F}_{t-1})  \cdot \Pr( Y_t  \mid \mathcal{F}_{t-1}) ] \cdot 0.1m,
\end{array}
    \end{equation}
where the last step uses the fact that, given $\mathcal{F}_{t-1}$, random variables $Y_t^*$ and $Y_t$ are independent. 
To complete the proof, we use our novel technical lemma stated below.    \begin{lemma}\label{lemma: temp2}
    For any $t > T/2$ such that $K_{t-1} \le 0.5m(t-H)$, 
 we have $ \Pr( Y_t^*  \mid \mathcal{F}_{t-1} )  \ge p_0$ and  $ \Pr( Y_t  \mid \mathcal{F}_{t-1} )  \ge p_0$. 
    \end{lemma}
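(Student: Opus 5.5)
Given $\mathcal{F}_{t-1}$, the samples $w_{a,t}\sim\mathcal{N}(\hat r_{a,n_{a,t}},\frac{m\ln t}{n_{a,t}+1})$ (with $\gamma=1$) are independent across arms. Rewards are deterministic, so $\hat r_{a,n}=r_a$ whenever $n\ge1$, and $\hat r_{a,0}=0$. Each of $Y_t^*$ and $Y_t$ is therefore a count of independent Bernoulli indicators exceeding $0.1m$. The plan has two parts. First, a counting step uses $K_{t-1}\le 0.5m(t-H)$ to find many arms with the right number of pulls. Second, a per-arm Gaussian anti-concentration bound from Fact~\ref{lm:gaussianineq} is combined with a Chernoff bulk bound (\ref{eq:chernoff}), or simply a Markov/Paley--Zygmund argument on the count, to get a constant probability $p_0$.

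\textbf{Event $Y_t^*$.} Every optimal arm $a\in G$ has $\hat r_{a,n}=\Delta$ once pulled, and $\Pr(w_{a,t}<\tau)=\Pr(w_{a,t}-\Delta<\Delta/4)\ge 1/2$ for every $n_{a,t}\ge1$. An unpulled optimal arm has mean $0<\tau$, so its probability is also at least $1/2$. Thus the indicators $\mathbf 1[w_{a,t}<\tau]$, $a\in G$, are independent with mean at least $1/2$. Their sum has mean at least $m/2$. Applying Chernoff with $\delta=0.8$ gives $\Pr(\sum<0.1m)\le e^{-0.16m}$, which is below $1-p_0$ once $m$ exceeds a small constant. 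For small $m$, note that $0.1m\le1$, so it suffices that some single arm satisfies $w_{a,t}<\tau$, which has probability at least $1/2$. This part needs no counting hypothesis.

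\textbf{Event $Y_t$.} Pulls by round $t-1$ total $m(t-1)$. The optimal arms receive $m(t-1)-K_{t-1}\ge m(t-1)-0.5m(t-H)$, but that bound concerns the optimal arms. For the suboptimal arms, the hypothesis directly gives total suboptimal pulls $K_{t-1}\le 0.5m(t-H)\le 0.5mT$. By pigeonhole over the $N-m\ge 399m$ suboptimal arms, at most a fraction of them can have more than $n_0:=\frac{2\cdot0.5mT}{0.5(N-m)}\approx \frac{2mT}{N}$ pulls. Hence at least half of the suboptimal arms, so at least $199m$ of them, satisfy $n_{a,t}\le 2mT/N$ (I would adjust constants here).

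For any such arm, $\hat r=0$ and the standard deviation is $\sigma_a=\sqrt{m\ln t/(n_{a,t}+1)}\ge\sqrt{N\ln(T/2)/(2T+N/m)}$, which is of order $\sqrt{N\ln T/T}$. Since $\tau=\frac54\cdot\frac45\sqrt{N\ln T/T}=\sqrt{N\ln T/T}$, the ratio $\tau/\sigma_a\le z_0$ for an absolute constant $z_0$ (about $\sqrt2$, with $t>T/2$). By (\ref{eq:anti2}), $\Pr(w_{a,t}\ge\tau)\ge q:=\frac{1}{\sqrt{2\pi}}\frac{z_0}{z_0^2+1}e^{-z_0^2/2}$, roughly $0.07$. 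The expected count among the at least $199m$ low-pull arms is then at least $199qm\gg0.1m$, and Chernoff gives $\Pr(Y_t\mid\mathcal F_{t-1})\ge 1-e^{-cm}\ge p_0$. For small $m$ one instead needs only one exceedance among at least $199m$ arms, with probability $1-(1-q)^{199}$, which is near $1$.

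\textbf{Main obstacle.} The delicate point is matching constants. The choices of $\Delta$, $\tau$, $H$, and $N\ge400m$ must make $\tau/\sigma_a$ bounded for low-pull suboptimal arms. The role of $H$ (via $t-H$) is to also guarantee, through the conservation of counts, that optimal arms are well-sampled, which is relevant only if one wants the sharper bound for $Y_t^*$. I would first verify the suboptimal-arm pigeonhole bound $n_{a,t}\lesssim mT/N$, since the whole argument hinges on $\sigma_a\gtrsim\tau$, and then tune $p_0=0.044$ at the end.
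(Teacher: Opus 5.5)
Your proposal is correct. For $Y_t$ it follows the paper's argument. The paper takes $B_t=\{a\in[N]\setminus G: n_{a,t}\le L\}$ with $L+1=\frac{16m\ln(T/2)}{25\Delta^2}\approx mT/N$. It gets $|B_t|\ge 2m$ by the same pigeonhole on $K_{t-1}$, and chooses $L$ so that the $z$-score of $\tau$ is exactly $1$, which gives per-arm probability $\ge 0.15$. Your threshold $\approx 2mT/N$ gives $z_0\approx\sqrt2$ and $q\approx 0.07$, but on far more arms, so it works equally well. Bounding $z_0$ uses the standing assumption $T>\frac{16}{25}N\ln T$ (so $N/m=o(T)$), and $T$ large (so that $\ln T/\ln(T/2)\approx 1$).

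The genuine difference is in $Y_t^*$. The paper restricts to $\bar G_t=\{a\in G: n_{a,t}\ge H\}$ and needs Lemma~\ref{lemma: optimal 2} to get $|\bar G_t|\ge m/2$, hence a mean count $\ge 0.25m$. That lemma is where the hypothesis $K_{t-1}\le 0.5m(t-H)$ and the parameter $H$ enter. Your observation is that every optimal arm's sample is centered at $\Delta$ (if pulled) or at $0$ (if not), both below $\tau$. This gives $\Pr(w_{a,t}<\tau\mid\mathcal F_{t-1})\ge 1/2$ for all $a\in G$, hence a mean count $\ge m/2$, with no counting step and no hypothesis. Your route is therefore shorter, and it shows that $H$ plays no role in this lemma: only $K_{t-1}\le 0.5mT$ is used, and only for $Y_t$.

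Finally, your small-$m$ case split for $Y_t^*$ is unnecessary, since $e^{-0.16m}\le e^{-0.16}<0.956=1-p_0$ for every $m\ge 1$.
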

With Lemma~\ref{lemma: temp2} in hand, we have
 \begin{equation}
\begin{array}{l}
\mathbf{E}\left[\mathcal{R}_T\right] \ge \Delta  \cdot T \cdot  p_0 \cdot p_0 \cdot 0.1 m
 =  \Omega \left(m\sqrt{NT \ln T} \right),
\end{array}
    \end{equation}
which concludes the proof.
\end{proof}

\begin{proof}[Proof of Lemma~\ref{lemma: temp2}]

To prove $ \Pr( Y_t^*  \mid \mathcal{F}_{t-1})  \ge p_0$, 
we let $\bar{G}_{t}:=\{a \in G: n_{a,t}\geq H\}$ denote the subset of optimal base arms that have been played at least $H$ times. Intuitively, since each optimal base arm $a \in \bar{G}_{t}$ has been observed enough, it is likely to have $w_{a,t} < \tau$.  From Lemma~\ref{lemma: optimal 2}, we have $\left| \bar{G}_{t}\right| \ge 0.5m$.
Let $X_t^*:= \sum_{a \in G} \mathbf{1}[w_{a,t} < \tau]$. Now, we construct a lower bound for $\mathbf{E}\left[X_t^* \mid \mathcal{F}_{t-1} \right]$. We have 
\begin{equation}\label{eq: temp}
    \begin{array}{lll}
         \mathbf{E}\left[X_t^* \mid \mathcal{F}_{t-1} \right] & = & \mathbf{E}\left[\sum_{a \in G} \mathbf{1}[w_{a,t} < \tau]  \mid \mathcal{F}_{t-1} \right] \\
       &  \ge & \mathbf{E}\left[\sum_{a \in \bar{G}_t} \mathbf{1}[w_{a,t} < \tau]  \mid \mathcal{F}_{t-1} \right] \\
          &  = & \sum_{a \in \bar{G}_t}\mathbf{E}\left[ \mathbf{1}[w_{a,t} < \tau]  \mid \mathcal{F}_{t-1} \right] \\
             &  = & \sum_{a \in \bar{G}_t}\Pr\left( w_{a,t} < \tau \mid \mathcal{F}_{t-1}  \right) \\
             &  \ge^{(a)} & \sum_{a \in \bar{G}_t}0.5 \\
            & \ge& 0.5m \cdot 0.5  \\
             & = & 0.25m,
    \end{array}
\end{equation}
where step (a) uses Lemma~\ref{lemma: optimal}.
With equation (\ref{eq: temp}) in hand, 
from Chernoff bound shown in (\ref{eq:chernoff}), we have 
\begin{equation}\label{eq: temp3}
\begin{array}{lll}
\Pr(Y_t^* \mid \mathcal{F}_{t-1})
&=& 1 - \Pr(X_t^* \leq (1-0.6)\cdot 0.25m\mid \mathcal{F}_{t-1})\\ 
& \geq & 1 - \Pr(X_t^* \leq (1-0.6)\mathbf{E}[X_t^* \mid \mathcal{F}_{t-1}] \mid \mathcal{F}_{t-1}) \\
&\geq & 1- \exp\left(-\frac{(0.6)^2\cdot \mathbf{E}[X_t^* \mid \mathcal{F}_{t-1}]}{2}\right)\\
&\geq & 1- \exp\left(-0.045\right)\\
& \geq & 0.044 =p_0,
\end{array}\end{equation}
where the second last inequality uses the fact that $\mathbf{E}[X_t^* \mid \mathcal{F}_{t-1}] \ge 0.25m \ge 0.25$.\footnote{An alternative way  to lower bound $\Pr( Y_t^*  \mid \mathcal{F}_{t-1})$ is to use  Cantelli's inequality shown in (\ref{eq:cantelli}). Let $\sigma_*^2 \le m^2$ denote the variance of $X_t^*$ given $\mathcal{F}_{t-1}$. Then, we have
\begin{equation}
    \begin{array}{ll}
      &   \Pr( Y_t^*  \mid \mathcal{F}_{t-1}) \\ 
             = & \Pr \left(X_t^*- \mathbf{E} \left[ X_t^* \mid \mathcal{F}_{t-1}\right] \ge 0.1 m - \mathbf{E} \left[ X_t^* \mid \mathcal{F}_{t-1}\right] \mid \mathcal{F}_{t-1}\right) \\
             \ge & \Pr \left(X_t^*- \mathbf{E} \left[ X_t^* \mid \mathcal{F}_{t-1}\right] \ge 0.1 m - 0.25 m \mid \mathcal{F}_{t-1}\right) \\
             \ge & 1- \frac{1}{1 + (0.15m)^2/\sigma_*^2} \\
             \ge & 1- \frac{1}{1 + 0.15^2}\\
             \ge & 0.022.
    \end{array}
\end{equation}Note that (\ref{eq: temp 222}) cannot reuse this argument as $m^2$ is not an upper bound of the variance of $X_t$ given $\mathcal{F}_{t-1}$.}


To prove $ \Pr( Y_t  \mid \mathcal{F}_{t-1} )  \ge p_0$, we let $L:=  \frac{16 m\ln (T/2)}{25\Delta^2} -1 $. Then, let set $B_{t}:=\{a \in [N]\setminus G: n_{a,t} \leq L\}$. Intuitively, since each sub-optimal base arm $a \in B_{t}$ has not been observed enough, over-exploration happens and $w_{a,t}$ has a chance to be greater than or equal to $\tau$. From Lemma~\ref{lemma: sub-optimal 2}, we have $\left|B_{t} \right| \ge 2m$.
Let $X_t:= \sum_{a \in [N] \setminus G} \mathbf{1}[w_{a,t} \ge \tau]$. Now, we construct a lower bound for $\mathbf{E}\left[X_t \mid \mathcal{F}_{t-1} \right]$. We have
\begin{equation}\label{eq: temp4}
    \begin{array}{lll}
         \mathbf{E}\left[X_t \mid \mathcal{F}_{t-1} \right] & = & \mathbf{E}\left[\sum_{a \in [N] \setminus G} \mathbf{1}[w_{a,t} \ge \tau]  \mid \mathcal{F}_{t-1} \right] \\
       &  \ge & \mathbf{E}\left[\sum_{a \in B_t} \mathbf{1}[w_{a,t} \ge \tau]  \mid \mathcal{F}_{t-1} \right] \\
          &  = & \sum_{a \in B_t}\mathbf{E}\left[ \mathbf{1}[w_{a,t} \ge \tau]  \mid \mathcal{F}_{t-1} \right] \\
             &  = & \sum_{a \in B_t}\Pr\left( w_{a,t} \ge \tau \mid \mathcal{F}_{t-1}  \right) \\
             &  \ge^{(a)} & \sum_{a \in B_t} 0.15  \\
            & \ge  & 2m \cdot 0.15 \\
            & > & 0.25m,
        
    \end{array}
\end{equation}
where step (a) uses Lemma~\ref{lemma: temp 111}.
Then, we reuse (\ref{eq: temp3}) and have
\begin{equation}\label{eq: temp 222}
    \Pr\left\{Y_t \mid \mathcal{F}_{t-1} \right\} \ge 0.044 = p_0.
\end{equation}\end{proof}




\begin{lemma}\label{lemma: optimal 2}
   For any $t > T/2$ such that $K_{t-1} \le 0.5m(t-H)$, we have $|\bar{G}_{t}| \geq \frac{m}{2}$.
\end{lemma}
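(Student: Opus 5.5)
The argument is a counting (conservation-of-pulls) argument. It uses only the hypothesis $K_{t-1}\le 0.5m(t-H)$ and the facts that every round plays exactly $m$ arms and that $|G|=m$.

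First, I record the total number of pulls. In the top-$m$ instance every round plays exactly $m$ base arms, so by the beginning of round $t$ we have
\[
\sum_{a\in[N]} n_{a,t} = m(t-1).
\]
The suboptimal arms account for exactly $K_{t-1}$ of these pulls. Hence the optimal arms satisfy
\[
\sum_{a\in G} n_{a,t} = m(t-1)-K_{t-1} \ge m(t-1) - 0.5m(t-H) = 0.5m(t+H) - m.
\]

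Second, I upper bound the same quantity through the split of $G$ into $\bar G_t$ and $G\setminus\bar G_t$. Every arm in $\bar G_t$ has $n_{a,t}\le t-1$, because an arm is pulled at most once per round. Every arm in $G\setminus \bar G_t$ has $n_{a,t}\le H-1$ by the definition of $\bar G_t$. Writing $g:=|\bar G_t|$ and using $|G|=m$, this gives
\[
\sum_{a\in G} n_{a,t} \le g(t-1) + (m-g)(H-1).
\]

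Third, I combine the two bounds:
\[
g(t-1)+(m-g)(H-1)\ge 0.5m(t+H)-m.
\]
Rearranging yields $g(t-H)\ge 0.5m(t+H)-m-m(H-1) = 0.5m(t-H)$. Since $t>T/2\ge 2H > H$ (using $H\le T/4$), we may divide by $t-H>0$ and obtain $g\ge m/2$, which is the claim.

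The only delicate point is the bookkeeping of the off-by-one terms: $n_{a,t}$ counts pulls before round $t$, and the hypothesis is on $K_{t-1}$. I would check that the $-m$ and the $-(H-1)$ terms cancel exactly as above rather than leaving a slack of order $m$. If they did not cancel, I would instead use the cruder bound $n_{a,t}\le H$ on $G\setminus\bar G_t$. That leaves a deficit of order $m/(t-H)$, which can be absorbed by rounding up, since $g$ is an integer and $m/(t-H)<1$ for large $T$.
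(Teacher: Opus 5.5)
Your proof is correct and follows the paper's argument: the paper also lower bounds $\sum_{a\in G} n_{a,t}=m(t-1)-K_{t-1}\ge m(t-1)-\frac{1}{2}m(t-H)$, upper bounds the same sum by $|\bar G_t|(t-1)+(m-|\bar G_t|)(H-1)$, and combines the two. Your algebra shows the off-by-one terms cancel exactly to $g(t-H)\ge \frac{1}{2}m(t-H)$, so the fallback in your last paragraph is not needed, and your use of $H\le T/4$ to justify dividing by $t-H>0$ makes explicit a step the paper leaves implicit.
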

\begin{proof}
Let   $G_{t}:= \{a \in G: n_{a,t} < H\}$ denote the subset of the optimal base arms that have been played less than $H$ times.
 On one hand, we have 
\begin{equation}\label{eq:goodarmsH}
\begin{aligned}
    \sum_{a \in G} n_{a,t} &\leq |G_{t}| \cdot (H-1) + |\bar{G}_{t}|\cdot(t-1) \\
                           & = (m-|\bar{G}_{t}|)\cdot (H-1) + |\bar{G}_{t}|\cdot (t-1).
\end{aligned}
\end{equation}
On the other hand, we  have
\begin{equation}\label{eq:goodarms}
\begin{aligned}
\sum_{a \in G} n_{a,t} = m(t-1)-K_{t-1} \geq m(t-1) - \frac{1}{2}m(t-H).
\end{aligned} 
\end{equation}
Combining (\ref{eq:goodarms}) and (\ref{eq:goodarmsH}) gives $|\bar{G}_{t}| \geq \frac{m}{2}$.
\end{proof}

\begin{lemma}\label{lemma: optimal}
    For any optimal base arm $a\in \bar{G}_{t}$, we have $\Pr( w_{a,t} < \tau \mid \mathcal{F}_{t-1} ) \geq 0.5$.
\end{lemma}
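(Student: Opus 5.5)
The setting is the lower-bound instance: rewards are deterministic, $r_a=\Delta$ for $a\in G$, and $\gamma=1$. Because rewards are deterministic, every optimal arm $a$ with $n_{a,t}\ge 1$ has empirical mean exactly $\hat r_{a,n_{a,t}}=\Delta$. Given $\mathcal{F}_{t-1}$, the sample is therefore
\[
w_{a,t}\sim\mathcal{N}\!\left(\Delta,\ \frac{m\ln t}{n_{a,t}+1}\right).
\]
This Gaussian is symmetric about $\Delta<\tau$, so $\Pr(w_{a,t}<\Delta\mid\mathcal{F}_{t-1})=1/2$, and hence $\Pr(w_{a,t}<\tau\mid\mathcal{F}_{t-1})\ge 1/2$. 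The whole proof is this observation plus checking that its hypotheses hold.

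The steps, in order:

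\textbf{Step 1 (arm has been played).} For $a\in\bar G_t$ we have $n_{a,t}\ge H\ge 1$, since $H=\lceil 64m\ln T/\Delta^2\rceil\ge 1$. So at least one reward of $a$ has been observed, and the initialization $\hat r_{a,0}=0$ plays no role.

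\textbf{Step 2 (empirical mean).} Each observed reward of $a\in G$ equals $\Delta$, so $\hat r_{a,n_{a,t}}=\Delta$ exactly.

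\textbf{Step 3 (conditional law).} Conditionally on $\mathcal{F}_{t-1}$, both $n_{a,t}$ and $\hat r_{a,n_{a,t}}$ are fixed. Per Algorithm~\ref{alg:multiseed}, $w_{a,t}$ is then Gaussian with mean $\Delta$ and positive variance.

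\textbf{Step 4 (conclude).} Write
\[
\Pr(w_{a,t}<\tau\mid\mathcal{F}_{t-1})\ \ge\ \Pr(w_{a,t}\le\Delta\mid\mathcal{F}_{t-1})=\Phi(0)=\tfrac12 .
\]
The inequality holds because $\tau=\tfrac54\Delta>\Delta$, and the equality is symmetry of the Gaussian about its mean.

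Optionally, the bound can be sharpened using $n_{a,t}\ge H$ to make the variance small. The standard deviation is at most $\sqrt{m\ln T/H}\le \Delta/8$, while the margin is $\tau-\Delta=\Delta/4$, i.e.\ two standard deviations. This gives a probability of at least $\Phi(2)\approx 0.977$. The stated $0.5$ does not need this refinement.

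No step is a real obstacle. The only point requiring care is Step 1: one must confirm $n_{a,t}\ge 1$ so that the empirical mean is genuinely $\Delta$ and not the default $0$. If $n_{a,t}$ were $0$, the mean would be $0$, which is still below $\tau$, so the probability would be even larger. Either way the bound of $1/2$ holds.
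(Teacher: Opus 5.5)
Your proposal is correct and matches the paper's proof: the paper also writes $w_{a,t}=\hat{r}_{a,n_{a,t}}+Z_{a,t}$, uses $\hat{r}_{a,n_{a,t}}=\Delta$ under the deterministic rewards, and bounds $\Pr(Z_{a,t}<\Delta/4\mid\mathcal{F}_{t-1})\ge\Pr(Z_{a,t}\le 0\mid\mathcal{F}_{t-1})=0.5$ by Gaussian symmetry. Your explicit check that $n_{a,t}\ge H\ge 1$, so that the empirical mean really is $\Delta$ rather than the initial $0$, is a detail the paper leaves implicit; your optional sharpening to $\Phi(2)$ is valid but not needed.
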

\begin{proof}
   For any optimal base arm $a\in \bar{G}_{t}$, we let $Z_{a,t} \sim \mathcal{N}(0, \frac{ m \ln t}{n_{a,t} + 1})$. Then, we have
\begin{equation}
\begin{aligned}
   & \Pr(w_{a,t} < \tau  \mid \mathcal{F}_{t-1}  ) \\
    =& \Pr\left(w_{a,t} < \Delta + \frac{\Delta}{4}    \mid  \mathcal{F}_{t-1}         \right)  \\
      =& \Pr\left(\hat{r}_{a,n_{a,t}} + Z_{a,t} < \Delta + \frac{\Delta}{4}    \mid  \mathcal{F}_{t-1}      \right)  \\
       =& \Pr\left(\Delta + Z_{a,t} < \Delta + \frac{\Delta}{4}    \mid  \mathcal{F}_{t-1}       \right)  \\
     =&  \Pr\left(Z_{a,t} < \frac{\Delta}{8} \mid \mathcal{F}_{t-1}\right)\\
     \ge& \Pr(Z_{a,t} \leq 0 \mid \mathcal{F}_{t-1} ) \\
     = &0.5,
\end{aligned}
\end{equation}
which concludes the proof.
\end{proof}

  \begin{lemma}\label{lemma: sub-optimal 2}
  For any $t > T/2$ such that $K_{t-1} \le 0.5m(t-H)$,
      we have $\left|B_t \right| \ge 2m$.
  \end{lemma}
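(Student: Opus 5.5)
The plan is a pigeonhole argument on the suboptimal pulls, in the same spirit as Lemma~\ref{lemma: optimal 2} but applied to the arms in $[N]\setminus G$. Every arm $a \in ([N]\setminus G)\setminus B_t$ has $n_{a,t} > L$. Since $\sum_{a\in[N]\setminus G} n_{a,t} = K_{t-1}$, this gives
\begin{equation}
\bigl|([N]\setminus G)\setminus B_t\bigr| \cdot L < K_{t-1} \le \tfrac{1}{2}m(t-H) \le \tfrac{1}{2}mT,
\end{equation}
using the Case~2 hypothesis and $t\le T$. Hence $|B_t| \ge (N-m) - \frac{mT}{2L}$, and it remains to show that $\frac{mT}{2L} \le N-3m$.

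\textbf{Step 1: evaluate $L$.} Plugging in $\Delta^2 = \frac{16}{25}\frac{N\ln T}{T}$ gives
\begin{equation}
L+1 = \frac{16m\ln(T/2)}{25\Delta^2} = \frac{mT}{N}\cdot\frac{\ln(T/2)}{\ln T}.
\end{equation}
Therefore
\begin{equation}
\frac{mT}{2L} = \frac{N}{2}\cdot\frac{\ln T}{\ln(T/2)}\cdot\Bigl(1-\tfrac{1}{L+1}\Bigr)^{-1}.
\end{equation}

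\textbf{Step 2: control the two correction factors.} The factor $\ln T/\ln(T/2)$ tends to $1$, and it is at most a constant such as $1.5$ once $T$ is moderately large (e.g.\ $T\ge 8$ gives $\le 1.5$). Next, $L+1 \ge \frac{mT}{2N}$ for such $T$. Since $T > \frac{16}{25}N\ln T$, this is at least a constant exceeding $1$ that grows with $T$, so $(1-\frac{1}{L+1})^{-1}$ is bounded by a constant close to $1$. Together these give $\frac{mT}{2L} \le c\,N$ with $c$ noticeably below $1$ (aim for $c\le 0.9$).

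\textbf{Step 3: conclude.} Because $N \ge 400m$, we have $N-3m \ge 0.9925N \ge cN$. This yields $|B_t| \ge N-m-cN \ge 2m$.

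The main obstacle is Step~2. The raw count $\frac{mT}{2L}$ is about $N/2$, which leaves little slack, so the logarithmic ratio and the $-1$ in $L$ must be handled carefully. This requires an explicit \emph{large enough $T$} assumption, namely $T \ge T_0$ for an absolute constant $T_0$ together with $T>\frac{16}{25}N\ln T$. I would state that assumption and track the constants so that $c \le 0.99$ suffices.
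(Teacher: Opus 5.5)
Your proposal is correct and follows the paper's argument: at most $K_{t-1}/L\le \frac{mT}{2L}$ suboptimal arms can have $n_{a,t}>L$, so $|B_t|\ge N-m-\frac{mT}{2L}$, and $N\ge 400m$ finishes it. Your explicit formula $L+1=\frac{mT\ln(T/2)}{N\ln T}$ supplies the constant check that the paper only asserts, and no separate $T\ge T_0$ is needed: $N\ge 400$ and $T>\frac{16}{25}N\ln T$ already force $T\gtrsim 1938$, whence $L+1>\frac{16}{25}m\ln(T/2)\ge 4.4$ and $\frac{\ln T}{\ln(T/2)}\le 1.11$, giving $\frac{mT}{2L}\le 0.72N$ (so there is roughly a factor-two slack, not ``little slack'').
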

  \begin{proof} Let  $\bar{B}_{t} =  \{a \in [N]\setminus G: n_{a,t} > L \}$. 
      From the fact that  $K_{t-1} = \sum_{a \in [N]\setminus G} n_{a,t} \leq \frac{1}{2} m (t-H)$, we have 
\begin{equation}
    \frac{1}{2} m (t-H)\geq K_{t-1} \geq {\sum}_{a \in \bar{B}_{t}} n_{a,t} \geq |\bar{B}_{t}|\cdot L.
\end{equation}
From  the fact that $|B_{t}| = N-m - |\bar{B}_{t}|$, we have
\begin{equation}\label{eq:numB}
\begin{aligned}
    |B_{t}| & \geq  N-m - \frac{\frac{1}{2} m (t-H)}{L} \\
    &\geq N-m - \frac{0.5 m T}{\frac{16 m \ln (T/2)}{25\Delta^2}-1} \\
    &\ge  2m,
\end{aligned}
\end{equation}
where the last inequality is due to our choice of $N\geq 400m$.
  \end{proof}

  \begin{lemma}\label{lemma: temp 111}
      For any sub-optimal base arm $a \in B_t$, we have $\Pr(w_{a,t} \geq \tau  {\mid \mathcal{F}_{t-1}}) \ge 0.15$.
  \end{lemma}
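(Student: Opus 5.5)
Since rewards are deterministic, every observation of a sub-optimal arm $a$ equals $0$, so $\hat{r}_{a,n_{a,t}} = 0$ (this also holds at $n_{a,t}=0$ by the initialization). With $\gamma = 1$, the sample is therefore exactly Gaussian given $\mathcal{F}_{t-1}$:
\[
w_{a,t} \mid \mathcal{F}_{t-1} \sim \mathcal{N}\left(0, \tfrac{m\ln t}{n_{a,t}+1}\right).
\]
Writing $\sigma_{a,t} := \sqrt{m\ln t/(n_{a,t}+1)}$, the target probability is
\[
\Pr(w_{a,t} \ge \tau \mid \mathcal{F}_{t-1}) = \Pr(Z \ge \tau/\sigma_{a,t}) = 1-\Phi(\tau/\sigma_{a,t}),
\]
with $Z \sim \mathcal{N}(0,1)$. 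The plan is to show that $\tau/\sigma_{a,t}$ is at most a fixed constant $z_0$ with $1-\Phi(z_0) \ge 0.15$. Numerically, $z_0 \approx 1.036$ suffices, and $z_0 = 1$ gives $\approx 0.1587$.

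Next, use membership in $B_t$. There $n_{a,t} \le L$, so $n_{a,t}+1 \le L+1 = \frac{16 m \ln(T/2)}{25\Delta^2}$. Hence
\[
\sigma_{a,t}^2 \ge \frac{m\ln t}{L+1} = \frac{25\Delta^2 \ln t}{16 \ln(T/2)}.
\]
Because $t > T/2$ (this is the range in which Lemma~\ref{lemma: temp2} invokes the current statement), $\ln t \ge \ln(T/2)$, which gives $\sigma_{a,t}^2 \ge \frac{25}{16}\Delta^2$, i.e., $\sigma_{a,t} \ge \frac{5}{4}\Delta$. Since $\tau = \frac{5}{4}\Delta$, it follows that $\tau/\sigma_{a,t} \le 1$. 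Therefore
\[
\Pr(w_{a,t}\ge\tau \mid \mathcal{F}_{t-1}) \ge 1-\Phi(1) \approx 0.1587 \ge 0.15.
\]
If one prefers to rely only on the paper's facts, apply \eqref{eq:anti2} with $z=1$. It gives only $\frac{1}{2\sqrt{2\pi}}e^{-1/2} \approx 0.12$, so I would instead quote the standard Gaussian table value $1-\Phi(1) \approx 0.1587$ directly.

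The main obstacle is bookkeeping rather than depth. First, we must make sure the hypothesis $t > T/2$ is in force, since it is needed for $\ln t \ge \ln(T/2)$, and the constant $L$ was chosen precisely to make $\sigma_{a,t} \ge \tau$. Second, we must confirm that $\hat{r}_{a,n_{a,t}}=0$ exactly for sub-optimal arms, including unplayed ones. If general $\gamma$ were desired, the same computation yields $\tau/\sigma_{a,t} \le 1/\sqrt{\gamma}$ and a $\gamma$-dependent constant.
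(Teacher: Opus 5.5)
Your proof is correct and follows the paper's argument: with $\hat{r}_{a,n_{a,t}}=0$, you standardize the Gaussian and use $n_{a,t}+1\le L+1$ together with $\ln t\ge \ln(T/2)$ to reduce to $\Pr(\mathcal{N}(0,1)\ge 1)\approx 0.1587\ge 0.15$. You also make explicit two points the paper leaves implicit, namely the $t>T/2$ hypothesis and why $\hat{r}_{a,n_{a,t}}=0$ holds even for unplayed arms.
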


  \begin{proof}
      Let $Z_{a,t} \sim \mathcal{N}(0, \frac{ m \ln t}{n_{a,t} + 1})$. For each $a \in B_{t}$, we have  \begin{equation}
\begin{aligned}
    \Pr(w_{a,t} \geq \tau  {\mid \mathcal{F}_{t-1}}) &= \Pr\left(\hat{r}_{a,n_{a,t}} + Z_{a,t} \geq \frac{5\Delta}{4}  {\mid \mathcal{F}_{t-1}}\right)  \\
    & = \Pr\left(Z_{a,t} \geq \frac{5\Delta}{4} \mid \mathcal{F}_{t-1} \right)\\   &= \Pr\left(\mathcal{N}(0,1) \geq \sqrt{\frac{n_{a,t}+1}{ m \ln t}} \cdot \frac{5\Delta}{4} \mid \mathcal{F}_{t-1}\right)    \\
    & \geq \Pr\left(\mathcal{N}(0,1) \geq \sqrt{\frac{L+1}{ m \ln (T/2)}} \cdot \frac{5\Delta}{4} \mid \mathcal{F}_{t-1}\right) \\
    & \geq \Pr\left(\mathcal{N}(0,1) \geq 1\right) \\
    & \ge 0.15,
\end{aligned}
\end{equation}
where we use $\mathcal{N}(0,1)$ to denote a random variable distributed according to the standard normal distribution.
  \end{proof}

\section{Proofs for Theorem~\ref{thm:upperCL-SG}}\label{pfs4CL-G}

\subsection{Proof of Lemma~\ref{lm:suminverse2}}\label{sec:anticon}
\begin{lemma}\label{lm:suminverse2}
    In each round $t > \max\{\sqrt{m},4\}$, given any $\Theta_t$,  we have that for CL-SG:
    \begin{equation}
        \frac{1}{ \Pr_{\Theta_t}\left(  \sum\limits_{a\in A^*_t} \bar{r}_{a,t}     \geq  \sum\limits_{a \in A_t^*} r_{a} \right)} \leq 2 \Phi\left(-\sqrt{{4}/{\gamma}}\right)^{-1}.
    \end{equation}
\end{lemma}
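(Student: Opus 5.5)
We follow the structure of the proof of Lemma~\ref{lm:anti1}, replacing the Cauchy--Schwarz step with a direct normalization that the single shared seed allows. Fix $\Theta_t$, which fixes $A_t^*$. The key event is
\[
\mathcal{H}_t':=\Big\{\forall a\in A_t^*:\ r_a-\hat{r}_{a,n_{a,t}}\le \sqrt{\tfrac{4\ln t}{n_{a,t}+1}}\Big\}.
\]
Exactly as in (\ref{eq:prht}), we bound $\Pr_{\Theta_t}(\mathcal{H}_t')$ from below. We take a union bound over $a\in A_t^*$ and over the possible counts $s_a\in\{0,\dots,t-1\}$. For $s_a=0$ the term vanishes, since $r_a\le 1<\sqrt{4\ln t}$ when $t\ge 4$. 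For $s_a\ge1$ we use $s_a+1\le 2s_a$ together with Hoeffding's inequality. This gives $\Pr_{\Theta_t}(\mathcal{H}_t')\ge 1-2mt\cdot t^{-4}\ge 1-2/t\ge 1/2$, where we use $t\ge\sqrt m$ (so $m\le t^2$) and $t\ge4$.

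Next, on $\mathcal{H}_t'$, summing the per-arm bounds gives $\sum_{a\in A_t^*}(r_a-\hat r_{a,n_{a,t}})\le S_t:=\sum_{a\in A_t^*}\sqrt{\frac{4\ln t}{n_{a,t}+1}}$. Write $\bar r_{a,t}-\hat r_{a,n_{a,t}}=w_t\sqrt{\gamma\ln t/(n_{a,t}+1)}$. Then
\[
\sum_{a\in A_t^*}\big(\bar r_{a,t}-\hat r_{a,n_{a,t}}\big)=w_t\sqrt{\gamma/4}\,S_t .
\]
So on $\mathcal{H}_t'$, the event $\{\sum_{a\in A_t^*}\bar r_{a,t}\ge\sum_{a\in A_t^*} r_a\}$ contains $\{w_t\sqrt{\gamma/4}\,S_t\ge S_t\}=\{w_t\ge\sqrt{4/\gamma}\}$. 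This is where dividing by $S_t>0$ replaces Cauchy--Schwarz and removes the factor $m$. If $A_t^*$ is empty, the probability is $1$ and the claim is trivial.

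Finally, $w_t\sim\mathcal N(0,1)$ is drawn fresh in round $t$ and is independent of $\mathcal F_{t-1}$, hence independent of $\mathcal{H}_t'$. Therefore
\[
\Pr_{\Theta_t}\Big(\sum_{a\in A_t^*}\bar r_{a,t}\ge\sum_{a\in A_t^*}r_a\Big)\ge \Pr_{\Theta_t}(\mathcal{H}_t')\cdot\Pr\big(w_t\ge\sqrt{4/\gamma}\big)\ge \tfrac12\Phi\big(-\sqrt{4/\gamma}\big).
\]
Inverting this bound gives the claim.

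The main obstacle is the first step. The counts $n_{a,t}$ are random and depend on the history, so the concentration has to be made uniform over all possible count values, which is why we use the union bound over $s_a$. We also have to check that the condition $t\ge\max\{\sqrt m,4\}$ absorbs the resulting factor $mt$.
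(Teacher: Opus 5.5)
Your proposal is correct and follows essentially the same route as the paper's proof. The paper also uses a union bound over $a\in A_t^*$ and $s_a\in\{0,\dots,t-1\}$ with Hoeffding to get $\Pr_{\Theta_t}(\mathcal{H}_t)\ge 1/2$, then divides by $\sum_{a\in A_t^*}\sqrt{4\ln t/(n_{a,t}+1)}$ so that the target event reduces to $\{w_t\ge\sqrt{4/\gamma}\}$. Your one-sided $\mathcal{H}_t'$, the empty-$A_t^*$ remark, and the explicit appeal to independence of $w_t$ from $\mathcal{F}_{t-1}$ are minor tidy-ups; the last replaces the paper's loose reference to $\mathcal{E}_t$ in its step (a).
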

\begin{proof}[Proof of Lemma~\ref{lm:suminverse2}]
Given $\Theta_t$, $A_t^*$ is determined. Define  $\mathcal{H}_t:= \left\{\forall{a \in A_t^*}:|r_a - \hat{r}_{a, n_{a,t}}| \leq \sqrt{\frac{4\ln t}{n_{a,t}+1}}\right\}$. We have
\begin{equation}\label{eq:prht2}
    \begin{aligned}
        \Pr_{\Theta_t}\left( \mathcal{H}_t  \right) & \ge 1-  \sum\limits_{a \in A_t^*} \sum\limits_{s_a = 0}^{t-1} \Pr_{\Theta_t}\left( |r_a - \hat{r}_{a, s_a}| \ge \sqrt{\frac{4\ln t}{s_a+1}}  \right)  \\
     &  =  1 - \sum\limits_{a \in A_t^*} \sum\limits_{s_a = 1}^{t-1} \Pr_{\Theta_t}\left( |r_a - \hat{r}_{a, s_a}| \ge \sqrt{\frac{4\ln t}{s_a+1}}  \right)  \\
       &  \ge   1 - \sum\limits_{a \in A_t^*} \sum\limits_{s_a = 1}^{t-1} \Pr_{\Theta_t}\left( |r_a - \hat{r}_{a, s_a}| \ge \sqrt{\frac{4\ln t}{2s_a}}  \right)  \\
       & \ge  1-  m t \cdot 2 \cdot  e^{-2  \cdot s_a \cdot  4 \ln t/(2s_a)} \\
       &=  1 - \frac{2mt}{t^4} \\
       &\geq  1- \frac{2}{t} \ge  0.5 ,
    \end{aligned}
\end{equation}
where the last two inequalities are due to that $t > \max\{\sqrt{m},4\}$. Then, we have
\begin{equation}
\begin{aligned}
    & \Pr_{\Theta_t}\left(  \sum_{a\in A^*_t} \Bar{r}_{a,t}     \geq  \sum_{a \in A_t^*} r_{a}  \right) 
    \ge \Pr_{\Theta_t}\left(  \sum_{a\in A^*_t} \Bar{r}_{a,t}     \geq  \sum_{a \in A_t^*} r_{a} , \mathcal{H}_t \right) \\
    & =    \Pr_{\Theta_t}\left( \sum_{a\in A^*_t} \Bar{r}_{a,t}  - \hat{r}_{a, n_{a,t}}    \geq  \sum_{a \in A_t^*} r_{a} - \hat{r}_{a, n_{a,t}} , \mathcal{H}_t \right) \\
    & = \Pr_{\Theta_t}(\mathcal{H}_t) \cdot  \Pr_{\Theta_t}\left( \sum_{a\in A^*_t} \Bar{r}_{a,t}  - \hat{r}_{a, n_{a,t}}    \geq  \sum_{a \in A_t^*} r_{a} - \hat{r}_{a, n_{a,t}} \mid \mathcal{H}_t \right) \\
    & \overset{\rm (a)}{\geq} 0.5 \cdot 
 \Pr_{\Theta_t}\left( \sum_{a\in A^*_t} w_t \sqrt{\frac{\gamma \ln t}{n_{a,t}+1}}   \geq  \sum_{a \in A_t^*} \sqrt{\frac{ 4\ln t}{n_{a,t} + 1}} \right)\\
    &= 0.5 \cdot \Pr_{\Theta_t}\left( w_t    \geq  \sqrt{4/\gamma} \right) \\
     & = 0.5 \cdot \Phi\left(-\sqrt{{4}/{\gamma}}\right) ,
\end{aligned}
\end{equation}
where step (a) is due to (\ref{eq:prht2}) and the fact that event $\mathcal{E}_t$ is true.
\end{proof}

\subsection{Proof of Lemma~\ref{lm:opt4CL-SG}}
\begin{lemma}\label{lm:opt4CL-SG}
    The optimism part in CL-SG satisfies that
    \begin{equation}
        I_1 \leq 8\sqrt{2\gamma}\Phi(-\sqrt{4/\gamma})^{-1} \ln T \sqrt{mNT}.
    \end{equation}
\end{lemma}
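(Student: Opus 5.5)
We follow the three-step template of the proof of Lemma~\ref{lm:1}, with the per-arm samples $w_{a,t}$ replaced by the perturbed indices $\bar{r}_{a,t}$ of CL-SG, and the ghost copies replaced by $\tilde{r}_{a,t}=\hat{r}_{a,n_{a,t}}+\tilde{w}_t\sqrt{\gamma\ln t/(n_{a,t}+1)}$, where $\tilde{w}_t$ is an independent copy of the shared seed $w_t$.

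\textbf{Step 1 (optimism via anti-concentration).} Fix a round $t\ge\max\{\sqrt{m},4\}$ and $\Theta_t$, and set $\alpha:=\mathbf{E}_{\Theta_t}\big[\sum_{a\in A_t^*}r_a-\sum_{a\in A_t}\bar{r}_{a,t}\big]$. If $\alpha\le 0$ there is nothing to show. Otherwise, apply Markov's inequality to the nonnegative variable $\big(\sum_{a\in A_t}\bar r_{a,t}-\mathbf{E}_{\Theta_t}[\sum_{a\in A_t}\bar r_{a,t}]\big)^+$. This gives $\alpha\le \mathbf{E}_{\Theta_t}[(\cdot)^+]/\Pr_{\Theta_t}\big(\sum_{a\in A_t}\bar r_{a,t}\ge\sum_{a\in A_t^*}r_a\big)$. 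Since $A_t$ maximizes $\sum_{a\in A}\bar r_{a,t}$ over $\Theta_t$, the denominator is at least $\Pr_{\Theta_t}\big(\sum_{a\in A_t^*}\bar r_{a,t}\ge\sum_{a\in A_t^*}r_a\big)$. Lemma~\ref{lm:suminverse2} bounds the inverse of this probability by $2\Phi(-\sqrt{4/\gamma})^{-1}$. Note that, unlike the CTS-G case, this step costs no factor $m$ in the variance.

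\textbf{Step 2 (ghost sample).} By exchangeability of $w_t$ and $\tilde w_t$,
\[
\mathbf{E}_{\Theta_t}\Big[\sum_{a\in A_t}\bar r_{a,t}\Big]=\mathbf{E}_{\Theta_t}\Big[\max_{A\in\Theta_t}\sum_{a\in A}\tilde r_{a,t}\Big]\ge \mathbf{E}_{\Theta_t}\Big[\sum_{a\in A_t}\tilde r_{a,t}\,\Big|\,A_t,w_t\Big].
\]
Substituting this and applying Jensen's inequality exactly as in Step 2 of Lemma~\ref{lm:1} yields the bound $\mathbf{E}_{\Theta_t}\big[\big|\sum_{a\in A_t}(\bar r_{a,t}-\tilde r_{a,t})\big|\big]$. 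Here the difference factorizes:
\[
\sum_{a\in A_t}(\bar r_{a,t}-\tilde r_{a,t})=(w_t-\tilde w_t)\sum_{a\in A_t}\sqrt{\gamma\ln t/(n_{a,t}+1)}.
\]

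\textbf{Step 3 (summation).} Summing over $t$, we bound the total by
\[
\mathbf{E}\Big[\max_{t\le T}|w_t-\tilde w_t|\cdot\sqrt{\gamma\ln T}\sum_{t}\sum_{a\in A_t}(n_{a,t}+1)^{-1/2}\Big].
\]
This uses Hölder's inequality with the maximum taken over only $T$ Gaussians of variance $2$. Fact~\ref{lm:maximal4subgaussian} gives $\mathbf{E}\max_t|w_t-\tilde w_t|\le\sqrt{2}\sqrt{2\ln 2T}\le 2\sqrt{2\ln T}$ for $T\ge 2$. Lemma~\ref{lm:sumofinverse} bounds the double sum by $2\sqrt{mNT}$; since this bound holds pathwise, it can be pulled out of the expectation. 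The product is $4\sqrt{2\gamma}\ln T\sqrt{mNT}$. Multiplying by the factor $2\Phi(-\sqrt{4/\gamma})^{-1}$ from Step 1 gives the claimed $8\sqrt{2\gamma}\Phi(-\sqrt{4/\gamma})^{-1}\ln T\sqrt{mNT}$.

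\textbf{Main obstacle.} The main subtlety is Step 2. The ghost sample must share the same $\hat r$ and $n_{a,t}$, which holds conditional on $\mathcal{F}_{t-1}$ and $\Theta_t$, so the conditioning should be on both. We also need the shared-seed factorization, which is what gives the $\sqrt{m}$ improvement over Lemma~\ref{lm:1} rather than an extra $\sqrt{m}$ from the variance.
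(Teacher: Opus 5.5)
Your proposal is correct and follows the paper's proof essentially step for step: Markov's inequality plus Lemma~\ref{lm:suminverse2} for the gap conversion, then the independent-seed ghost argument with Jensen's inequality, then H\"older's inequality with Fact~\ref{lm:maximal4subgaussian} over only $T$ variance-$2$ Gaussians and the pathwise bound of Lemma~\ref{lm:sumofinverse}, giving the same constant $8\sqrt{2\gamma}\,\Phi(-\sqrt{4/\gamma})^{-1}$. Your point that Step~2 must condition on $\mathcal{F}_{t-1}$ as well as $\Theta_t$ is one the paper leaves implicit, but it does not mesh with Step~1 as written, because Lemma~\ref{lm:suminverse2} is a bound averaged over the history; a fully rigorous version (of your argument and the paper's alike) would condition on $\mathcal{F}_{t-1}$ throughout, apply the anti-concentration bound only on the history event $\mathcal{H}_t$, and charge the complement of $\mathcal{H}_t$ as a lower-order term, since there the conditional optimism term is at most $m$.
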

\begin{proof}
\noindent\textbf{Step 1: Gap Conversion.} We first relate the gap between the optimal reward and the estimated reward via an anti-concentration argument. Specifically, we convert the difference $\sum_{a \in A_t^*} r_a - \sum_{a \in A_t} \bar{r}_{a,t}$ to the deviation of $\sum_{a \in A_t} \bar{r}_{a,t}$ from its expectation:
\begin{equation}\label{eq:pf41111}
\resizebox{1\hsize}{!}{$
    \mathbf{E}_{\Theta_t} \left[\sum\limits_{a \in A_t^*} r_{a} -  \sum\limits_{a \in A_t} \bar{r}_{a,t}\right] \leq   2\Phi(-\sqrt{4/\gamma})^{-1}  \cdot \mathbf{E}_{\Theta_t}\left[  \left(  \sum\limits_{a\in A_t} \bar{r}_{a,t} - \mathbf{E}_{\Theta_t}\left[\sum\limits_{a\in A_t} \bar{r}_{a,t} \right]  \right)^+ \right].
    $}
    \end{equation}
    This step relies on a reverse application of Markov's inequality, along with a careful characterization of the anti-concentration behavior of correlated Gaussian variables (see Lemma~\ref{lm:suminverse2} in Appendix~\ref{sec:anticon}).
    
\noindent\textbf{Step 2: Ghost-Sample Analysis.} To remove the activation function in the above bound, we introduce a ghost sample $\tilde{w}_t \sim \mathcal{N}(0,1)$ that is independent of $w_t$ and define $\tilde{r}_{a,t} = \hat{r}_{a,t} + \tilde{w}_t \cdot \sqrt{\frac{\gamma \ln t}{n_{a,t}+1}}$ for all $a$. We then show that:
    \begin{equation}
    \resizebox{1\hsize}{!}{$
\begin{aligned}
    \mathbf{E}_{\Theta_t}\left[  \left(  \sum\limits_{a\in A_t} \bar{r}_{a,t} - \mathbf{E}_{\Theta_t}\left[\sum\limits_{a\in A_t} \bar{r}_{a,t} \right]  \right)^+  \right] 
    &\leq \mathbf{E}_{\Theta_t} \left[ \left| \sum_{a\in A_t} \bar{r}_{a,t} - \sum_{a\in A_t} \tilde{r}_{a,t} \right| \right] .
\end{aligned}
$}
\end{equation}
\noindent\textbf{Step 3: Aggregation Over Time.} Finally, summing over $T$ rounds and applying Hölder's inequality, we obtain:
\begin{equation}
    \mathbf{E} \left[ \sum_{t=1}^T\left| \sum_{a\in A_t} \bar{r}_{a,t} - \sum_{a\in A_t} \tilde{r}_{a,t} \right| \right] \leq 4 \ln T \sqrt{2\gamma m N T}.
\end{equation}
Combining the above steps completes the proof.

Now we give the details of the three steps as follows.

\noindent\textbf{Step 1 proof.} If $\mathbf{E}_{\Theta_t} \left[\sum\limits_{a \in A_t^*} r_{a} -  \sum\limits_{a \in A_t} \bar{r}_{a,t}\right] \leq 0$, the proof is trivial as the RHS in (\ref{eq:pf41111}) is non-negative.

Recall $(\cdot)^{+} := \max \left\{ \cdot, 0 \right\}$.
For the case where $\alpha := \mathbf{E}_{\Theta_t} \left[\sum\limits_{a \in A_t^*} r_{a} -  \sum\limits_{a \in A_t} \bar{r}_{a,t}\right] > 0$, we use
Markov's inequality and have 
\begin{equation}
\begin{aligned}
 &\mathbf{E}_{\Theta_t}\left[  \left(  \sum\limits_{a\in A_t} \bar{r}_{a,t} - \mathbf{E}_{\Theta_t}\left[\sum\limits_{a\in A_t} \bar{r}_{a,t} \right]  \right)^+ \right]\\ &\geq  \alpha \cdot \Pr_{\Theta_t} \left( \left( \sum\limits_{a\in A_t} \bar{r}_{a,t} - \mathbf{E}_{\Theta_t}\left[\sum\limits_{a\in A_t} \bar{r}_{a,t} \right] \right)^+    \geq \alpha  \right) \\
   & \ge \alpha \cdot \Pr_{\Theta_t} \left(  \sum\limits_{a\in A_t} \bar{r}_{a,t} - \mathbf{E}_{\Theta_t}\left[\sum\limits_{a\in A_t} \bar{r}_{a,t} \right]     \geq \alpha  \right),
\end{aligned}
\end{equation}
which together with Lemma~\ref{lm:suminverse2} gives
\begin{equation}
\resizebox{1\hsize}{!}{$
\begin{aligned}
   \alpha 
   &= \mathbf{E}_{\Theta_t} \left[\sum\limits_{a \in A_t^*} r_{a} -  \sum\limits_{a \in A_t} \bar{r}_{a,t}\right]  \\
   &\leq  \frac{\mathbf{E}_{\Theta_t}\left[  \left(  \sum\limits_{a\in A_t} \bar{r}_{a,t} - \mathbf{E}_{\Theta_t}\left[\sum\limits_{a\in A_t} \bar{r}_{a,t} \right]  \right)^+ \right]}{\Pr_{\Theta_t}\left(  \sum\limits_{a\in A_t} \bar{r}_{a,t} - \mathbf{E}_{\Theta_t}\left[\sum\limits_{a\in A_t} \bar{r}_{a,t}  \right]    \geq \mathbf{E}_{\Theta_t} \left[\sum\limits_{a \in A_t^*} r_{a} -  \sum\limits_{a \in A_t}\bar{r}_{a,t}\right]  \right)}\\
   &\le  \frac{\mathbf{E}_{\Theta_t}\left[  \left(  \sum\limits_{a\in A_t} \bar{r}_{a,t} - \mathbf{E}_{\Theta_t}\left[\sum\limits_{a\in A_t} \bar{r}_{a,t} \right]  \right)^+ \right]}{\Pr_{\Theta_t}\left(  \sum\limits_{a\in A^*_t} \bar{r}_{a,t}     \geq  \sum\limits_{a \in A_t^*} r_{a} \right)}\\
     & =  2\Phi(-\sqrt{4/\gamma})^{-1}  \cdot \mathbf{E}_{\Theta_t}\left[  \left(  \sum\limits_{a\in A_t} \bar{r}_{a,t} - \mathbf{E}_{\Theta_t}\left[\sum\limits_{a\in A_t} \bar{r}_{a,t} \right]  \right)^+ \right].
\end{aligned}
$}
\end{equation}

\noindent\textbf{Step 2 proof.}
Since $w_t$ and $\Tilde{w}_t$ are i.i.d.,  we have $ \mathbf{E}_{\Theta_t}\left[\sum\limits_{a\in A_t} \bar{r}_{a,t} \right] = \mathbf{E}_{\Theta_t}\left[\max_{A \in \Theta_t}\sum\limits_{a\in A} \bar{r}_{a,t} \right] = \mathbf{E}_{\Theta_t}\left[\max_{A \in \Theta_t}\sum\limits_{a\in A} \tilde{r}_{a,t} \right] \ge \mathbf{E}_{\Theta_t}\left[\sum\limits_{a\in A_t} \tilde{r}_{a,t} \mid A_t \right] = \mathbf{E}_{\Theta_t}\left[\sum\limits_{a\in A_t} \tilde{r}_{a,t} \mid A_t , w_t\right]$. Then, we have

\begin{equation}
\begin{aligned}
    &\mathbf{E}_{\Theta_t}\left[  \left(  \sum\limits_{a\in A_t} \bar{r}_{a,t} - \mathbf{E}_{\Theta_t}\left[\sum\limits_{a\in A_t} \bar{r}_{a,t} \right]  \right)^+  \right]  \\
    &\le  \mathbf{E}_{\Theta_t}\left[  \left(  \sum\limits_{a\in A_t} \bar{r}_{a,t} - \mathbf{E}_{\Theta_t}\left[\sum\limits_{a\in A_t} \tilde{r}_{a,t} \mid A_t \right]  \right)^+  \right]\\
    & =  \mathbf{E}_{\Theta_t}\left[  \left(  \sum\limits_{a\in A_t} \bar{r}_{a,t} - \mathbf{E}_{\Theta_t}\left[\sum\limits_{a\in A_t} \tilde{r}_{a,t} \mid A_t, w_t \right]  \right)^+  \right]\\
       & =  \mathbf{E}_{\Theta_t}\left[  \left(  \mathbf{E}_{\Theta_t}\left[\left(\sum\limits_{a\in A_t} \bar{r}_{a,t} - \sum\limits_{a\in A_t} \tilde{r}_{a,t}\right) \mid A_t, w_t \right]  \right)^+  \right]\\
         & \le \mathbf{E}_{\Theta_t}\left[  \left|  \mathbf{E}_{\Theta_t}\left[\left(\sum\limits_{a\in A_t} \bar{r}_{a,t} - \sum\limits_{a\in A_t} \tilde{r}_{a,t}\right) \mid A_t, w_t \right]  \right|  \right]\\
    &\leq  \mathbf{E}_{\Theta_t}\left[ \mathbf{E}_{\Theta_t} \left[\left|\sum_{a\in A_t} \bar{r}_{a,t} - \sum_{a\in A_t} \tilde{r}_{a,t}     \right| \mid A_t, w_t \right]\right]\\
    &\leq  \mathbf{E}_{\Theta_t} \left[ \left| \sum_{a\in A_t} \bar{r}_{a,t} - \sum_{a\in A_t} \tilde{r}_{a,t} \right|  \right].\\
\end{aligned}
\end{equation}

\noindent\textbf{Step 3 proof.}
By Hölder's inequality,  we have that
\begin{equation}
\begin{aligned}
    &\mathbf{E}\left[ \sum_{t=1}^T \left| \sum_{a\in A_t} \bar{r}_{a,t} - \sum_{a\in A_t} \tilde{r}_{a,t} \right| \right] \\
    &\leq \mathbf{E}\left[\sum_{t=1}^T  |w_t - \tilde{w}_t|\sum_{a \in A_t}\sqrt{\frac{\gamma \ln t}{n_{a,t}+1}} \right]\\
    &\leq \mathbf{E}\left[\max_{t \in [T]} |w_t - \tilde{w}_t| \sum_{t=1}^T \sum_{a\in A_t} \sqrt{\frac{\gamma \ln t}{n_{a,t}+1}}\right]\\
    &\overset{\rm (a)}{\leq} \mathbf{E}\left[\max_{t \in [T]} |w_t - \tilde{w}_t|\right] \cdot 2\sqrt{\gamma mNT\ln T}\\
    & \overset{\rm (b)}{\leq} 2\sqrt{\ln 2 T} \cdot 2\sqrt{\gamma mNT\ln T}\leq 4\ln T \sqrt{2\gamma mNT},
\end{aligned}
\end{equation}
where step (a) is due to Lemma~\ref{lm:sumofinverse}, and step (b) is due to Fact~\ref{lm:maximal4subgaussian} and $w_t - \tilde{w}_t$ is a Gaussian variable with variance $2$.
\end{proof}

\subsection{Proof of Lemma~\ref{lm:devisation2}}\label{sec:pfd2}

\begin{lemma}\label{lm:devisation2}
    In CL-SG, the regret of the  deviation part is $$
    \resizebox{1\hsize}{!}{$
    \mathbf{E} \left[ \sum\limits_{t=1}^T \left( \sum\limits_{a \in A_t} \bar{r}_{a,t} - \sum\limits_{a \in A_t} r_{a} \right) \mathbf{1}[\mathcal{E}_t] \right]   \leq  4\ln T\sqrt{\gamma mNT} + 2\sqrt{6mNT\ln T}.
    $}$$
\end{lemma}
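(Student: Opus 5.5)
I would mirror the proof of Lemma~\ref{lm:newdevi} for CTS-G, replacing the per-arm Gaussian perturbations with the single shared seed $w_t$. The first step splits the per-round deviation by inserting the empirical means:
\begin{equation}
\sum_{a\in A_t}\bar r_{a,t}-\sum_{a\in A_t} r_a=\sum_{a\in A_t}\left(\bar r_{a,t}-\hat r_{a,n_{a,t}}\right)+\sum_{a\in A_t}\left(\hat r_{a,n_{a,t}}-r_a\right).
\end{equation}
This gives two sums over $t$, each multiplied by $\mathbf{1}[\mathcal{E}_t]$.

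The second sum is handled exactly as in (\ref{eq:lm12222}). On $\mathcal{E}_t$ each term is at most $\sqrt{3\ln(Nt)/(n_{a,t}+1)}\le\sqrt{6\ln T/(n_{a,t}+1)}$, using $N\le T$. Lemma~\ref{lm:sumofinverse} then bounds the sum over $t$ and $a\in A_t$ by $\sqrt{6\ln T}\cdot 2\sqrt{mNT}=2\sqrt{6mNT\ln T}$. This reproduces the second term of the claimed bound.

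For the first sum, I would use the definition of CL-SG: $\bar r_{a,t}-\hat r_{a,n_{a,t}}=w_t\sqrt{\gamma\ln t/(n_{a,t}+1)}$. The term for round $t$ therefore factors as
\begin{equation}
w_t\sum_{a\in A_t}\sqrt{\frac{\gamma \ln t}{n_{a,t}+1}}\,\mathbf{1}[\mathcal{E}_t]\le |w_t|\sum_{a\in A_t}\sqrt{\frac{\gamma \ln t}{n_{a,t}+1}}.
\end{equation}
Taking absolute values removes both the indicator and any concern about the dependence between $w_t$ and $A_t$; I keep it deliberately, since $A_t$ depends on $w_t$.

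Next I apply Hölder's inequality across rounds, pulling out $\max_{t\in[T]}|w_t|$. The remaining deterministic-type sum is at most $\sqrt{\gamma\ln T}\cdot 2\sqrt{mNT}$ by Lemma~\ref{lm:sumofinverse}.

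Finally, Fact~\ref{lm:maximal4subgaussian} applied to the $T$ standard Gaussians $w_1,\dots,w_T$ gives $\mathbf{E}[\max_t|w_t|]\le\sqrt{2\ln(2T)}\le 2\sqrt{\ln T}$ for $T\ge 2$. Combining,
\begin{equation}
2\sqrt{\ln T}\cdot 2\sqrt{\gamma mNT\ln T}=4\ln T\sqrt{\gamma mNT},
\end{equation}
which is the first term of the claimed bound.

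The main obstacle is the correlation between $w_t$ and the chosen $A_t$, which prevents simply writing $\mathbf{E}[w_t]=0$. The Hölder/maximal-inequality step resolves this at the cost of a $\sqrt{\ln T}$ factor. The key gain over CTS-G is that only $T$ Gaussians enter the maximum rather than $NT$, and the variance carries no extra factor of $m$. As a result, Lemma~\ref{lm:sumofinverse} contributes $\sqrt{mNT}$ rather than $m\sqrt{NT}$. The remaining work is routine constant tracking, in particular checking that $\sqrt{2\ln 2T}\le 2\sqrt{\ln T}$.
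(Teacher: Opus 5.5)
Your proposal is correct and follows essentially the same route as the paper: you split off the empirical means, bound the estimation-error part on $\mathcal{E}_t$ via $\ln(Nt)\le 2\ln T$ and Lemma~\ref{lm:sumofinverse}, and handle the perturbation part by pulling out $\max_{t}|w_t|$ (H\"older), applying Lemma~\ref{lm:sumofinverse} pathwise, and using the Gaussian maximal inequality. Your explicit passage to $|w_t|$, which disposes of both the indicator and the dependence of $A_t$ on $w_t$, and your bound $\sqrt{2\ln 2T}\le 2\sqrt{\ln T}$ for $T\ge 2$ are stated more carefully than in the paper, which writes $2\sqrt{T}$ at that step.
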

\begin{proof}[Proof of Lemma~\ref{lm:devisation2}]

Recall that $\Bar{r}_{a,t} = \hat{r}_{a,n_{a,t}} + w_t \sqrt{\frac{\gamma \ln t}{n_{a,t}+1}}$. When $\mathcal{E}_t$ happens, we have that
\begin{equation}\label{eq:devia2}
\resizebox{1.0\hsize}{!}{$
\begin{aligned}
    &\mathbf{E}\left[\sum_{t=1}^T \left( \sum_{a\in A_t} \bar{r}_{a,t} - \sum_{a\in A_t} {r}_{a} \right) \mathbf{1}[\mathcal{E}_t] \right]  \\
    &= \mathbf{E} \left[ \sum_{t=1}^T\sum_{a\in A_t} \left(  \hat{r}_{a,n_{a,t}} + w_t \sqrt{\frac{\gamma \ln t}{n_{a,t}+1}} - \hat{r}_{a,n_{a,t}} + \sqrt{\frac{3\ln Nt}{n_{a,t}+1}}\right)\mathbf{1}[\mathcal{E}_t]  \right] \\
    & \leq {\sqrt{\gamma \ln T}  \mathbf{E}\left[\sum_{t=1}^T w_t \left(\sum_{a\in A_t} \sqrt{\frac{1}{n_{a,t}+ 1}} \right) \right]}  + \sqrt{6\ln T}\mathbf{E}\left[\sum_{t=1}^T\sum_{a \in A_t} \sqrt{\frac{1}{n_{a,t}+1}}\right],
\end{aligned}
$}
\end{equation}
where the last inequality is due to that $N\leq T$.
Regarding the first item in RHS of~(\ref{eq:devia2}), we can apply Hölder's inequality to have that

\begin{equation}
\begin{aligned}
    &\mathbf{E}\left[\sum_{t=1}^T w_t \left(\sum_{a\in A_t} \sqrt{\frac{1}{n_{a,t}+ 1}} \right) \right] \\
    &\leq  \mathbf{E}\left[\max_{1\leq t\leq T} {|w_t|} \cdot \left|\sum_{t=1}^T\sum_{a \in A_t}\sqrt{\frac{1}{n_{a,t}+1}}\right| \right]\\
    &\leq \mathbf{E}\left[\max_{1\leq t\leq T} |w_t| \cdot 2\sqrt{mNT} \right] \leq 4\sqrt{mNT\ln T},
\end{aligned}
\end{equation}
where the second inequality is due to Lemma~\ref{lm:sumofinverse}, and the last inequality is due to the maximal inequality~(Fact~\ref{lm:maximal4subgaussian}) for Gaussian variables such that $\mathbf{E}\left[\max_{1\leq t\leq T} |w_t|\right] \leq \sqrt{2\ln 2T} \leq 2\sqrt{T}$.

Regarding the second term in RHS of~(\ref{eq:devia2}), we can invoke Lemma~\ref{lm:sumofinverse} again to give a bound of $2\sqrt{6mNT\ln T}$.

\end{proof}

\subsection{Proof of the Upper Bound of CL-SG}
\begin{proof}
Let $t^\prime = \max\{\sqrt{m},4\}$. The regret of CL-SG can be decomposed in steps similar to those in (\ref{eq:decompose}) by
\begin{equation}
\begin{aligned}
        \mathcal{R}(T) &\leq   O(mt^\prime) + \underbrace{\sum_{t=t^\prime}^T  \mathbf{E} \left[  \sum_{a \in A_t^*} r_{a} -    \sum_{a \in A_t}  \bar{r}_{a,t}    \right]  }_{=:I_1,~\text{optimism term}} \\ & +  \underbrace{\sum_{t=t^\prime}^T  \mathbf{E} \left[ \mathbf{E}_{\Theta_t} \left[ \sum_{a \in A_t}  \left(  \bar{r}_{a,t}  -  r_{a}  \right) \mathbf{1}[\mathcal{E}_t]\right] \right]}_{=:I_2,~\text{deviation term}} .    
\end{aligned}
\end{equation}
Invoking Lemmas~\ref{lm:opt4CL-SG} and \ref{lm:devisation2}, we can bound $I_1$ and $I_2$ respectively, and the regret is therefore bounded by
\begin{equation}
\begin{aligned}
    \mathcal{R}(T)  \leq & \ \left(4\sqrt{\gamma} + 8\sqrt{2\gamma}\Phi(-\sqrt{4/\gamma})^{-1}\right) \ln T\sqrt{mNT} \\ & + 2\sqrt{6m NT\ln T} + O(mt').
\end{aligned}
\end{equation}
Since $\gamma > 0$ is a constant, we can numerically tune it to minimize the coefficient of the leading term. Through grid search over $\gamma \in [0.0001, 100]$, the optimal value is found to be $\gamma = 4.57$, yielding a minimized coefficient of approximately $144.43$.

\end{proof}

\subsection{Proof of the Lower Bound of CL-SG}\label{thm2:lower}
\begin{proof}[Lower Bound Proof in Theorem~\ref{thm:upperCL-SG}]
The main challenge in the analysis stems from the fact that a single  Gaussian random seed is shared across all base arms for doing exploration. This will  induce dependencies among all the elements in the decision set, i.e., all available super arms. For ease of presentation, we set exploration rate $\gamma = 1$, and the proof can be generalized to any $\gamma > 0$.

We still consider a top-$m$ problem, i.e., the only combinatorial constraint is that any  $m$  out of  $N$ base arms can be played in each round. We set the number of base arms $N = 2 m$. Let set $G \subset [N]$ with size $|G| = m$ be the optimal super arm.  

For all sufficiently large $T$, we define $\Delta := \sqrt{\frac{N\ln T}{10^4mT}}$.
We consider a deterministic reward setting, where the mean reward of each base arm $a \in [N]$ is set as follows:
    \begin{equation}
        r_a= \begin{cases} \Delta, &  a \in G, \\ 0, &  a \in[N] \backslash G.\end{cases}
    \end{equation}
 Let $k_t$ be the total number of sub-optimal base arms played in round $t$. The regret $\mathbf{E}\left[\mathcal{R}_T\right]$ by the end of round $T$ is lower bounded by
\begin{equation}
\begin{array}{l}
\mathbf{E}\left[\mathcal{R}_T\right] = \Delta \sum_{t=1}^T  \mathbf{E}\left[k_t\right]. 
    \end{array}
\end{equation}
Let $G^*_t := \left\{a \in G: n_{a,t} > t-1 -c_1  T \right\}$, where $c_1 = 10^{-6}$, be the set of optimal base arms that have been sufficiently observed by the end of round $t-1$. Let $\bar{G}_t^* = G \setminus G^*_t$.  Define event $B_t^* := \{ |G^*_t| \ge 0.9995m \}$. Now, we lower bound $\mathbf{E}\left[k_t   \right]$ by considering two  mutually exclusive and exhaustive cases: $\exists t \in [T]: B^{*}_{t}$ is false and $\forall t \in [T]: B^{*}_{t}$ is true.

\

  \noindent    \textbf{Case 1: $\exists t \in [T]: B^{*}_{t}$ is false}. Let $t_0$ be the first  round such that $B^{*}_{t_0}$ is false.   We have the total number of times of playing sub-optimal base arms in $[N] \setminus G$ by the end of round $t_0-1$ is  
\begin{equation}
\begin{array}{l}
{\sum}_{s=1}^{t_0-1} k_s = {\sum}_{b \in [N] \setminus G} n_{b,t_0} 
= m \cdot (t_0-1) -  {\sum}_{a \in G} n_{a,t_0}. 
 \end{array}
\end{equation}
Now,  we  upper bound the total number of times of playing optimal base arms in $G$ by the end of round $t_0-1$. We have
  \begin{equation}
  \begin{array}{ll}
   & {\sum}_{a \in G}  n_{a,t_0} \\
    = &   {\sum}_{a \in G_{t_0}^*}  n_{a,t_0} +  {\sum}_{a \in \bar{G}_{t_0}^*} n_{a,t_0} \\
    \le & (t_0-1) \cdot \left|G_{t_0}^* \right| + (m-\left|G_{t_0}^* \right|) \cdot (t_0- 1-c_1  T)  \\
     = & m(t_0-1) - mc_1 T+ \left|G_{t_0}^* \right| c_1 T \\
     \le & m(t_0-1) - 0.0005m  c_1 T.
       \end{array}
  \end{equation}
  Thus, we have the total number of times of playing sub-optimal base arms by the end of round $t_0-1$ is  lower bounded by
\begin{equation}
\begin{array}{lll}
{\sum}_{s=1}^{t_0-1} k_s  
&= & m \cdot (t_0-1) -  {\sum}_{a \in G} n_{a,t_0} \\
  &\ge& m \cdot (t_0-1)  -   (m \cdot (t_0-1) - 0.0005mc_1T) \\
  & \ge &0.0005mc_1T.
 \end{array}
\end{equation}
Note that
 $0.0005mc_1T \Delta =  \Omega(\sqrt{mNT\ln T})$ lower bounds  $\mathcal{R}_T$. Adding an expectation at both side concludes the proof. 

\

  \noindent  \textbf{Case 2: $\forall t \in [T]: B^{*}_{t}$ is true}. Let $\alpha = \frac{255}{256}+ c_1$. Let $0 < l_1 < l_2 $ be two universal constants that will be tuned later. We lower bound $\mathbf{E}\left[\mathcal{R}_T\right]$  as
\begin{equation}
\begin{array}{ll}
& \mathbf{E}\left[\mathcal{R}_T\right] = \Delta \sum_{t=1}^T  \mathbf{E}\left[k_t\right] \\

 \ge &  \Delta \sum_{t=\alpha  T + 1 }^T \Pr \left( w_t \in [l_1,l_2]\right) \cdot \mathbf{E}\left[k_t \mid w_t \in [l_1, l_2] \right].
    \end{array}
\end{equation}
Let $p_0:= \Pr \left( w_t \in [l_1,l_2]\right) $ be  a universal constant, where $w_t \sim \mathcal{N}(0,1)$. To complete the proof, the remaining thing is to  lower bound $\mathbf{E}\left[k_t \mid w_t \in [l_1, l_2] \right]$ when $t \ge \alpha T +1$. For any such $t$, we let  $G_t := \left\{b \in [N] \setminus G: n_{b,t} \le t-1 -c_2  T \right\}$, where $c_2 = \frac{15}{16}$, be the subset of  sub-optimal base arms that have not been observed sufficiently by the end of round $t-1$.

Given $w_t$ is distributed in the interval $[l_1,l_2]$, any optimal base arm  $a \in G_t^*$ has
\begin{equation}
    \begin{array}{lll}
 \bar{r}_{a,t}  &= & \hat{r}_{a,t} + w_t \sqrt{\frac{ \ln t}{n_{a,t}+1}} \\
&\le  & \Delta + l_2 \sqrt{\frac{\ln T}{t-1-c_1 T}
} \\
&\le & \Delta + l_2 \sqrt{\frac{\ln T}{\alpha T-c_1T}} \\
&\le & \Delta + l_2 \sqrt{\frac{1}{\alpha-c_1}} 
\cdot \Delta \sqrt{\frac{10^4m}{N}}\\
&= & \Delta + l_2 \sqrt{\frac{256}{255}} \Delta \sqrt{\frac{10^4m}{N}} \\
&<  &  \Delta + 100 l_2 \Delta.
    \end{array}
\end{equation}
Similarly, any sub-optimal base arm $b \in G_t$ has
\begin{equation}
    \begin{array}{lll}
 \bar{r}_{b,t} & = & \hat{r}_{b,t} + w_t \sqrt{\frac{ \ln t}{n_{b,t}+1}} \\
&\ge  & 0 + l_1 \sqrt{\frac{\ln (\alpha  T)}{t-1-c_2 T + 1}
} \\
&\ge & l_1 \sqrt{\frac{\ln (\alpha  T)}{ T-c_2T}} \\
&\ge & l_1 \sqrt{\frac{0.75 \ln  T}{ T-c_2T}} \\
&= & l_1 \sqrt{\frac{0.75}{1-c_2}}\Delta \sqrt{\frac{10^4m}{N}} \\
&= & l_1 \sqrt{16 \cdot 0.75} \Delta \sqrt{\frac{10^4m}{N}}\\
&= & 100\sqrt{6}l_1 \Delta. 
    \end{array}
\end{equation} 
Tuning $l_1 = \frac{1}{2\sqrt{2}}$ and $l_2 = 0.5$ tells us that  $\bar{r}_{a,t}$  for all $a \in G_t^*$ will be  smaller than $\bar{r}_{b,t}$  for all $b \in G_t$. 
Therefore, given $w_t \in [l_1, l_2]$, there are at least $\min \left\{\left|G_t^* \right|,\left|G_t \right|  \right\}$ sub-optimal base arms will be played in each round $t$, i.e.,   $k_t  \ge \min \left\{\left|G_t^* \right|,\left|G_t \right|  \right\}$. Now, we lower bound $\left|G_t \right|$. One one hand, we have
 \begin{equation}
     \begin{array}{lll}
          {\sum}_{b \in [N] \setminus G}  n_{b,t}  
        &= &  {\sum}_{b \in  G_t}  n_{b,t} + {\sum}_{b \in  \bar{G}_t}  n_{b,t} \\
        &\ge & 0 + (N-m - \left| G_t\right|) \cdot (t-1- c_2  T)\\
      & = &m(t-1- c_2  T) -   \left| G_t\right|  \cdot (t-1- c_2  T).
     \end{array}
 \end{equation}
 On the other hand, we have
\begin{equation}
\begin{array}{lll}
 \sum\limits_{b \in [N] \setminus G}  n_{b,t}  &=& m \cdot (t-1) - \sum\limits_{a \in G} n_{a,t}\\
 & \le & m \cdot (t-1) - 0.9995m\cdot (t - 1-c_1  T) \\
 & = & 0.0005 m(t-1) + 0.9995c_1 m T.
    \end{array}
\end{equation}
From  above two, we have
\begin{equation}
    \begin{array}{ll}
         & m\cdot (t-1- c_2  T) -   \left| G_t\right|  \cdot (t-1- c_2  T) \\
         \le & 0.0005 m(t-1) + 0.9995c_1 m T,
    \end{array}
\end{equation}
which gives
 \begin{equation}
 \begin{array}{lll}
 \left| G_t\right| & \ge &\frac{m\cdot (t-1- c_2  T) - 0.0005 m(t-1) - 0.9995c_1 m T}{(t-1- c_2  T)}\\
& \ge & \frac{0.9995m\alpha T - c_2 m T - 0.9995c_1 mT}{(1-c_2)T} \\
& \ge &m \cdot  \frac{0.9995 \cdot \frac{255}{256} - \frac{15}{16}}{1- \frac{15}{16}} \\
& > & 0.92m.
  \end{array}
 \end{equation}
By plugging in   $k_t  \ge \min \left\{\left|G_t^* \right|,\left|G_t \right|  \right\} \ge 0.92m$, we have 
\begin{equation}
\begin{array}{ll}
& \mathbf{E}\left[\mathcal{R}_T\right] = \Delta \sum_{t=1}^T  \mathbf{E}\left[k_t\right] \\

 \ge &  \Delta \sum_{t=\alpha  T + 1 }^T \Pr \left( w_t \in [l_1,l_2]\right) \cdot \mathbf{E}\left[k_t \mid w_t \in [l_1, l_2] \right] \\
 \ge & \Delta (1-\alpha) T \cdot p_0 \cdot 0.92 m \\
 = & \Omega(\sqrt{mNT\ln T}),
    \end{array}
\end{equation}
which concludes the proof.

\end{proof}
\end{document}